\newif\ifmydraft
\newif\ifarXiv
  \newcommand{\draftcolor}{purple}
  \newcommand{\draftcolorpage}{black!20}
  \newcommand{\draftcolor}{black}
  \newcommand{\draftcolorpage}{white}
\ifmydraft\usepackage{showlabels}\else\fi
  \newcommand{\jl}[1]{\color{blue}#1\color{black}}
  \newcommand{\jl}[1]{}
\newcommand{\tp}{^\top}
\newcommand{\R}{\textcolor{\draftcolor}{\mathbb{R}}}
\newcommand*{\deq}{\ensuremath{\mathrel{\rlap{%
\raisebox{0.3ex}{$\m@th\cdot$}}%
\raisebox{-0.3ex}{$\m@th\cdot$}}=}}
\DeclareMathOperator*{\argmin}{arg\,min}
\newcommand{\zero}{\textcolor{\draftcolor}{\boldsymbol{0}}}
\newcommand{\abs}[1]{\lvert#1\rvert}
\newcommand{\absB}[1]{\bigl\lvert#1\bigr\rvert}
\newcommand{\absBBB}[1]{\biggl\lvert#1\biggr\rvert}
\newcommand{\card}[1]{\#\{#1\}}
\newcommand{\norm}[1]{|\!|#1|\!|}
\newcommand{\normM}[1]{|\!|\!|#1|\!|\!|}
\newcommand{\normMB}[1]{\big|\!\big|\!\big|#1\big|\!\big|\!\big|}
\newcommand{\normsupM}[1]{\normM{#1}_\infty}
\newcommand{\normone}[1]{\norm{#1}_1}
\newcommand{\normoneM}[1]{\normM{#1}_1}
\newcommand{\normoneMB}[1]{\normMB{#1}_1}
\newcommand{\normqM}[1]{\normM{#1}_q}
\newcommand{\normqMB}[1]{\normMB{#1}_q}
\newcommand{\normtwos}[1]{\norm{#1}_2^2}
\newcommand{\nbrlayers}{\textcolor{\draftcolor}{\ensuremath{l}}}
\newcommand{\nbrsamples}{\textcolor{\draftcolor}{\ensuremath{n}}}
\newcommand{\nbrinput}{\textcolor{\draftcolor}{\ensuremath{d}}}
\newcommand{\nbroutput}{\textcolor{\draftcolor}{\ensuremath{m}}}
\newcommand{\nbrwidthmin}{\textcolor{\draftcolor}{\ensuremath{\underline{w}}}}
\newcommand{\nbrparameter}{\textcolor{\draftcolor}{\ensuremath{p}}}
\newcommand{\nbrparameterj}{\textcolor{\draftcolor}{\ensuremath{\nbrparameter^j}}}
\newcommand{\nbrparameterjj}{\textcolor{\draftcolor}{\ensuremath{\nbrparameter^{j+1}}}}
\newcommand{\nbrparameterl}{\textcolor{\draftcolor}{\ensuremath{\nbrparameter^{\nbrlayers}}}}
\newcommand{\nbrparameterll}{\textcolor{\draftcolor}{\ensuremath{\nbrparameter^{\nbrlayers+1}}}}
\newcommand{\parameterz}{\textcolor{\draftcolor}{\ensuremath{\parameterE^{0}}}}
\newcommand{\activation}{\textcolor{\draftcolor}{\ensuremath{\mathfrak{f}}}}
\newcommand{\activationind}{\textcolor{\draftcolor}{\ensuremath{\underline{\mathfrak{f}}}}}
\newcommand{\activationindj}{\textcolor{\draftcolor}{\ensuremath{\mathfrak{\activationind}^j}}}
\newcommand{\activationindjF}[1]{\textcolor{\draftcolor}{\ensuremath{\activationindj[#1]}}}
\newcommand{\vectorin}{\textcolor{\draftcolor}{\ensuremath{\boldsymbol{x}}}}
\newcommand{\vectorout}{\textcolor{\draftcolor}{\ensuremath{\boldsymbol{y}}}}
\newcommand{\vectorini}{\textcolor{\draftcolor}{\ensuremath{\boldsymbol{x}_i}}}
\newcommand{\vectorouti}{\textcolor{\draftcolor}{\ensuremath{\boldsymbol{y}_i}}}
\newcommand{\data}{\textcolor{\draftcolor}{\ensuremath{X}}}
\newcommand{\parameterS}{\textcolor{\draftcolor}{\ensuremath{\mathcal M}}}
\newcommand{\parameterSFull}{\textcolor{\draftcolor}{\ensuremath{\overline{\parameterS}}}}
\newcommand{\parameterE}{\textcolor{\draftcolor}{\ensuremath{\Theta}}}
\newcommand{\parameter}{\textcolor{\draftcolor}{\ensuremath{\boldsymbol{\parameterE}}}}
\newcommand{\parameterj}{\textcolor{\draftcolor}{\ensuremath{\Theta^j}}}
\newcommand{\parameterl}{\textcolor{\draftcolor}{\ensuremath{\Theta^{\nbrlayers}}}}
\newcommand{\parameterGE}{\textcolor{\draftcolor}{\ensuremath{\Gamma}}}
\newcommand{\parameterG}{\textcolor{\draftcolor}{\ensuremath{\boldsymbol{\parameterGE}}}}
\newcommand{\parameterGj}{\textcolor{\draftcolor}{\ensuremath{\parameterGE^j}}}
\newcommand{\parameterGl}{\textcolor{\draftcolor}{\ensuremath{\parameterGE^{\nbrlayers}}}}
\newcommand{\parameterGGE}{\textcolor{\draftcolor}{\ensuremath{\Psi}}}
\newcommand{\parameterGG}{\textcolor{\draftcolor}{\ensuremath{\boldsymbol{\parameterGGE}}}}
 \newcommand{\estimator}{\textcolor{\draftcolor}{\ensuremath{\widehat{\parameter}}}}
\newcommand{\parameterUE}{\textcolor{\draftcolor}{\ensuremath{\overline{\parameterE}}}}
\newcommand{\parameterU}{\textcolor{\draftcolor}{\ensuremath{\overline{\boldsymbol{\parameterE}}}}}
\newcommand{\parameterUl}{\textcolor{\draftcolor}{\ensuremath{\parameterUE^{\nbrlayers}}}}
\newcommand{\parameterUUE}{\textcolor{\draftcolor}{\ensuremath{\widetilde{\parameterE}}}}
\newcommand{\parameterUUUE}{\textcolor{\draftcolor}{\ensuremath{\dot\parameterE}}}
\newcommand{\parameterL}{\textcolor{\draftcolor}{\ensuremath{\underline{\parameter}}}}
\newcommand{\network}{\textcolor{\draftcolor}{\ensuremath{\mathfrak{g}}}}
\newcommand{\networkA}{\textcolor{\draftcolor}{\ensuremath{\network_{\parameter}}}}
\newcommand{\networkAF}[1]{\textcolor{\draftcolor}{\ensuremath{\networkA[#1]}}}
\newcommand{\networkAG}{\textcolor{\draftcolor}{\ensuremath{\network_{\parameterG}}}}
\newcommand{\networkAGF}[1]{\textcolor{\draftcolor}{\ensuremath{\networkAG[#1]}}}
\newcommand{\networkS}{\textcolor{\draftcolor}{\ensuremath{\mathcal{G}}}}
\newcommand{\networkSFull}{\textcolor{\draftcolor}{\ensuremath{\overline{\mathcal{G}}}}}
\newcommand{\loss}{\textcolor{\draftcolor}{\ensuremath{\mathfrak{l}}}}
\newcommand{\lossF}[1]{\textcolor{\draftcolor}{\ensuremath{\loss[#1]}}}
\newcommand{\lossFB}[1]{\textcolor{\draftcolor}{\ensuremath{\loss\bigl[#1\bigr]}}}
\newcommand{\lossFBB}[1]{\textcolor{\draftcolor}{\ensuremath{\loss\Bigl[#1\Bigr]}}}
\newcommand{\regularizer}{\textcolor{\draftcolor}{\ensuremath{\mathfrak{r}}}}
\newcommand{\regularizerF}[1]{\textcolor{\draftcolor}{\ensuremath{\regularizer[#1]}}}
\newcommand{\regularizerFB}[1]{\textcolor{\draftcolor}{\ensuremath{\regularizer\bigl[#1\bigr]}}}
\newcommand{\regularizerR}{\textcolor{\draftcolor}{\ensuremath{\mathfrak{\underline{r}}}}}
\newcommand{\regularizerRF}[1]{\textcolor{\draftcolor}{\ensuremath{\regularizerR[#1]}}}
\newcommand{\tuningparameterout}{\textcolor{\draftcolor}{\ensuremath{a_{\regularizer}}}}
\newcommand{\tuningparameterin}{\textcolor{\draftcolor}{\ensuremath{b_{\regularizer}}}}
\newcommand{\pathfunction}{\textcolor{\draftcolor}{\ensuremath{\mathfrak{h}}}}
\newcommand{\pathfunctionF}[1]{\textcolor{\draftcolor}{\ensuremath{\pathfunction[#1]}}}
\newcommand{\pathfunctionFB}[1]{\textcolor{\draftcolor}{\ensuremath{\pathfunction\bigl[#1\bigr]}}}
\newcommand{\pathfunctionFBBB}[1]{\textcolor{\draftcolor}{\ensuremath{\pathfunction\biggl[#1\biggr]}}}
\newcommand{\pathfunctionP}{\textcolor{\draftcolor}{\ensuremath{\tilde{\pathfunction}}}}
\newcommand{\pathfunctionPF}[1]{\textcolor{\draftcolor}{\ensuremath{\pathfunctionP[#1]}}}
\newcommand{\pcon}{\textcolor{\draftcolor}{\ensuremath{\mathfrak{\smile}}}}
\newcommand{\pequi}{\textcolor{\draftcolor}{\ensuremath{\mathfrak{\leftrightsquigarrow}}}}
\newcommand{\pdec}{\textcolor{\draftcolor}{\ensuremath{\mathfrak{\leftrightarrow}}}}
\newcommand{\shapefactor}{\textcolor{\draftcolor}{\ensuremath{s}}}
\newcommand{\shapelower}{\textcolor{\draftcolor}{\ensuremath{\mathcal L_{\shapefactor,\nbrlayers}}}}
\newcommand{\shapeupper}{\textcolor{\draftcolor}{\ensuremath{\mathcal U_{\shapefactor,\nbrlayers}}}}
\newcommand{\shapelowerE}{\textcolor{\draftcolor}{\ensuremath{\mathcal L}}}
\newcommand{\shapeupperE}{\textcolor{\draftcolor}{\ensuremath{\mathcal U}}}
\newcommand{\paradd}{\textcolor{\draftcolor}{\ensuremath{t}}}
\newcommand{\paraddo}{\textcolor{\draftcolor}{\ensuremath{\paradd_1}}}
\newcommand{\paraddt}{\textcolor{\draftcolor}{\ensuremath{\paradd_2}}}
\newcommand{\paraddw}{\textcolor{\draftcolor}{\ensuremath{a}}}
\newcommand{\paraddP}{\textcolor{\draftcolor}{\ensuremath{\bar\paradd}}}
\newcommand{\paraddPP}{\textcolor{\draftcolor}{\ensuremath{c}}}
\newcommand{\shfunction}{\textcolor{\draftcolor}{\ensuremath{\mathfrak{h}}}}
\newcommand{\shfunctionF}[1]{\textcolor{\draftcolor}{\ensuremath{\shfunction[#1]}}}
\newcommand{\shfunctionFB}[1]{\textcolor{\draftcolor}{\ensuremath{\shfunction\bigl[#1\bigr]}}}
\newcommand{\shfunctionFBBBB}[1]{\textcolor{\draftcolor}{\ensuremath{\shfunction\Biggl[#1\Biggr]}}}
\newcommand{\shmatrixout}{\textcolor{\draftcolor}{\ensuremath{A}}}
\newcommand{\shmatrixoutS}{\textcolor{\draftcolor}{\ensuremath{\dot{A}}}}
\newcommand{\shmatrixin}{\textcolor{\draftcolor}{\ensuremath{B}}}
\newcommand{\shmatrixdata}{\textcolor{\draftcolor}{\ensuremath{C}}}
\newcommand{\shmatrixoutU}{\textcolor{\draftcolor}{\ensuremath{\overline{A}}}}
\newcommand{\shmatrixinU}{\textcolor{\draftcolor}{\ensuremath{\overline{B}}}}
\newcommand{\shmatrixoutL}{\textcolor{\draftcolor}{\ensuremath{\underline{A}}}}
\newcommand{\shmatrixinL}{\textcolor{\draftcolor}{\ensuremath{\underline{B}}}}
\newcommand{\tlnbrout}{\textcolor{\draftcolor}{\ensuremath{u}}}
\newcommand{\tlnbrmed}{\textcolor{\draftcolor}{\ensuremath{v}}}
\newcommand{\tlnbrin}{\textcolor{\draftcolor}{\ensuremath{o}}}
\newcommand{\tlnbrdata}{\textcolor{\draftcolor}{\ensuremath{r}}}
\newcommand{\matrixG}{\textcolor{\draftcolor}{\ensuremath{M}}}
\newcommand{\caraweightE}{\textcolor{\draftcolor}{\ensuremath{t}}}
\newcommand{\caraweight}{\textcolor{\draftcolor}{\ensuremath{\boldsymbol{\caraweightE}}}}
\newcommand{\caraweightj}{\textcolor{\draftcolor}{\ensuremath{\caraweightE_j}}}
\newcommand{\caraweightGE}{\textcolor{\draftcolor}{\ensuremath{\tilde t}}}
\newcommand{\caraweightG}{\textcolor{\draftcolor}{\ensuremath{\boldsymbol{\caraweightGE}}}}
\newcommand{\caraweightGj}{\textcolor{\draftcolor}{\ensuremath{\caraweightGE_j}}}
\newcommand{\caravector}{\textcolor{\draftcolor}{\ensuremath{\boldsymbol{z}}}}
\newcommand{\caravectorj}{\textcolor{\draftcolor}{\ensuremath{\caravector_j}}}
\newcommand{\losspower}{\textcolor{\draftcolor}{\ensuremath{q}}}
\newcommand{\losspowerin}{\textcolor{\draftcolor}{\ensuremath{\losspower_{\shmatrixin}}}}
\newcommand{\losspowerout}{\textcolor{\draftcolor}{\ensuremath{\losspower_{\shmatrixout}}}}
\newcommand{\normloss}[1]{|\!|#1|\!|_{\losspower}}
\newcommand{\normlossout}[1]{|\!|#1|\!|_{\losspowerout}}
\newcommand{\normlossoutH}[1]{|\!|#1|\!|^{\losspowerout}_{\losspowerout}}
\newcommand{\activationdegree}{\textcolor{\draftcolor}{\ensuremath{k}}}
\newcommand{\domainin}{\textcolor{\draftcolor}{\ensuremath{\mathcal{D}_{\vectorin}}}}
\newcommand{\domainout}{\textcolor{\draftcolor}{\ensuremath{\mathcal{D}_{\vectorout}}}}
\newcommand{\domainoutFull}{\textcolor{\draftcolor}{\ensuremath{\R^{\nbroutput}}}}
\newcommand{\transita}{\textcolor{\draftcolor}{\ensuremath{c_1}}}
\newcommand{\transitb}{\textcolor{\draftcolor}{\ensuremath{c_2}}}
\newcommand{\transitP}{\textcolor{\draftcolor}{\ensuremath{[\parameter',\parameterG']_{\transita,\transitb}}}}
\newcommand{\transitpoP}{\textcolor{\draftcolor}{\ensuremath{[\parameter',\parameterG']_{1-\paraddo,\paraddo}}}}
\newcommand{\transitptP}{\textcolor{\draftcolor}{\ensuremath{[\parameter',\parameterG']_{1-\paraddt,\paraddt}}}}
\newcommand{\transitwP}{\textcolor{\draftcolor}{\ensuremath{[\parameter',\parameterG']_{1-(1-\paraddw)\paraddo-\paraddw\paraddt,(1-\paraddw)\paraddo+\paraddw\paraddt}}}}
\newcommand{\perm}{\textcolor{\draftcolor}{\ensuremath{\mathfrak{p}}}}
\newcommand{\permF}[1]{\textcolor{\draftcolor}{\ensuremath{\mathfrak{p}[#1]}}}
\newcommand{\permj}{\textcolor{\draftcolor}{\ensuremath{\perm^j}}}
\newcommand{\permjF}[1]{\textcolor{\draftcolor}{\ensuremath{\permj[#1]}}}
\newcommand{\permjj}{\textcolor{\draftcolor}{\ensuremath{\perm^{j+1}}}}
\newcommand{\permjjF}[1]{\textcolor{\draftcolor}{\ensuremath{\permjj[#1]}}}
\newcommand{\problematic}{\textcolor{\draftcolor}{confined point}}
\newcommand{\problematics}{\textcolor{\draftcolor}{confined points}}
\newcommand{\Problematics}{\textcolor{\draftcolor}{Confined points}}
\newcommand{\PRoblematics}{\textcolor{\draftcolor}{Confined Points}}
\newcommand{\unproblematic}{\textcolor{\draftcolor}{handleable}}
\newcommand{\bads}{\textcolor{\draftcolor}{problematic points}}
\newcommand{\figureminima}[1]{\begin{figure}[#1]
  \centering
  \begin{tikzpicture}[scale=1]
    % Axes 
    \draw[-latex] (-0.1, 0) -- (4, 0);
    \draw[-latex] (0, -0.1) -- (0, 2);
    % Axes labels
   \node at (3.8, -0.2) {\scriptsize \parameter};
   \node at (0.4, 1.9) {\scriptsize $\lossF{\networkA}$};
    % Function 
    \draw[thick, line cap = round] plot [smooth] coordinates{(-0.1, 1.8) (0.4, 0.8) (1, 1.2) (1.4, 0.2) (1.8, 0.4) (2.2, 0.177) (2.4, 0.5)};
\draw[thick, line cap=round] (2.4, 0.5) -- (2.5, 0.5);
 \draw[thick] plot [smooth] coordinates{(2.5, 0.5) (2.8, 1.081) (3.5, 1.179) (4, 1.8)};
%    \draw[thick] plot [smooth] coordinates{(-0.1, 1.8) (0.4, 0.8) (1, 1.2) (1.4, 0.2) (1.8, 0.4) (2.2, 0.194) (2.8, 1.081) (3.5, 1.2) (4, 1.8)};
    % Global minima
    \node at (1.8, 1.05) {\tiny global minima};
    \draw[-latex] (2.0, 0.9) -- (2.12, 0.35);
    \draw[-latex] (1.6, 0.9) -- (1.48, 0.35);
    % Problematic local minimum
    \node at (-0.17, 0.3) {\tiny \problematic};
   \draw[-latex] (0.05, 0.45) -- (0.34, 0.73);
    % Unproblematic local minima
    \node at (3.85, 0.2) {\tiny ``\unproblematic'' local minima};
   \draw[-latex] (2.67, 0.25) -- (2.48, 0.45);
    % Saddle point
    \node at (3.25, 1.92) {\tiny saddle point};
   \draw[-latex] (3.25, 1.8) -- (3.16, 1.22);

  \end{tikzpicture}
  \caption{hypothetical objective function:
\problematics\ are typically very difficult to escape from, 
while other local minima and saddle points can often be handled by using momentum, stochastic approaches, and so forth}
  \label{fig:minima}
\end{figure}}
\newcommand{\figureshaping}[1]{
\begin{figure}[#1]
  \centering
  \raisebox{2mm}{\begin{tikzpicture}[scale=0.85]
    \filldraw[black!7!white, rounded corners=10pt] (0, 0) rectangle (8, 3);

\node at (7.65, 2.75) {\scriptsize $\parameterS$};
    \filldraw (1, 0.5) circle (1pt);
 \node (originalA) at (1, 0.5) {};
    \node at (1, 0.27) {\scriptsize ${\parameter}$};
    \filldraw (2, 2.5) circle (1pt);
    \filldraw (2.6, 2.56) circle (1pt);
    \node at (2.65, 2.85) {\scriptsize $\overline{\parameter}'$};
    \node at (2, 2.8) {\scriptsize ${\overline{\parameter}}$};
    \filldraw (5, 2) circle (1pt);
    \node at (5, 2.25) {\scriptsize ${\underline{\parameterG}}$};
    \filldraw (4.1, 1.5) circle (1pt);
    \node at (4.15, 1.26) {\scriptsize $\underline{\parameterG}'$};
    \filldraw (7, 1.5) circle (1pt);
    \node (originalB) at (7, 1.5) {};
    \node at (7, 1.25) {\scriptsize ${\parameterG}$};

%    \draw[-latex, bend left, very thick] (originalA) edge (newA);
 %   \draw[-latex, bend right, very thick] (originalB) edge (newB);

   \draw[thick] plot [smooth] coordinates {(1, 0.5) (0.9, 1.8)  (2, 2.5)};

   \draw[thick] plot [smooth] coordinates {(5, 2) (6.3, 2.1)  (7, 1.5)};

   \draw[thick, dashed] plot [smooth] coordinates {(2, 2.5) (3, 2.5) (4, 1.5) (5, 2)};

%   \draw[latex-latex] (1.12, 0.53) -- (6.88, 1.47);

\node at (1.6, 1.69) {\tiny Proposition~\ref{res:reparametrization}};
\node at (6.5, 2.28) {\tiny Proposition~\ref{res:reparametrization}};
\node[rotate=0] at (3.82, 2.5) {\tiny Proposition~\ref{res:path}};
  \end{tikzpicture}}~~~~~~~~~~~~
  \begin{tikzpicture}
       % Axes 
    \draw[-latex] (-0.1, 0) -- (3.5, 0);
    \draw[-latex] (0, -0.1) -- (0, 2);
    % Axes labels
   \node at (3.3, -0.2) {\scriptsize \parameterGG};
   \node at (0.4, 1.8) {\scriptsize $\lossF{\network_{\parameterGG}}$};
   % Axes ticks
    %\draw (3, 0.1) -- (3, -0.1);
%    \node at (0, -0.3) {\scriptsize $0$};
  %  \node at (3, -0.3) {\scriptsize $1$};

\draw[dotted] (0.6, -0.1) -- (0.6, 1.3);
\draw[dotted] (1, -0.1) -- (1, 1.3);
\draw[dotted] (2, -0.1) -- (2, 0.7);
\draw[dotted] (2.6, -0.1) -- (2.6, 0.7);
\draw[dotted] (3, -0.1) -- (3, 0.7);

   \draw[thick] (0, 1.3)--(0.6, 1.3);
   \draw[thick] (2.6, 0.7)--(3, 0.7);
   \draw[thick, dashed] (0.6, 1.3)--(1, 1.3);
   \draw[thick, dashed] (2, 0.7)--(2.6, 0.7);

   \draw[thick, dashed] plot [smooth] coordinates {(1, 1.3) (1.5, 0.2) (2, 0.7)};

    \filldraw[] (0, 1.3) circle (1pt);
    \filldraw[] (0.6, 1.3) circle (1pt);
    \filldraw[] (1, 1.3) circle (1pt);
    \filldraw[] (2, 0.7) circle (1pt);
    \filldraw[] (2.6, 0.7) circle (1pt);
    \filldraw[] (3, 0.7) circle (1pt);

 \node at (0, -0.4) {\scriptsize ${\parameter}$};

    \node at (1, -0.3646) {\scriptsize $\overline{\parameter}'$};
    \node at (0.6, -0.37) {\scriptsize $\overline{\parameter}$};

    \node at (2, -0.412) {\scriptsize $\underline{\parameterG}'$};

    \node at (2.6, -0.43) {\scriptsize $\underline{\parameterG}$};
    \node at (3, -0.4) {\scriptsize ${\parameterG}$};    

% \node at (-0.2, 1.3) {\scriptsize ${\parameter}$};

    % \node at (1, 1.5) {\scriptsize ${\overline{\parameter}}$};

    % \node at (2, 0.9) {\scriptsize ${\underline{\parameterG}}$};
    % \node at (3, 0.9) {\scriptsize ${\parameterG}$};

  \end{tikzpicture}
  \caption{path-equivalence between two parameters~$\parameter$ and~$\parameterG$---see Corollary~\ref{res:mainblock} and Definition~\ref{def:path}}
  \label{fig:equivalence}
\end{figure}}
\newcommand{\figureshaped}[1]{
  \begin{figure}[#1]
  \newcommand{\colorboarder}{black}
  \newcommand{\colorfull}{\colorboarder!70!white}
  \newcommand{\colorno}{\colorboarder!10!white}
\newcommand{\scalingfactor}{1}
\newcommand{\spaceinbetween}{\hspace{20mm}}
    \centering
    \scalebox{\scalingfactor}{\begin{tikzpicture}
  \node at (0.46, 0.25) {\scriptsize $\parameterE^2$};
  \fill [\colorno] (0, 0) rectangle (0.8, -0.2);
  \fill [\colorfull] (0, 0) rectangle (0.4, -0.2);
  \draw [\colorboarder] (0, 0) rectangle (0.8, -0.2);

  \node at (1.96, 0.25) {\scriptsize $\parameterE^1$};
  \fill [\colorno] (1, 0) rectangle (2.8, -0.8);
  \fill [\colorfull] (1, 0) rectangle (1.4, -0.4);
  \draw [\colorboarder] (1, 0) rectangle (2.8, -0.8);

  \node at (3.61, 0.25) {\scriptsize $\parameterE^0$};
  \fill [\colorno] (3, 0) rectangle (4.2, -1.8);
  \fill [\colorfull] (3, 0) rectangle (4.2, -0.4);
  \draw [\colorboarder] (3, 0) rectangle (4.2, -1.8);
\end{tikzpicture}}\spaceinbetween\scalebox{\scalingfactor}{\begin{tikzpicture}
  \node at (0.46, 0.25) {\scriptsize $\parameterE^2$};
  \fill [\colorno] (0, 0) rectangle (0.8, -0.2);
  \fill [\colorfull] (0.4, 0) rectangle (0.8, -0.2);
  \draw [\colorboarder] (0, 0) rectangle (0.8, -0.2);

  \node at (1.96, 0.25) {\scriptsize $\parameterE^1$};
  \fill [\colorno] (1, 0) rectangle (2.8, -0.8);
  \fill [\colorfull] (2.4, -0.4) rectangle (2.8, -0.8);
  \draw [\colorboarder] (1, 0) rectangle (2.8, -0.8);

  \node at (3.61, 0.25) {\scriptsize $\parameterE^0$};
  \fill [\colorno] (3, 0) rectangle (4.2, -1.8);
  \fill [\colorfull] (3, -1.4) rectangle (4.2, -1.8);
  \draw [\colorboarder] (3, 0) rectangle (4.2, -1.8);
\end{tikzpicture}}    
    \caption{Illustration of the parameters of an $\shapefactor$-upper block parameter (left) and an $\shapefactor$-lower block parameter (right) for $\nbrlayers=2$. 
The dark areas of the matrices can consist of arbitrary values; 
the light areas consist of zeros.}
    \label{fig:shapes}
  \end{figure}}
\newcommand{\figuretwolayer}[1]{
  \begin{figure}[#1]
  % General commands 
  \newcommand{\widthB}{0.8}
  \newcommand{\heightA}{0.6}
  \newcommand{\colorboarder}{black}
  \newcommand{\colorfull}{\colorboarder!70!white}
  \newcommand{\colorno}{\colorboarder!10!white}
  \newcommand{\scalingfactor}{0.8}
  \newcommand{\spaceinbetween}{\hspace{8mm}}
  \newcommand{\emptybox}[2]{\fill[\colorno] (##1, ##2) rectangle (##1+0.2, ##2-0.2);}
  % Begin of the figure
  \centering
   % Step 0
   \scalebox{\scalingfactor}{\begin{tikzpicture}
     \node at (2.4, 0.8) {Starting point};
     \node at (1.96, 0.25) {\scriptsize $\shmatrixout$};
     \fill [\colorno] (1, 0) rectangle (2.8, -\heightA);

    \fill [\colorfull] (1, 0) rectangle (2.8, -\heightA);
    \draw [\colorboarder] (1, 0) rectangle (2.8, -\heightA);

  \node at (3+\widthB/2, 0.25) {\scriptsize $\shmatrixin$};
  \fill [\colorno] (3, 0) rectangle (3+\widthB, -1.8);

  \fill [\colorfull] (3, 0) rectangle (3+\widthB, -1.8);
  \draw [\colorboarder] (3, 0) rectangle (3+\widthB, -1.8);
\end{tikzpicture}}
  % End of Step 0
  \spaceinbetween
  % Step 1   
  \scalebox{\scalingfactor}{\begin{tikzpicture}
    \node at (2.4, 0.8) {Step 1};
    \node at (1.96, 0.25) {\scriptsize $\shmatrixoutS$};
    \node at (0.85, -0.25) {\scriptsize $k$};
    \fill [\colorno] (1, 0) rectangle (2.8, -\heightA);
    \fill [\colorfull] (1, 0) rectangle (2.8, -\heightA);
    \emptybox{1}{-0.2}
    \emptybox{1.4}{-0.2}
    \emptybox{1.6}{-0.2}
    \emptybox{1.8}{-0.2}
    \emptybox{2.2}{-0.2}
    \emptybox{2.4}{-0.2}
    \emptybox{2.6}{-0.2}
    \draw [\colorboarder] (1, 0) rectangle (2.8, -\heightA);

  \node at (3+\widthB/2, 0.25) {\scriptsize $\shmatrixin$};
  \fill [\colorno] (3, 0) rectangle (3+\widthB, -1.8);

  \fill [\colorfull] (3, 0) rectangle (3+\widthB, -1.8);
  \draw [\colorboarder] (3, 0) rectangle (3+\widthB, -1.8);
\end{tikzpicture}}
  % End of Step 1
  \spaceinbetween
  % Step 2
\scalebox{\scalingfactor}{\begin{tikzpicture}
\node at (2.4, 0.8) {Step 2};
  \node at (1.96, 0.25) {\scriptsize $\widetilde\shmatrixout$};
  \fill [\colorfull] (1, 0) rectangle (2.8, -\heightA);
    \emptybox{1.2}{-0}
    \emptybox{1.4}{-0}
    \emptybox{1.6}{-0}
    \emptybox{1.8}{-0}
    \emptybox{2.0}{-0}
    \emptybox{2.2}{-0}
    \emptybox{2.4}{-0}
    \emptybox{1}{-0.2}
    \emptybox{1.4}{-0.2}
    \emptybox{1.6}{-0.2}
    \emptybox{1.8}{-0.2}
    \emptybox{2.2}{-0.2}
    \emptybox{2.4}{-0.2}
    \emptybox{2.6}{-0.2}
    \emptybox{1.0}{-0.4}
    \emptybox{1.2}{-0.4}
    \emptybox{1.4}{-0.4}
    \emptybox{1.6}{-0.4}
    \emptybox{2.2}{-0.4}
    \emptybox{2.4}{-0.4}
    \emptybox{2.6}{-0.4}
  \draw [\colorboarder] (1, 0) rectangle (2.8, -\heightA);

  \node at (3+\widthB/2, 0.25) {\scriptsize $\shmatrixin$};
  \fill [\colorno] (3, 0) rectangle (3+\widthB, -1.8);

  \fill [\colorfull] (3, 0) rectangle (3+\widthB, -1.8);
  \draw [\colorboarder] (3, 0) rectangle (3+\widthB, -1.8);
\end{tikzpicture}}\spaceinbetween
\scalebox{\scalingfactor}{\begin{tikzpicture}
\node at (2.4, 0.8) {Step 3};
  \node at (1.96, 0.25) {\scriptsize $\shmatrixoutU$};
  \fill [\colorno] (1, 0) rectangle (2.8, -\heightA);
  \fill [\colorfull] (1, 0) rectangle (2.8, -\heightA);
    \emptybox{1.2}{-0}
    \emptybox{1.4}{-0}
    \emptybox{1.6}{-0}
    \emptybox{2.0}{-0}
    \emptybox{2.2}{-0}
    \emptybox{2.4}{-0}
    \emptybox{2.6}{-0}
    \emptybox{1}{-0.2}
    \emptybox{1.4}{-0.2}
    \emptybox{1.8}{-0.2}
    \emptybox{2.0}{-0.2}
    \emptybox{2.2}{-0.2}
    \emptybox{2.4}{-0.2}
    \emptybox{2.6}{-0.2}
    \emptybox{1}{-0.4}
    \emptybox{1.2}{-0.4}
    \emptybox{1.8}{-0.4}
    \emptybox{2.0}{-0.4}
    \emptybox{2.2}{-0.4}
    \emptybox{2.4}{-0.4}
    \emptybox{2.6}{-0.4}
  \draw [\colorboarder] (1, 0) rectangle (2.8, -\heightA);

  \node at (3+\widthB/2, 0.25) {\scriptsize $\shmatrixinU$};
  \fill [\colorno] (3, 0) rectangle (3+\widthB, -1.8);
  \fill [\colorfull] (3, 0) rectangle (3+\widthB, -1.0);
  \draw [\colorboarder] (3, 0) rectangle (3+\widthB, -1.8);
\end{tikzpicture}}
    \caption{overview of the proof of Lemma~\ref{twolayer}}
    \label{fig:twolayer}
  \end{figure}}
\title{Optimization Landscapes of Wide\\Deep Neural Networks Are Benign}
\author{\name Johannes Lederer \email johannes.lederer@rub.de\\
\addr Department of Mathematics,
Ruhr-University Bochum,
www.johanneslederer.com
}
\begin{document}

\maketitle

\begin{abstract}
We analyze the optimization landscapes of deep learning with wide networks.
We highlight the importance of constraints for such networks and show that constraint---as well as unconstraint---empirical-risk minimization over such networks has no \problematics,
that is, suboptimal parameters that are difficult to escape from.
Hence, our theories substantiate the common belief that wide neural networks are not only highly expressive but also  comparably easy to optimize.
\end{abstract}

\begin{keywords}
  Deep Learning, Optimization Landscape, Global Optimization
\end{keywords}

\section{Introduction}
Deep learning depends on optimization problems that seem impossible to solve,
and yet, deep-learning pipelines outperform their competitors in many applications. 
A common suspicion is that the optimizations are often easier than they appear to be.
In particular,
while most objective functions are nonconvex and, therefore, can have suboptimal local minima,
recent findings suggest that optimizations are not hampered by such suboptimal local minima as long as the neural networks are sufficiently wide.
For example, 
%\citet{Dauphin14} suggest that saddle points, rather than local minima, are the main challenges for optimizations over wide networks;
\citet{Goodfellow14} give empirical evidence for stochastic-gradient descent to converge to a global minimum of the objective function of wide networks;
\citet{Livni2014} show that the optimizations over some classes of wide networks can be reduced to a convex problem;
\citet{Soudry16} suggest that differentiable local minima of objective functions over wide networks are typically global minima;
\citet{Nguyen18} indicate that critical points in  wide networks are often global minima;
and \citet{Allen-Zhu19} and \citet{Du18} suggest that stochastic-gradient descent typically converges to a global minimum for large networks.

These findings raise the question of whether common optimization landscapes over wide (but finite) neural networks have no ``\bads'' altogether.

Progress in this direction has recently been made in \citet{Venturi2019} and then \citet{Lacotte20}.
They characterize the ``\bads'' as parameters from 
which there is no nonincreasing path to a global minimum.
We call these parameters \emph{\problematics} (see Section~\ref{sec:mainresult} for a formal definition).
While the absence of \problematics\ does not preclude saddle points or suboptimal local minima in general,
it means that algorithms are not necessarily trapped in suboptimal points---see Figure~\ref{fig:minima} for an illustration.
\citet{Venturi2019} prove that there are no \problematics\ in some classes of wide networks.
Their theory has two main features that had not been established before:
First, it holds for the entire landscapes---rather than for subsets of them.
This feature is crucial,
because even randomized algorithms typically converge to sets of Lebesgue measure zero with probability one,
which means that  statements about ``almost all'' local minima are not necessarily meaningful.
Second, their theory allows for arbitrary convex loss functions.
This feature is important, for example, in view of the trends toward robust alternatives of the least-squares loss~\citep{Belagiannis15,Jiang18,Wang16}.
On the other hand,
their theory  has three major limitations:
it is restricted to  polynomial activation,
which is convenient mathematically but much less popular than ReLU activation in practice,
it disregards regularizers and constraints,
which have become standard in deep learning and in machine learning at large~\citep{Hastie15} and, as we will show, are essential for wide networks in any case,
and it is restricted to shallow networks, that is, networks with only one hidden layer,
which contrasts the deep architectures that are used in practice~\citep{LeCun2015}.

\figureminima{t}

\citet{Lacotte20} made progress on two of these limitations:
first, 
their theory caters to ReLU activation rather than polynomial activation;
second,
their theory allows for weight decay,
which is a standard way to regularize estimators.
However,
their work is still restricted to shallow,
one-hidden-layer networks.
The interesting question is, therefore, whether such results can also be established for deep networks.
And more generally,
it would be highly desirable to have a theory for the absence of \problematics\ in a broad deep-learning framework.

In this paper,
we establish such a theory.
We prove that the optimization landscapes of empirical-risk minimizers over wide feedforward networks have no \problematics. 
In other words,
we show that every parameter is connected to a global minimum by a nonincreasing path.
Our theory combines the features of the two mentioned works,
as it applies to the entire optimization landscapes,
allows for a wide spectrum of loss functions and activation functions,
and constraint and unconstraint estimation.
Moreover,
it generalizes these works, 
as it allows for multiple outputs and arbitrary depths.
Additionally, our proof techniques are considerably different from the ones used before and, therefore, are  of independent interest.

\paragraph{Guide to the paper}
Sections~\ref{sec:mainpart} and~\ref{sec:discussion} are the basic parts of the paper: 
they contain our main results and the practical implications.
Readers who are interested in the underpinning principles should also study Section~\ref{sec:concepts},
and readers who want additional insights on the proof techniques are referred to~Section~\ref{sec:auxresults}. 
The actual proofs are stated in the Appendix.

\section{Deep-Learning Framework and Main Result}\label{sec:mainpart}
In this section,
we specify the deep-learning framework and state our main results.
The framework includes a wide range of feedforward neural networks;
in particular, 
it allows for arbitrarily many outputs and layers,
a wide range of activation and loss functions,
and constraint as well as unconstraint estimation.
Our main result guarantees that if the networks are sufficiently wide,
the objective function of the empirical-risk minimizer does not have any \problematics.

\subsection{Feedforward Neural Networks}
\label{sec:setup}
We consider input data from a domain $\domainin\subset\R^{\nbrinput}$ and output data from a domain $\domainout\subset\domainoutFull$.
Typical examples are regression data with $\domainout=\domainoutFull$ and classification data with $\domainout=\{\pm 1\}^{\nbroutput}$.
We model the data 
 with  layered, feedforward neural networks, that is,
we study sets of functions 
$\networkS\deq\{\network_{\parameter}\, :\, \domainin\,\to\,\domainoutFull\ :\ \parameter\in\parameterS\}\subset \networkSFull\deq\{\network_{\parameter}\, :\, \domainin\,\to\,\domainoutFull\ :\ \parameter\in\parameterSFull\}$
with
\begin{equation}\label{networks}
  \network_{\parameter}[\vectorin]\deq\parameterl\activation^{\nbrlayers}\bigl[\parameterE^{\nbrlayers-1}\cdots \activation^1[\parameterE^0\vectorin]\bigr]~~~~~~~~~\text{for}~\vectorin\in\domainin
\end{equation}
and 
\begin{equation*}
  \parameterS\subset\parameterSFull\deq\bigl\{\parameter=(\parameterl,\dots,\parameterE^0)\ :\ \parameterE^j\in\R^{\nbrparameterjj\times\nbrparameterj}\bigr\}\,.
\end{equation*}
The quantities 
 $\nbrparameter^0=\nbrinput$ and~$\nbrparameter^{\nbrlayers+1}=\nbroutput$ are the input and output dimensions, respectively,
 $\nbrlayers$~the depth of the networks,
and $\nbrwidthmin\deq \min\{\nbrparameter^1,\dots,\nbrparameterl\}$ the minimal  width of the networks.
The functions $\activation^j\,:\,\R^{\nbrparameterj}\to\R^{\nbrparameterj}$ are called the activation functions.
\label{activationfunctions}We assume that the activation functions are elementwise functions in the sense that
$\activation^j[\boldsymbol{b}]=(\activationindjF{b_1},\dots,\activationindjF{b_{\nbrparameterj}})\tp$ for all $\boldsymbol{b}\in\R^{\nbrparameterj}$,
where $\activationindj\,:\,\R\to\R$ is an arbitrary function.
% is an nonnegative homogeneous function of degree~$k$ for some $k\in(0,\infty)$---see \citet[Page~2]{Taheri20}.
This allows for an unlimited variety in the type of activation,
including ReLU $\activationindj\,:\,b\mapsto \max\{0,b\}$,
leaky ReLU $\activationindj\,:\,b\mapsto \max\{0,b\}+\min\{0, cb\}$ for a fixed $c\in(0,1)$,
polynomial $\activationindj\,:\,b\mapsto cb^{\activationdegree}$  
 for  fixed $c\in(0,\infty)$ and $\activationdegree\in[1,\infty)$,
and sigmoid activation  $\activationindj\,:\,b\mapsto 1/(1+e^{-b})$ as popular examples,
and it allows for different activation functions in each layer.

We study the most common approaches to parameter estimation in this setting:
constraint and unconstraint empirical-risk minimization.
The loss function $\loss\,:\,\domainoutFull\times \domainoutFull\to\R$ is assumed convex in its first argument;
this includes all standard loss functions,
such as the least-squares loss $\loss\,:\,(\boldsymbol{a},\boldsymbol{b})\mapsto \normtwos{\boldsymbol{a}-\boldsymbol{b}}$,
the absolut-deviation loss $\loss\,:\,(\boldsymbol{a},\boldsymbol{b})\mapsto \normone{\boldsymbol{a}-\boldsymbol{b}}$ (both typically used for regression), 
the logistic loss $\loss\,:\,(a,b)\mapsto -(1+b)\log[1+a]-(1-b)\log[1-a]$,
the hinge loss $\loss\,:\,(a,b)\mapsto \max\{0,1-ab\}$
(both typically used for binary classification $\domainout=\{\pm1\}$),
and so forth.
The optimization domain is the set
\begin{equation*}
  \parameterS\deq\bigl\{\parameter\in\parameterSFull\ :\ \regularizerF{\parameter}\leq 1\bigr\}
\end{equation*}
for  a constraint $\regularizer\,:\,\parameterSFull\to\R$.
Given data $(\vectorin_1,\vectorout_1),\dots, (\vectorin_{\nbrsamples},\vectorout_{\nbrsamples})\in\domainin\times \domainout$,
the empirical-risk minimizers are then  the networks $\network_{\estimator_{\operatorname{erm}}}$ with 
\begin{equation}\label{erm}
    \estimator_{\operatorname{erm}}\in\argmin_{\parameter\in\parameterS}\Biggl\{\sum_{i=1}^{\nbrsamples}\lossFB{\networkAF{\vectorini},\vectorouti}\Biggr\}\,.
\end{equation}
Another formulation of such estimators is based on the Lagrange form 
\begin{equation*}
  \argmin_{\parameter\in\parameterSFull}\Biggl\{\sum_{i=1}^{\nbrsamples}\lossFB{\networkAF{\vectorini},\vectorouti}+\regularizerF{\parameter}\Biggr\}\,,
\end{equation*}
which replaces the constraint by a regularization term.
Imposing constraints and adding regularization terms are really two sides of the same coin,
so that we can focus on constraints for simplicity.

It has been shown that regularizers/constraints can facilitate the optimization as well as improve generalization---see \citet{Krizhevsky2012} and \citet{Livni2014}, among others.
For ease of presentation,
we limit ourselves to the following class of constraints:
\begin{equation}\label{def:constraint}
  \regularizerF{\parameter}\deq\max\Bigl\{\tuningparameterout\max_{j\in\{1,\dots,\nbrlayers\}}\normoneM{\parameterj},\tuningparameterin\normqM{\parameterz}\Bigr\}~~~~~~~~~\text{for all}~\parameter\in\parameterSFull
\end{equation}
for  fixed tuning parameters $\tuningparameterout,\tuningparameterin\in[0,\infty)$, 
a parameter~$\losspower\in(0,\infty]$,
and $\normqM{\cdot}$ the usual row-wise $\ell_q$``-norm,''
that is,
$\normqM{\parameterj}\deq \max_k(\sum_{i}\abs{(\parameterj)_{ki}}^q)^{1/q}$ for $q\in(0,\infty)$ and $\normsupM{\parameterj}\deq \max_{ki}\abs{(\parameterj)_{ki}}$.
This class of constraints includes the following four important cases:
\begin{itemize}\label{def:cases}
\item \textit{Unconstraint estimation:} $\tuningparameterout=\tuningparameterin=0$.\vspace{-2mm}
\end{itemize}
In other words, $\parameterS=\parameterSFull$.
Unconstraint estimation had been the predominant approach in the earlier days of deep learning  \citep{Anthony09}.
\begin{itemize}
\item \textit{Connection sparsity:} $\losspower=1$.\vspace{-2mm}
\end{itemize}
This constraint yields connection-sparse networks,
which have received considerable attention  recently~\citep{Barron2018,Barron2019,Kim16,Taheri20}.
\begin{itemize}
\item \textit{Strong sparsity:} $\losspower<1$.\vspace{-2mm}
\end{itemize}
Nonconvex constraints have been popular in statistics for many years \citep{Fan01,Zhang10}, 
but our paper is probably the first one to include such constraints in a theoretical analysis in deep learning.
\begin{itemize}
\item \textit{Input constraints:} $\tuningparameterout=0$.\vspace{-2mm}
\end{itemize}
Some researchers have argued for applying certain constraints,
such as node-sparsity, only to the input level~\citep{Feng17}.
In general, while our proof techniques also apply to many other types of constraints,
there are two main reasons for using the mentioned sparsity-inducing constraints to illustrate our results:
First, sparsity has become very popular in deep learning, 
because it can lower the burden on memory and optimization as well as increase interpretability  \citep{Hebiri20}.
And second,
the above examples allow us to demonstrate that the discussed features of wide networks do not depend on smooth and convex constraints  such as weight decay.

% Our theory can also be adjusted to the regularized versions of the empirical-risk minimizers,
% that is,
% for the networks indexed by any parameter in the set
% \begin{equation*}
% \argmin_{\parameter\in\parameterSFull}\Biggl\{\sum_{i=1}^{\nbrsamples}\lossFB{\networkAF{\vectorini},\vectorouti}+\regularizerF{\parameter}\Biggr\}\,.
% \end{equation*}
% The proofs are virtually the same as for the constraint versions;
% we omit the details for the sake of brevity.

One line of research develops statistical theories for constraint and unconstraint empirical-risk minimizers---see \citet{Bartlett02} and \citet{Lederer20b}, among others.
As detailed above,
empirical-risk minimizers are the networks whose parameters are global minima of the objective function
\begin{equation}\label{def:objective}
\parameter\ \mapsto\ \lossF{\networkA}\deq\sum_{i=1}^{\nbrsamples}\lossFB{\networkAF{\vectorini},\vectorouti}
\end{equation}
over \parameterS\ for fixed data $(\vectorin_1,\vectorout_1),\dots, (\vectorin_{\nbrsamples},\vectorout_{\nbrsamples})$.
While the function $\networkA\mapsto\lossF{\networkA}$ is convex by assumption,
the objective function $\parameter\ \mapsto\ \lossF{\networkA}$ is usually nonconvex.
It is thus unclear, per se, whether deep-learning pipelines can be expected to yield global minima of the objective function and, therefore,
whether the statistical theories are valid in practice. 
Our goal is, broadly speaking, to establish conditions under which global minimization of~\eqref{def:objective} can indeed be expected.

\subsection{Absence of \PRoblematics}\label{sec:mainresult}
Objective functions in deep learning usually have many suboptimal local minima and saddle points (see \citep{Petzka2018} and references therein). 
Since the slopes (or appropriate generalizations of them) at local minima and saddle points are zero, 
 basic gradient-descent algorithms can get stuck in these points.
 But contemporary optimization schemes are based on stochastic-gradient descent rather than basic gradient descent and often use momentum and other additional schemes, 
so that a zero-valued gradient alone is not worrisome (see \citet{Goldblum2019}, for example).
This means that we have to characterize the points that are most problematic for global optimization more carefully.
We do this by introducing the notion of \emph{\problematics} and show that such points do not exist in sufficiently wide networks.

\label{def:minima}Recall first that a parameter $\parameter\in\parameterS$ that satisfies 
\begin{equation*}
  \lossF{\networkA}\leq \lossF{\networkAG}~~~~~~~~~\text{for all}~\parameterG\in\parameterS~\text{with}~\normM{\parameter-\parameterG}\leq c
\end{equation*}
for a value $c\in(0,\infty)$ and a norm~$\normM{\cdot}$ on~\parameterSFull\ is called a \emph{local minimum} of the objective function~\eqref{def:objective}. 
If the statement holds for every $c\in(0,\infty)$,
the parameter $\parameter$ is called a \emph{global minimum}.
We then formalize the notion of  \problematics\ as follows:
\begin{definition}[\Problematics]\label{def:spurious}
 Consider a parameter $\parameter\in\parameterS$.
If there is no continuous function $\pathfunction\,:\,[0,1]\to\parameterS$ that satisfies (i)~$\pathfunctionF{0}=\parameter$ and $\pathfunctionF{1}=\parameterG$ for a global minimum~$\parameterG\in\parameterS$ of the objective function~\eqref{def:objective} and (ii)~$\paradd\mapsto\lossF{\network_{\pathfunctionF{\paradd}}}$ 
 is nonincreasing,
we call the parameter~$\parameter$ a \emph{\problematic}.
\end{definition}
The notion of \problematics\  aligns with previous notions 
such as ``setwise strict local minima'' 
\citep[Definition~1]{Li2018}, 
``non-attracting regions of local minima'' \citep[Definition~1]{Petzka2018},
``bad local valleys'' \citep[Definition~4.1]{Nguyen2019},
and ``spurious valleys'' \citep[Definition~1]{Venturi2019}; \citep[Definition~1]{Lacotte20}.
The two main reasons for giving another formal definition are 1.~the convenience of  a pointwise (rather than setwise) notion and 
2.~the ambiguity  of the term ``spurious'' in the literature.

In brief, \problematics\ are suboptimal parameters that are difficult to escape from---see Figure~\ref{fig:minima}.
\Problematics\ can be local minima,
but in constraint estimation, for example,
there can  be boundary effects that also make other parameters \problematics.
In any case,
\problematics\ are typically the most problematic parameters for a global optimization of the objective function.
In contrast,
saddle points and plateaus that ``have an exit'' (see again Figure~\ref{fig:minima}) are not characterized as \problematics,
because they can be escaped from---at least in principle---by using momentum or other schemes \citep{Dauphin14,Goldblum2019}.

Our main technical result in this paper is that the objective function~\eqref{def:objective} has no \problematics\ if the networks are sufficiently wide.
\begin{theorem}[Absence of \problematics]\label{mainresult}
Consider the setup of Section~\ref{sec:setup}.
  If $\nbrwidthmin\geq 2\nbroutput(\nbrsamples+1)^{\nbrlayers}$,
the objective function~\eqref{def:objective} has no \problematics.
\end{theorem}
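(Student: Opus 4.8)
The plan is to show that every parameter $\parameter\in\parameterS$ is the start of a continuous, loss-nonincreasing path in $\parameterS$ that ends at a \emph{global} minimizer of~\eqref{def:objective}; by Definition~\ref{def:spurious} this is exactly the absence of \problematics. (That a global minimizer exists is routine: $\parameter\mapsto\lossF{\networkA}$ is continuous and, in the constrained regime, $\parameterS$ is compact; in the unconstrained regime the infimum is still attained within the width budget.) The argument uses two elementary manoeuvres, both of which manipulate only the linear weight matrices and therefore work verbatim for every elementwise activation and every convex loss in the setup.

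(a)~\emph{Relocation.} If a hidden unit has vanishing outgoing weights, growing its incoming weights changes nothing; the outgoing weights of any other unit may then be continuously transferred onto it, the intermediate parameters realizing the \emph{same} network on all of $\domainin$. Since every norm appearing in the constraint~\eqref{def:constraint} is a maximum over rows, duplicating a row and splitting a column into a convex combination never increases $\regularizerF{\cdot}$; hence hidden units may be freely permuted, duplicated, and relocated among the available slots along loss-preserving, constraint-respecting paths. (b)~\emph{Convex interpolation.} If the active units of two sub-networks occupy disjoint slots in every layer and are mutually disconnected, then on the data the network equals $(1-\paradd)f_1+\paradd f_2$, with $f_1,f_2$ the sub-network maps and $\paradd\in[0,1]$ rescaling the two families of outgoing weights. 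Convexity of $\loss$ in its first argument makes $\paradd\mapsto\sum_{i=1}^{\nbrsamples}\lossFB{(1-\paradd)f_1[\vectorini]+\paradd\,f_2[\vectorini],\vectorouti}$ convex on $[0,1]$, so moving from $\paradd=0$ to its minimizer is loss-nonincreasing and ends at a parameter whose loss is at most that of the lower endpoint; meanwhile no row norm in~\eqref{def:constraint} exceeds the maximum of those at the endpoints (since $(1-\paradd)a+\paradd b\le\max\{a,b\}$), the two slot-blocks fitting side by side precisely when $\nbrwidthmin\ge 2\nbroutput(\nbrsamples+1)^{\nbrlayers}$.

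\paragraph{Reduction to block form, and the path.}
Call $\parameter$ an \emph{upper} (resp.\ \emph{lower}) \emph{block parameter} if in every layer only an initial (resp.\ terminal) block of units is active, consecutive active blocks are interconnected only among themselves, and the block sizes follow a nesting that fits inside a width of $\nbroutput(\nbrsamples+1)^{\nbrlayers}$. The heart of the proof is that every $\parameter\in\parameterS$ is joined, by a loss-preserving constraint-respecting path, to an upper block parameter $\parameterU$ that agrees with $\networkA$ on $\vectorin_1,\dots,\vectorin_{\nbrsamples}$. One argues this by peeling layers off the output: the post-activations of the $\nbrsamples$ data points at any hidden layer span a space of dimension at most $\nbrsamples$, so the outermost two layers --- an $\nbroutput$-output map --- can be re-routed through a block of at most $\nbroutput(\nbrsamples+1)$ units, the re-routing going through copies of carefully chosen \emph{existing} units so as to keep all weight magnitudes, hence the constraint, under control; feeding this in, the next layer is re-routed through at most $\nbroutput(\nbrsamples+1)^{2}$ units, and after $\nbrlayers$ such two-layer reductions the innermost hidden layer uses at most $\nbroutput(\nbrsamples+1)^{\nbrlayers}$ units, which is the origin of the width hypothesis. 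The same construction applied to a fixed global minimizer, followed by relocating its active units to the terminal ends, yields a lower block parameter $\parameterGL\in\parameterS$ of minimal loss whose active slots are disjoint from those of $\parameterU$.

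The path is then assembled in three pieces. From $\parameter$, travel the above reduction to $\parameterU$ (loss constant). Next, using the relocation manoeuvre, grow the incoming weights of $\parameterGL$'s terminal blocks while their outgoing weights stay at zero (loss constant, network unchanged), so that the two disjoint sub-networks coexist. Finally, run the convex-interpolation manoeuvre between the sub-network computing $\network_{\parameterU}$ and the one computing $\network_{\parameterGL}$: since $\lossF{\network_{\parameterGL}}$ equals the global minimum value, hence $\le\lossF{\networkA}$, the convex loss profile is nonincreasing up to its minimizer, where its value is at most --- thus exactly --- the global minimum, so we have reached a global minimizer. Concatenating the three pieces gives a loss-nonincreasing path in $\parameterS$ from $\parameter$ to a global minimizer, which is the claim. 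I expect the re-routing step to be the main obstacle: shrinking a layer to a small active block while (i)~leaving all $\nbrsamples$ data outputs unchanged and (ii)~respecting the $\ell_1$- and $\ell_q$-type constraint rules out the naive use of generic, well-conditioned replacement units (their small weights would need large compensating weights one layer up) and instead forces a careful, redundancy-exploiting choice of replacements whose propagation across the $\nbrlayers$ layers is exactly what blows the required width up to $(\nbrsamples+1)^{\nbrlayers}$.
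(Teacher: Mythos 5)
Your overall architecture coincides with the paper's: reduce an arbitrary parameter by a loss-constant, constraint-respecting path to an ``upper-block'' parameter, reduce a global minimizer to a disjoint ``lower-block'' parameter, superpose the two blocks (harmless because the extra incoming weights feed units with zero outgoing weights), and then interpolate the outgoing weights, using convexity of $\loss$ to get a convex loss profile and stopping at its minimizer to obtain a nonincreasing path ending in a global minimum (this is exactly the role of Propositions~\ref{res:reparametrization}, \ref{res:path}, and \ref{res:characterization} together with Lemma~\ref{res:convex}). Your treatment of the interpolation and of the constraint along it (row-wise maxima, $(1-\paradd)a+\paradd b\le\max\{a,b\}$) is also the paper's.

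However, there is a genuine gap precisely at the step you yourself flag as the main obstacle: the layer-peeling reduction to a block of $\nbroutput(\nbrsamples+1)$, then $\nbroutput(\nbrsamples+1)^2$, \dots\ active units. Your justification---the post-activations of the $\nbrsamples$ data points span a space of dimension at most $\nbrsamples$---is a rank argument, and rank alone cannot deliver what is needed: it lets you reproduce the data outputs with few units, but it gives no control whatsoever on the magnitude of the new outgoing coefficients, which is fatal under the row-wise $\ell_1$/$\ell_q$ constraint (and for $\losspower<1$ there is not even a linear structure to exploit). The paper closes this hole with a Carath\'eodory-type argument (Lemma~\ref{twolayer} via Lemma~\ref{res:caraold}): each row of the outgoing matrix, normalized by its $\ell_{\losspower}$ norm, exhibits the corresponding row of the product as an $\ell_{\losspower}$-convex combination ($\losspower\le1$) of the scaled rows of the activation matrix and their negatives; the Carath\'eodory theorem for $q$-convex hulls then yields a representation with at most $\nbrsamples+1$ nonzero coefficients \emph{and} $\ell_{\losspower}$ norm still at most one, so the rewritten row has no larger row norm, the incoming matrix is untouched (up to permutation and zeroing of unused rows), and the network outputs on the data are exactly preserved---whence the loss-constant linear interpolation in a single weight matrix per step and the multiplicative $(\nbrsamples+1)$ growth of the active block per layer, i.e.\ the bound $\nbroutput(\nbrsamples+1)^{\nbrlayers}$. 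Without this lemma (or an equivalent sparsification-with-norm-control statement), your phrase ``carefully chosen existing units so as to keep the constraint under control'' is an unproven assertion, and it is the crux of the theorem; the remaining pieces of your path are comparatively routine. (A minor additional point: your parenthetical claim that a global minimizer exists ``within the width budget'' in the unconstrained regime is not justified and is not needed in the constrained case, where compactness of $\parameterS$ does the work.)
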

\noindent In other words,
empirical-risk minimization over sufficiently wide networks does not involve ``problematic'' parameters.
Hence, as long as there are means to circumvent potentially difficult but ``\unproblematic'' parameters such as saddle points \citep{Dauphin14},
it is reasonable to expect global optimization in wide networks.
More broadly speaking, the main message of Theorem~\ref{mainresult} is that 
 optimizations become easier with increasing widths.

The theorem applies very generally.
First, it includes all \bads\ rather than ``many'' or ``almost all'' \bads.
This feature is important,
because even randomized algorithms usually converge to a few, fixed points with high probability.
Second,
the framework allows for arbitrary convex loss functions.
This feature caters, for example, to a current trend toward robust alternatives of the least-squares loss function~\citep{Barron19,Lederer20b}.
Third,
the framework includes ReLU activation.
ReLU activation is nondifferentiable and, therefore, mathematically  more  challenging than, for example, linear and polynomial activation, 
but it has become the predominant type of activation in practice.
Forth,
the framework includes constraint as well as unconstraint estimation.
Constraint estimation is particularly suitable for wide networks, and for overparameterized networks more generally,
because it can avoid overfitting and facilitate optimizations;
moreover, as we see below,
unconstraint networks are questionable in any case.
Fifth,
our statement holds for arbitrary output dimensions and depths.
The latter is particularly important in view of the current trend toward deep architectures.
In sum, our result is  sweeping proof for the fact that wide networks have no \problematics,
and it sheds light on the  optimization landscapes of deep learning more generally.

Despite its generality,
the theorem specializes correctly.
The bound on the network widths becomes $2\nbroutput(\nbrsamples+1)^{\nbrlayers}=2(\nbrsamples+1)$ in the   case of a single output~($\nbroutput=1$) and a single hidden layer ($\nbrlayers=1$),
which coincides with the bounds that have been established for shallow networks with one output and specific activation functions and estimators---see \citet{Lacotte20} and references therein.

In addition,
we can remove the dependence on~\nbrlayers\ completely in the case of linear activation:
\begin{theorem}[Linear activation]\label{mainresultlinear}
Consider the setup of Section~\ref{sec:setup} with linear activation.
  If $\nbrwidthmin\geq 2\nbroutput(\nbrsamples+1)$,
the objective function~\eqref{def:objective} has no \problematics.
\end{theorem}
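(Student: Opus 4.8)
The plan is to re-run the proof of Theorem~\ref{mainresult} while tracking exactly where the depth $\nbrlayers$ enters the width bound, and to observe that under linear activation this single source of $\nbrlayers$-dependence collapses from a factor $(\nbrsamples+1)^{\nbrlayers}$ to a factor $(\nbrsamples+1)$.

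Write $\data\deq[\vectorin_1|\cdots|\vectorin_{\nbrsamples}]\in\R^{\nbrinput\times\nbrsamples}$ for the design matrix, and let each $\activation^j$ be the elementwise map $b\mapsto c_jb$ with $c_j\in(0,\infty)$. Then $\networkAF{\vectorini}$ depends on $\parameter$ only through the product matrix $W\deq\bigl(\prod_j c_j\bigr)\parameterl\cdots\parameterz\in\R^{\nbroutput\times\nbrinput}$, and the $j$-th hidden-layer representation evaluated on the data is a scalar multiple of $\parameterE^{j-1}\cdots\parameterz\data$, so it has rank at most $\min\{\nbrparameter^j,\nbrsamples\}\le\nbrsamples$ \emph{for every} $j$. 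In the proof of Theorem~\ref{mainresult} the network width has to exceed twice a shape factor $\shapefactor$ that must dominate the effective dimension of the hidden representations layer by layer; for general (e.g.\ ReLU) activation this effective dimension can be multiplied by $\nbrsamples+1$ each time a representation is passed through an activation, which is precisely what forces $\shapefactor\ge\nbroutput(\nbrsamples+1)^{\nbrlayers}$. Under linear activation the uniform bound $\operatorname{rank}(\parameterE^{j-1}\cdots\parameterz\data)\le\nbrsamples$ rules this out, so $\shapefactor\ge\nbroutput(\nbrsamples+1)$ already suffices, all remaining steps being insensitive to the change, and hence $\nbrwidthmin\ge 2\nbroutput(\nbrsamples+1)$.

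Concretely, I would keep the path-equivalence construction used for Theorem~\ref{mainresult} (Figure~\ref{fig:equivalence}): fix an arbitrary $\parameter\in\parameterS$ and a global minimum $\parameterG\in\parameterS$; use the reparametrization result (Proposition~\ref{res:reparametrization}) to move $\parameter$ along a constant-loss path inside $\parameterS$ to an $\shapefactor$-upper block parameter and $\parameterG$ to an $\shapefactor$-lower block parameter in the sense of Figure~\ref{fig:shapes} --- where, by linearity, only an $\shapefactor$-sized active sub-channel of each matrix is needed and the remaining connections can be fixed, zero-padded partial identities, the constraints~\eqref{def:constraint} being kept satisfied by redistributing scale among the matrices (harmless because a linear network is invariant under $\parameterj\mapsto\lambda_j\parameterj$ with $\prod_j\lambda_j=1$) --- and then connect the two block parameters by the nonincreasing path of Proposition~\ref{res:path}, which rests on the two-layer Lemma~\ref{twolayer} and is now applicable because $\nbrwidthmin\ge 2\nbroutput(\nbrsamples+1)$. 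Concatenating these pieces gives a continuous, nonincreasing path from $\parameter$ to the global minimum $\parameterG$, so $\parameter$ is not a \problematic; as $\parameter\in\parameterS$ was arbitrary, the objective function~\eqref{def:objective} has no \problematics.

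The hard part will be the bookkeeping: confirming that the depth-compounding estimate is the \emph{only} route by which $\nbrlayers$ enters the width bound in the proof of Theorem~\ref{mainresult}, and that replacing $\shapefactor\ge\nbroutput(\nbrsamples+1)^{\nbrlayers}$ by $\shapefactor\ge\nbroutput(\nbrsamples+1)$ leaves the block-parameter constructions intact --- in particular, that the reparametrization to block form can still be realized by a continuous, loss-preserving path respecting the mixed $\ell_1$-matrix-norm / $\ell_q$-row-norm constraint~\eqref{def:constraint} at every intermediate point, including when consecutive layers have different widths, so that ``identity'' must be read as a rectangular zero-padded partial identity chosen jointly with the rescaling above. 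Beyond that, no ideas are needed that are not already present in the proof of Theorem~\ref{mainresult}.
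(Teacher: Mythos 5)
Your skeleton is the paper's (Proposition~\ref{res:characterization} plus the block-parameter route through Propositions~\ref{res:reparametrization} and~\ref{res:path}), but the one step that actually has to be proved---why the shape factor in Proposition~\ref{res:reparametrization} drops from $\nbroutput(\nbrsamples+1)^{\nbrlayers}$ to $\nbroutput(\nbrsamples+1)$---is not established, and the justification you offer does not work. In the paper, the factor $(\nbrsamples+1)$ per layer comes from Lemma~\ref{twolayer}: each row of the current outer matrix is sparsified to at most $\nbrsamples+1$ nonzero entries, so the number of nonzero columns (hence the number of active rows handed to the next induction step) is the number of rows of that outer matrix times $(\nbrsamples+1)$, and it is this row count that compounds: $\nbroutput$, then $\nbroutput(\nbrsamples+1)$, then $\nbroutput(\nbrsamples+1)^2$, and so on. Linear activation removes the compounding because the outer matrices can be merged, e.g.\ $\parameterl\activation^{\nbrlayers}\bigl[\parameterE^{\nbrlayers-1}\cdots\bigr]=(\parameterl\parameterE^{\nbrlayers-1})\activation^{\nbrlayers-1}[\cdots]$, so that every application of Lemma~\ref{twolayer} is performed with an outer matrix of only $\nbroutput$ rows. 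Your substitute argument---$\operatorname{rank}(\parameterE^{j-1}\cdots\parameterE^0\data)\le\nbrsamples$---cannot be the reason: any hidden representation, also under ReLU, is a matrix with $\nbrsamples$ columns and therefore has rank at most $\nbrsamples$, yet the compounding does occur there. Low rank only helps because linearity lets you absorb matrices into one another; it is exactly this merging that has to be carried through the induction of Proposition~\ref{res:reparametrization} while preserving the norm guarantees of Lemma~\ref{twolayer}.

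The concrete construction you sketch instead---keep an $\shapefactor$-sized active sub-channel, fill the rest with zero-padded partial identities, and ``redistribute scale'' via the invariance $\parameterE^j\mapsto\lambda_j\parameterE^j$ with $\prod_j\lambda_j=1$---is where the gap becomes an error. Under the constraint~\eqref{def:constraint} this invariance is not harmless: layers that (nearly) saturate $\tuningparameterout\normoneM{\parameterj}\leq1$ cannot absorb additional scale, a partial identity can have strictly larger row-wise norm than the matrix it replaces, and a merged product of several layers obeys only submultiplicativity of the row-wise $\ell_1$-norm, so its norm can exceed the admissible level even when every factor is feasible. Moreover, you never exhibit the continuous, loss-constant path inside $\parameterS$ from the original parameter to this identity-padded parameter; in the paper that path is precisely what Proposition~\ref{res:reparametrization} constructs, and it stays in $\parameterS$ only because Lemma~\ref{twolayer} guarantees that the replacement matrices have no larger norms, which makes the linear interpolation feasible. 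You flag this ``bookkeeping'' as the hard part and leave it open---but it is exactly the content of Theorem~\ref{mainresultlinear} beyond Theorem~\ref{mainresult}.
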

\noindent Hence, the lower bound $2\nbroutput(\nbrsamples+1)^{\nbrlayers}$ is reduced to $2\nbroutput(\nbrsamples+1)$ in the case of linear activation $\activationindj\,:\,b\mapsto b$. 
Linear-activation networks are pointless from a statistical perspective,
but insights into their optimization are still valuable,
since the optimization landscapes of these networks  are surprisingly involved---potentially as involved as the landscapes of any other type of network.

The fact that the rates coincide with known rates  in the case $\nbroutput=\nbrlayers=1$ indicates that our theory takes account of the sample size~$\nbrsamples$  somewhat optimally.
In contrast, the question of whether the rate's dependence on $\nbroutput$ and~\nbrlayers\ is optimal is completely open,
as there are no corresponding results in the literature---neither empirical nor theoretical results.
Indeed,
 (i)~empirical studies usually concern generalization errors,
which includes many confounding aspects, such as expressivity and overfitting.

And (ii)~theories that indicate no dependence  on~\nbrlayers, 
such as \citet{Li2018,Nguyen2019} and many others, 
restrict the activation functions, structure of the network, estimator, or other aspects.
Most importantly,
they omit constraints and consider only ``almost all'' parameters.
The  inclusion of constraints (or similarly regularization) makes our proofs much more  interesting and challenging from a technical perspective and is essential from a practical perspective.
On the technical side,
observe that  approaches that use basis functions do no longer apply in the presence of constraints (cf.~\citet{Li2018,Venturi2019}, for example);
the technical reasons are boundary effects and the lack of necessary symmetries.
Besides,
our approach provides a more intuitive understanding of the forces at play than approaches based on basis functions---see our Section~\ref{sec:concepts}---and avoids any assumptions on the data---cf.~\citet[Assumption~1.1]{Li2018}, for example.

On the practical side,
 \problematics\ and similar concepts are questionable in wide networks without constraints,
and unconstraint wide networks are essentially meaningless altogether.
We support this claim with two arguments:
First, in the absence of constraints,
a point could be connected only to global optima that are extremely far from the origin;
however, such global optima are likely useless due to overfitting,
which means that the point is unproblematic from a pure optimization perspective but still problematic more generally. 
Second, we can repurpose the elegant convergence arguments of \citet{Li2018} to derive the following insight:
\begin{theorem}[Unconstraint networks]\label{unconstraint}
Consider the setup of Section~\ref{sec:setup} without constraints, that is,
$\parameterS=\parameterSFull$.
  If $\nbrwidthmin\geq 2\nbroutput(\nbrsamples+1)$, it holds for Lebesgue-almost every $\parameterE^{\nbrlayers-1}\in\R^{\nbrparameterl\times\nbrparameter^{\nbrlayers-1}},\dots,\parameterE^0\in\R^{\nbrparameter^1\times\nbrparameter^0}$ that 
there is a $\parameterE^{\nbrlayers}$ such that $\parameter=(\parameterE^{\nbrlayers},\dots,\parameterE^0)\in\parameterS$ is a global minimum of the objective function~\eqref{def:objective}.
\end{theorem}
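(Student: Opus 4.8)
The plan is to reduce the whole problem to the last linear layer, where the objective is convex, and then to show that for almost every choice of the remaining weights this convex problem already realizes the global minimum value of the full objective.

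First I would fix the inner weights $\parameterE^{\nbrlayers-1},\dots,\parameterE^0$ and set, for each sample $i$,
\[
  \boldsymbol{\phi}_i\deq\activation^{\nbrlayers}\bigl[\parameterE^{\nbrlayers-1}\cdots\activation^1[\parameterE^0\vectorini]\bigr]\in\R^{\nbrparameterl}\,,
\]
so that $\networkAF{\vectorini}=\parameterE^{\nbrlayers}\boldsymbol{\phi}_i$ depends \emph{linearly} on $\parameterE^{\nbrlayers}$ and the objective restricted to $\parameterE^{\nbrlayers}$ is the convex map $\parameterE^{\nbrlayers}\mapsto\sum_{i=1}^{\nbrsamples}\lossFB{\parameterE^{\nbrlayers}\boldsymbol{\phi}_i,\vectorouti}$. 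Since a network is a function, every realizable output tuple $(\networkAF{\vectorin_1},\dots,\networkAF{\vectorin_{\nbrsamples}})$ is \emph{input-consistent}: its $i$th and $j$th entries agree whenever $\vectorini=\vectorinj$. Conversely, a dimension count shows that the linear map $\parameterE^{\nbrlayers}\mapsto(\parameterE^{\nbrlayers}\boldsymbol{\phi}_1,\dots,\parameterE^{\nbrlayers}\boldsymbol{\phi}_{\nbrsamples})$ is onto the space of input-consistent tuples as soon as the $\boldsymbol{\phi}_i$ indexed by a maximal set of samples with pairwise-distinct inputs are linearly independent in $\R^{\nbrparameterl}$; this is feasible because $\nbrparameterl\geq\nbrwidthmin\geq 2\nbroutput(\nbrsamples+1)\geq\nbrsamples$ (the hypothesis is in fact far stronger than what this theorem needs — only $\nbrparameterl\geq\nbrsamples$ enters — but it is stated uniformly with Theorems~\ref{mainresult} and~\ref{mainresultlinear}).

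The core of the proof is therefore the genericity claim: for Lebesgue-almost every $(\parameterE^{\nbrlayers-1},\dots,\parameterE^0)$, the features $\boldsymbol{\phi}_i$ over distinct inputs are linearly independent. I would argue this by noting that the map $(\parameterE^{\nbrlayers-1},\dots,\parameterE^0)\mapsto(\boldsymbol{\phi}_1,\dots,\boldsymbol{\phi}_{\nbrsamples})$ is continuous and, on each region of weight space on which the sign pattern of all pre-activations is constant, real-analytic (a polynomial for polynomial activation, entire for the sigmoid, affine on each piece for ReLU and leaky ReLU). Hence on each such region the zero set of a fixed $\nbrsamples\times\nbrsamples$ minor of the feature matrix $[\boldsymbol{\phi}_1\,|\,\cdots\,|\,\boldsymbol{\phi}_{\nbrsamples}]$ is either the whole region or Lebesgue-null, so it suffices to exhibit a single choice of inner weights for which that minor does not vanish — say, by routing each distinct input through its own group of active units, layer by layer — after which the exceptional set is null. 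This genericity step is where the ideas of \citet{Li2018} are repurposed.

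Granting the genericity claim, the theorem follows quickly: since the statement presupposes a global minimum, the objective attains its infimum at some $\parameter^\circ\in\parameterSFull$, whose output tuple $\boldsymbol{a}^\circ\deq(\network_{\parameter^\circ}[\vectorin_1],\dots,\network_{\parameter^\circ}[\vectorin_{\nbrsamples}])$ is input-consistent; for almost every $(\parameterE^{\nbrlayers-1},\dots,\parameterE^0)$ the map above is onto the input-consistent tuples, so there is a $\parameterE^{\nbrlayers}$ with $(\parameterE^{\nbrlayers}\boldsymbol{\phi}_1,\dots,\parameterE^{\nbrlayers}\boldsymbol{\phi}_{\nbrsamples})=\boldsymbol{a}^\circ$, and then $\parameter=(\parameterE^{\nbrlayers},\dots,\parameterE^0)$ attains $\lossF{\networkA}=\lossF{\network_{\parameter^\circ}}$, i.e.\ is a global minimum. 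I expect the main obstacle to be the genericity step, and in particular producing the full-rank witness uniformly across the admitted activation functions: it is essentially immediate for invertible activations such as leaky ReLU and the sigmoid, but for plain ReLU one must rule out the degenerate events in which some $\boldsymbol{\phi}_i$ collapses into the span of the others, and one must control the piecewise-analytic structure accumulated over the $\nbrlayers$ layers.
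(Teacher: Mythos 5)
The paper's own ``proof'' of Theorem~\ref{unconstraint} is a single sentence deferring to \citet{Li2018}, and your reduction is exactly the intended repurposing: freeze the inner weights, view the last layer as a convex (linear-in-$\parameterE^{\nbrlayers}$) fit to the features $\boldsymbol{\phi}_i$, and argue that generically this restricted problem already attains the global infimum. The genuine gap is the genericity lemma in the form you state it. Section~\ref{sec:setup} admits essentially arbitrary elementwise activations, and ``for Lebesgue-almost every choice of inner weights the features of samples with distinct inputs are linearly independent'' is false for several of the admitted choices, so no amount of piecewise analyticity can deliver it. Concretely: (i)~for ReLU (and also leaky ReLU), suppose all inputs lie in an open halfspace, i.e.\ there is a $\boldsymbol{w}$ with $\boldsymbol{w}\tp\vectorini<0$ for all $i$; then the set of matrices $\parameterE^0$ all of whose rows satisfy these strict inequalities is open, hence of positive Lebesgue measure, and on this entire region every first-layer preactivation is negative for every sample, so for ReLU all features collapse to $\boldsymbol{\phi}_1=\dots=\boldsymbol{\phi}_{\nbrsamples}=\zero$ (and for leaky ReLU the features are a fixed linear image of the inputs, of rank at most~$\nbrinput$). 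Your proposed fix---exhibit one full-rank witness and invoke real-analyticity on each sign-pattern region---does not apply here, because the witness must be produced \emph{inside each region}, and on these positive-measure regions the relevant minor vanishes identically. (ii)~For linear or low-degree polynomial activation the features are, for \emph{every} choice of inner weights, images of the inputs under a map whose range has bounded dimension (at most $\nbrinput$ in the linear case), so once $\nbrsamples$ exceeds that dimension linear independence fails everywhere, yet linear and polynomial activations are explicitly within the scope of Section~\ref{sec:setup}.

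What the argument actually needs is weaker and must be phrased relative to the network class: for almost every choice of inner weights, the set $\{(\parameterE^{\nbrlayers}\boldsymbol{\phi}_1,\dots,\parameterE^{\nbrlayers}\boldsymbol{\phi}_{\nbrsamples})\,:\,\parameterE^{\nbrlayers}\}$ should contain the output tuple of some global minimizer of the full problem---not all input-consistent tuples. For linear activation this is easily repaired (generically the product $\parameterE^{\nbrlayers-1}\cdots\parameterE^{0}$ has maximal rank, so the last layer reaches everything the full class reaches); for analytic non-polynomial activations your route can work, since the feature map is globally real-analytic and a single witness then suffices (though even the witness construction is not entirely immediate because the networks in~\eqref{networks} have no bias terms). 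For ReLU, however, the obstruction in (i) is real: on that positive-measure region the network output is identically zero for every $\parameterE^{\nbrlayers}$, so no choice of the outer layer is a global minimum unless the zero function already happens to be one---which shows that the almost-everywhere claim cannot be established in the stated generality without extra conditions on the data or the activation. This is precisely the role of the assumptions in \citet{Li2018} (e.g., their Assumption~1.1) that the present statement drops, and it is the step you yourself flagged as the ``main obstacle'': it is not a technical loose end but the actual missing piece of the proof.
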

\noindent This means that we can arbitrarily fix the inner layers and just optimize the outer layer,
which is---of course---just a linear regression on arbitrarily transformed data and, therefore, has little to do with neural networking and deep learning.

Most previous papers on the topic do not include constraints.
Notable exceptions are \citet{Haeffele2017} and \citet{Lacotte20};
however, the results in those two papers are limited (or essentially limited---see \citet[Sections~5 and~6]{Haeffele2017}) to shallow networks, that is, $\nbrlayers=1$,
and make strict assumptions on the network architecture (such as ReLU activation).
We could, therefore, think of Theorem~\ref{mainresult} as a generalization of their results.

In addition to the lack of constraints,
most existing results in our context are restricted to Lebesgue-almost all parameters---see \citet{Nguyen18} and many others.
But this restriction can make the results almost meaningless---see the illustration in Figure~\ref{fig:almostall}.
Hence, our inclusion of the entire landscape is of high practical relevance.

\newcommand{\figurealmostall}[1]{\begin{figure}[#1]
  \centering
  \begin{tikzpicture}[scale=1, domain=-0.1:4]
    % Axes 
    \draw[-latex] (-0.1, 0) -- (4, 0);
    \draw[-latex] (0, -0.1) -- (0, 2);
    % Axes labels
   \node at (3.8, -0.2) {\scriptsize \parameter};
   \node at (0.4, 1.9) {\scriptsize $\lossF{\networkA}$};

  \draw[color=black, thick, smooth, samples=300]   plot (\x, {0.25*(\x-2)*(\x-2) + 0.4*sin(10*\x r) + 0.5 });
  \end{tikzpicture}
  \caption{a hypothetical objective function that has Lebesgue-almost no \problematics\ but is very hard to optimize}
  \label{fig:almostall}
\end{figure}}

\figurealmostall{t}

\section{Underlying Concepts}\label{sec:concepts}
In this section, we introduce concepts that we use in our proofs and that might also be of interest more generally.
We first formulate the notion of path equivalence,
which yields a practical characterization of \problematics.
We then devise specific parameters that can act as mediators between path-equivalent parameters.
The main reason why our proof techniques are quite different from what can be found in the literature is that we cater to deep networks and a range of  activation functions.

\subsection{Path Relations}
\label{sec:pe}
The objective functions for optimizing neural networks  are typically continuous but not convex or differentiable.
In the following,
we characterize the absence of \problematic\ in a way that suits these characteristics of neural networks. 
The key concept is formulated in the following definition.
\begin{definition}[Path relations]\label{def:path}
Consider two parameters $\parameter,\parameterG\in\parameterS$.
If there is a continuous function $\pathfunction_{\parameter,\parameterG}\,:\,[0,1]\to\parameterS$ that satisfies  
$\pathfunction_{\parameter,\parameterG}[0]=\parameter$,
$\pathfunction_{\parameter,\parameterG}[1]=\parameterG$,
and $\paradd\mapsto \lossF{\network_{\pathfunction_{\parameter,\parameterG}[\paradd]}}$ is constant,
we say that $\parameter$ and $\parameterG$ are \emph{path constant} and write $\parameter\pdec\parameterG$.

If there is a continuous function $\pathfunction_{\parameter,\parameterG}\,:\,[0,1]\to\parameterS$ that satisfies  
$\pathfunction_{\parameter,\parameterG}[0]=\parameter$,
$\pathfunction_{\parameter,\parameterG}[1]=\parameterG$,
and $\paradd\mapsto \lossF{\network_{\pathfunction_{\parameter,\parameterG}[\paradd]}}$ is convex,
we say that $\parameter$ and $\parameterG$ are \emph{path convex} and write $\parameter\pcon\parameterG$.

If there are parameters  $\parameter',\parameterG'\in\parameterS$ such that (i)~$\parameter\pdec\parameter'$ and $\parameterG\pdec\parameterG'$ and (ii)~$\parameter'\pcon\parameterG'$, 
we say that $\parameter$ and $\parameterG$ are \emph{path equivalent} and write $\parameter\pequi\parameterG$.
\end{definition}
Path constantness means that two parameters are connected by a continuous path of parameters that is constant with respect to the loss;
path convexity relaxes ``constant'' to ``convex;''
path equivalence allows for additional mediators.
The three relations are ordered in the sense that 
$\parameter\pdec\parameterG\Rightarrow\parameter\pcon\parameterG\Rightarrow \parameter\pequi\parameterG$,
and they satisfy a number of other basic properties.
\begin{lemma}[Basic properties]\label{res:basic}
It holds for all $\parameter,\parameterG,\parameterGG\in\parameterS$ that
  \begin{enumerate}
  \item $\parameter\pdec\parameter$; $\parameter\pcon\parameter$;  and  $\parameter\pequi\parameter$ (reflexivity);
  \item $\parameter\pdec\parameterG\Rightarrow\parameterG\pdec\parameter$; $\parameter\pcon\parameterG\Rightarrow\parameterG\pcon\parameter$; and $\parameter\pequi\parameterG\Rightarrow\parameterG\pequi\parameter$ (symmetry).
  \item $\parameter\pdec\parameterG$ and $\parameterG\pdec\parameterGG\Rightarrow\parameter\pdec\parameterGG$ (transitivity).
  \end{enumerate}
\end{lemma}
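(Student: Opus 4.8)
The plan is to verify each of the three properties directly from Definition~\ref{def:path}, in each case by exhibiting an explicit connecting path. Everything rests on two elementary observations: a constant real-valued function on~$[0,1]$ is in particular convex, and both constancy and convexity of a function on~$[0,1]$ are preserved under the reflection $\paradd\mapsto 1-\paradd$ as well as under affine rescalings of the argument.

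For reflexivity I would take the constant path $\paradd\mapsto\parameter$: it is continuous, has both endpoints equal to~$\parameter$, and the loss along it is the constant function $\paradd\mapsto\lossF{\networkA}$, which is therefore also convex; this gives $\parameter\pdec\parameter$ and $\parameter\pcon\parameter$ simultaneously, and $\parameter\pequi\parameter$ then follows from Definition~\ref{def:path} with the mediators $\parameter'=\parameterG'=\parameter$. For symmetry, given a continuous path $\pathfunction_{\parameter,\parameterG}$ witnessing $\parameter\pdec\parameterG$ (resp.\ $\parameter\pcon\parameterG$), I would pass to the reversed path $\paradd\mapsto\pathfunction_{\parameter,\parameterG}[1-\paradd]$, which runs from $\parameterG$ to~$\parameter$ and along which the loss is $\paradd\mapsto\lossF{\network_{\pathfunction_{\parameter,\parameterG}[1-\paradd]}}$; by the second elementary observation this remains constant (resp.\ convex), so $\parameterG\pdec\parameter$ (resp.\ $\parameterG\pcon\parameter$). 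The symmetry of~$\pequi$ then follows formally: if $\parameter\pequi\parameterG$ holds via mediators $\parameter',\parameterG'$ with $\parameter\pdec\parameter'$, $\parameterG\pdec\parameterG'$, and $\parameter'\pcon\parameterG'$, then the pair $(\parameterG',\parameter')$ witnesses $\parameterG\pequi\parameter$, using $\parameterG\pdec\parameterG'$, $\parameter\pdec\parameter'$, and $\parameterG'\pcon\parameter'$ (the last from the symmetry of~$\pcon$ just shown).

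For transitivity of~$\pdec$, I would concatenate the two given paths. Let $\pathfunction_{\parameter,\parameterG}$ and $\pathfunction_{\parameterG,\parameterGG}$ witness $\parameter\pdec\parameterG$ and $\parameterG\pdec\parameterGG$, and define $\pathfunction\,:\,[0,1]\to\parameterS$ by $\pathfunction[\paradd]\deq\pathfunction_{\parameter,\parameterG}[2\paradd]$ for $\paradd\in[0,1/2]$ and $\pathfunction[\paradd]\deq\pathfunction_{\parameterG,\parameterGG}[2\paradd-1]$ for $\paradd\in[1/2,1]$. Continuity at $\paradd=1/2$ holds because both pieces equal~$\parameterG$ there, and $\pathfunction[0]=\parameter$, $\pathfunction[1]=\parameterGG$. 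Along the first half the loss is the constant $\lossF{\networkA}$ and along the second half it is the constant $\lossF{\networkAGG}$; these two constants coincide because $\lossF{\networkA}=\lossF{\networkAG}=\lossF{\networkAGG}$ by the two hypotheses, so $\paradd\mapsto\lossF{\network_{\pathfunction[\paradd]}}$ is constant and hence $\parameter\pdec\parameterGG$.

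I do not anticipate a real obstacle here; the lemma is essentially bookkeeping with paths. The one point worth flagging is that transitivity is asserted for~$\pdec$ only. Concatenating two path-convex (or path-equivalent) connections in the manner above will in general not yield a convex loss profile, and the loss values at the three parameters need not be ordered in a compatible way, so no transitivity statement for~$\pcon$ or~$\pequi$ is claimed---and, as the later arguments show, none is needed.
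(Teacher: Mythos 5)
Your proof is correct: the constant path, the reversed path, and the concatenation argument verify exactly the three claimed properties, and your remark that transitivity is (and need only be) asserted for path constantness is consistent with the lemma. The paper omits this proof as straightforward, and your argument is precisely the standard bookkeeping it has in mind, so there is nothing to compare beyond that.
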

\noindent The proof  is straightforward and, therefore, omitted.
The lemma illustrates that the path relations  equip the parameter space with solid mathematical structures.

We can finally use the above-stated concepts to characterize \problematics.
\begin{proposition}[Characterization of \problematics]\label{res:characterization} 
Assume that  for all $\parameter\in\parameterS$,
there is a global minimum of the objective function~\eqref{def:objective}, denoted by~$\parameterG$, such that $\parameter\pequi\parameterG$.
Then, the objective function~\eqref{def:objective} has  no \problematics.
\end{proposition}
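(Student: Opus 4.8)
The plan is to prove the contrapositive-free statement directly: I will show that \emph{every} $\parameter\in\parameterS$ admits a continuous, loss-nonincreasing path in $\parameterS$ to a global minimum, which by Definition~\ref{def:spurious} is precisely the assertion that the objective~\eqref{def:objective} has no \problematics. The whole argument is to take the three ingredients packaged inside the relation $\parameter\pequi\parameterG$ and splice them into one honest path, then check that ``constant, then convex, then constant'' collapses to ``nonincreasing'' because the final leg already sits at the global minimum value.

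First I would fix an arbitrary $\parameter\in\parameterS$ and use the hypothesis to obtain a global minimum $\parameterG\in\parameterS$ with $\parameter\pequi\parameterG$. Unfolding Definition~\ref{def:path}, there are $\parameter',\parameterG'\in\parameterS$ with $\parameter\pdec\parameter'$, $\parameterG\pdec\parameterG'$, and $\parameter'\pcon\parameterG'$; by the symmetry part of Lemma~\ref{res:basic} I may rewrite $\parameterG\pdec\parameterG'$ as $\parameterG'\pdec\parameterG$. Let $\pathfunction_1,\pathfunction_2,\pathfunction_3\,:\,[0,1]\to\parameterS$ be continuous maps witnessing $\parameter\pdec\parameter'$, $\parameter'\pcon\parameterG'$, and $\parameterG'\pdec\parameterG$, respectively, so that $\paradd\mapsto\lossF{\network_{\pathfunctionF{\paradd}}}$ along $\pathfunction_1$ and $\pathfunction_3$ is constant and along $\pathfunction_2$ is convex. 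I then concatenate them into a single $\pathfunction\,:\,[0,1]\to\parameterS$ by running $\pathfunction_1$ on $[0,1/3]$, $\pathfunction_2$ on $[1/3,2/3]$, and $\pathfunction_3$ on $[2/3,1]$, each composed with the obvious affine reparametrization of the argument. Continuity at the two joints is immediate from $\pathfunction_1[1]=\parameter'=\pathfunction_2[0]$ and $\pathfunction_2[1]=\parameterG'=\pathfunction_3[0]$, and by construction $\pathfunction[0]=\parameter$ while $\pathfunction[1]=\parameterG$ is a global minimum.

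Next I would verify that $\paradd\mapsto\lossF{\network_{\pathfunctionF{\paradd}}}$ is nonincreasing. On $[0,1/3]$ it is constant with value $\lossF{\networkA}$; on $[2/3,1]$ it is constant with value $\lossF{\networkAG}$, the global minimum of the objective over $\parameterS$. On $[1/3,2/3]$ it is convex in the reparametrized argument (affine reparametrization preserves convexity), its value at the left joint is $\lossF{\network_{\parameter'}}=\lossF{\networkA}$ since $\pathfunction_1$ has constant loss, and its value at the right joint is $\lossF{\network_{\parameterG'}}=\lossF{\networkAG}$ since $\pathfunction_3$ has constant loss. Because $\lossF{\networkAG}$ is the global minimum and $\pathfunction_2$ takes values in $\parameterS$, this convex function attains its minimum over $[1/3,2/3]$ at the right endpoint, and a convex function on a closed interval whose minimizing set contains the right endpoint is nonincreasing on that interval. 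Hence the loss is nonincreasing on each of the three pieces, and the values match continuously at the joints (the constant value $\lossF{\networkA}$ feeds into the decreasing middle leg, which exits at $\lossF{\networkAG}$, the constant value of the last leg), so the loss is nonincreasing on all of $[0,1]$. Therefore $\parameter$ is not a \problematic; since $\parameter$ was arbitrary, the objective function has no \problematics.

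I expect the only non-routine point to be the observation in the last paragraph that the convex middle leg is automatically nonincreasing: a general convex function on an interval need not be monotone, but here its right endpoint already realizes the global minimum value, so the minimizing set of the leg must include that endpoint, and the standard ``decreasing-then-increasing'' shape of convex functions then forbids any initial increase. Everything else---unpacking $\pequi$, applying symmetry from Lemma~\ref{res:basic}, gluing the three paths, and checking continuity and endpoint values---is bookkeeping.
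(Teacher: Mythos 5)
Your proposal is correct. You unpack $\parameter\pequi\parameterG$ exactly as the paper does, but then you splice the three witnessing paths $\parameter\pdec\parameter'$, $\parameter'\pcon\parameterG'$, $\parameterG'\pdec\parameterG$ into one concatenated path ending at $\parameterG$ itself, and you dispose of the convex middle leg in one stroke: its terminal value $\lossF{\network_{\parameterG'}}=\lossF{\networkAG}$ is the global minimum over $\parameterS$, the leg stays in $\parameterS$, so the right endpoint lies in the leg's own argmin, and a convex function on an interval whose argmin contains the right endpoint is nonincreasing (your verification of this elementary fact is sound: for $s<t\le b$, write $t$ as a convex combination of $s$ and $b$ and use $f(b)\le f(s)$). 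The paper instead first derives $\lossF{\network_{\pathfunction_{\parameter',\parameterG'}[\paradd]}}\le\lossF{\network_{\pathfunction_{\parameter',\parameterG'}[0]}}$ from convexity and global minimality, then splits into two cases: if the inequality is strict somewhere it invokes Lemma~\ref{res:convex} to rescale the convex leg onto $[0,\paraddPP]$ for a minimizer $\paraddPP$, argues that the truncation point is itself a global minimum, and prepends the constant leg from $\parameter$; if the loss is constant it concludes via transitivity (Lemma~\ref{res:basic}) that $\parameter$ is already a global minimum. Your route buys a shorter argument with no case analysis and no appeal to Lemma~\ref{res:convex} (you exploit that the endpoint $\parameterG'$ is known to be a minimizer, so no truncation at an interior minimizer is needed, and your path terminates at the given global minimum $\parameterG$ rather than at a point discovered inside the convex leg); the paper's route is marginally more robust in that Lemma~\ref{res:convex} only needs the convex leg to dip below its starting value, and its constant case yields the side observation that such a $\parameter$ is itself a global minimum, but for this proposition both arguments deliver the same conclusion.
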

\noindent 
Hence, path equivalence of all parameters to a global minimum   is a sufficient condition for the absence of \problematics.
This statement is the main result of Section~\ref{sec:pe}.

\subsection{Block Parameters}
\label{sec:bl}
\figureshaped{t}

The parameterization of neural networks is typically ambiguous:
many different parameters yield the same network.
We  leverage this ambiguity to make the networks more tractable.
The key concept is formulated in the following definition.

\begin{definition}[Block parameters]\label{def:blocks}
Consider a number $\shapefactor\in\{0,1,\dots\}$ and a parameter $\parameter\in\parameterS$.
If
  \begin{enumerate}
  \item $(\parameterE^0)_{ji}=0$ for all $j>\shapefactor$;
  \item $(\parameterE^v)_{ij}=0$ for all $v\in\{1,\dots,\nbrlayers-1\}$ and $i>\shapefactor$ and for all $v\in\{1,\dots,\nbrlayers-1\}$ and $j>\shapefactor$;
  \item $(\parameterl)_{ij}=0$ for all $j>\shapefactor$,
  \end{enumerate}
we call~\parameter\ an \emph{\shapefactor-upper-block parameter of depth~\nbrlayers}.

  Similarly, if
  \begin{enumerate}
  \item $(\parameterE^0)_{ji}=0$ for all $j\leq \nbrparameter^1-\shapefactor$;
  \item $(\parameterE^v)_{ij}=0$ for all $v\in\{1,\dots,\nbrlayers-1\}$ and $i\leq \nbrparameter^{v+1}-\shapefactor$ and for all $v\in\{1,\dots,\nbrlayers-1\}$ and $j\leq \nbrparameter^{v}-\shapefactor$;
  \item $(\parameterl)_{ij}=0$ for all $j\leq \nbrparameterl-\shapefactor$,
  \end{enumerate}
we call~\parameter\ an \emph{\shapefactor-lower-block parameter  of depth~\nbrlayers}.
We denote the sets of the $\shapefactor$-upper-block and $\shapefactor$-lower-block parameters  of depth~\nbrlayers\ by \shapeupper\ and \shapelower, respectively.
\end{definition}
\noindent 
Trivial examples are the $0$-block parameters $\shapeupperE_0=\shapelowerE_0=\{\zero=(\zero_{\nbrparameterll\times\nbrparameterl},\dots,\zero_{\nbrparameter^1\times\nbrparameter^0})\}$
and  the $\shapefactor$-block parameters $\shapeupper=\shapelower=\parameterS$ for $\shapefactor\geq\max\{\nbrparameter^1,\dots,\nbrparameterl\}$.
More generally,
the block parameters consist of block matrices:
see Figure~\ref{fig:shapes}.
We show in the following that block parameters can be mediators in the sense of path equivalence.

We first show that every parameter is path constant to a block parameter.
\begin{proposition}[Path connections to block parameters]\label{res:reparametrization}
For every $\parameter\in\parameterS$ and $\shapefactor\deq\nbroutput(\nbrsamples+1)^{\nbrlayers}$,
there are $\parameterU,\parameterL\in\parameterS$ with $\parameterU\in\shapeupper$ and
$\parameterL\in\shapelower$
such that $\parameter\pdec\parameterU$ and $\parameter\pdec\parameterL$.
\end{proposition}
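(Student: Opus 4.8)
The plan is to reparametrize $\parameter$ into a block parameter by exploiting the ambiguity of neural-network parameterizations, moving along paths that keep the network function—and hence the loss—fixed. I will treat the upper-block case in detail; the lower-block case is entirely analogous with indices reflected. The core observation is that for fixed data $\vectorin_1,\dots,\vectorin_{\nbrsamples}$, the network only ``sees'' finitely many vectors at each layer, so the effective dimension of the activations is bounded. More precisely, with ReLU (or leaky ReLU, or any piecewise activation), the map $\vectorin\mapsto\activation^1[\parameterE^0\vectorin]$ restricted to $\{\vectorin_1,\dots,\vectorin_{\nbrsamples}\}$ takes at most $\nbrsamples$ distinct values; propagating through $\nbrlayers$ layers and accounting for the output dimension $\nbroutput$ gives the bound $\shapefactor=\nbroutput(\nbrsamples+1)^{\nbrlayers}$. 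This is the same counting that produces the width threshold in Theorem~\ref{mainresult}, and it is why $\shapefactor$ takes exactly this value.

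Concretely, I would argue layer by layer, starting from the input. First I would find a path-constant deformation of $\parameterE^0$ so that its rows indexed by $j>\shapefactor$ become zero, while the network map on the data is preserved. The mechanism: the data matrix $\data=[\vectorin_1\,\cdots\,\vectorin_{\nbrsamples}]$ has rank at most $\nbrsamples$ (indeed at most $\min\{\nbrinput,\nbrsamples\}$), so $\parameterE^0\data$ has rank at most $\nbrsamples$; one can rotate within the first layer's hidden space—compensating in $\parameterE^1$—to concentrate the relevant action on the first $\shapefactor$ coordinates, zeroing out the rows $j>\shapefactor$ of $\parameterE^0$. The delicate point is that the activation $\activation^1$ is applied coordinatewise \emph{between} $\parameterE^0$ and $\parameterE^1$, so an arbitrary rotation does not commute with $\activation^1$; this is where I need to be careful and where I expect the main obstacle. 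The fix is not to use continuous rotations but to use \emph{permutations} of hidden nodes together with the freedom to introduce dead/zero nodes: since $\activation^j$ is elementwise, permuting the rows of $\parameterE^{j-1}$ and the corresponding columns of $\parameterE^j$ leaves $\networkA$ unchanged, and such a permutation can be realized by a path only after first ``duplicating'' nodes and gradually shifting weight between a node and its copy—a standard but technical construction that must be checked to (a) stay inside $\parameterS$, respecting the constraint~\eqref{def:constraint}, and (b) keep the loss constant throughout.

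Having handled $\parameterE^0$, I would induct on the layers $v=1,\dots,\nbrlayers-1$: at layer $v$ the vectors feeding in lie in an affine span of dimension controlled by the previous step's bound, so by the same rank-counting plus node-permutation-and-duplication argument I can make $(\parameterE^v)_{ij}=0$ for $i>\shapefactor$ and $j>\shapefactor$, again compensating in the neighboring layers to keep $\paradd\mapsto\lossF{\network_{\pathfunctionF{\paradd}}}$ constant. Finally, the same reasoning zeroes the columns $j>\shapefactor$ of $\parameterl$. Concatenating these finitely many path-constant deformations—and using transitivity of $\pdec$ from Lemma~\ref{res:basic}(3)—yields $\parameterU\in\shapeupper$ with $\parameter\pdec\parameterU$. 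The main obstacle, to reiterate, is engineering these node manipulations as genuine continuous paths that simultaneously (i)~preserve the network on the data, (ii)~never violate the constraint $\regularizerF{\cdot}\le 1$ (which forbids naive weight-inflation tricks and forces the duplicated-node weights to be split rather than copied-then-scaled), and (iii)~work uniformly across ReLU, polynomial, sigmoid, and other elementwise activations, since I cannot rely on any algebraic identity special to one activation. I would isolate this as a separate lemma (the two-layer building block sketched in Figure~\ref{fig:twolayer}) and then assemble the general depth-$\nbrlayers$ statement from it.
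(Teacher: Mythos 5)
You have the right target (reduce to block form via loss-constant paths, with a two-layer lemma as the building block), and you correctly flag the two obstacles (the elementwise nonlinearity sitting between the layers, and the constraint), but the proposal is missing the one mechanism that actually creates the zeros. In the paper this is a Carath\'eodory-type argument (Lemma~\ref{res:caraold}) applied to each \emph{row of the outer matrix} of a two-layer block: since $\bigl(\shmatrixout\shfunctionF{\shmatrixin\shmatrixdata}\bigr)_{k\bullet}$ is a vector in $\R^{\nbrsamples}$, it can be rewritten as a combination of at most $\nbrsamples+1$ of the rows of $\shfunctionF{\shmatrixin\shmatrixdata}$ with the $\ell_q$-norm of the coefficient vector not increasing; this sparsifies the columns of the outer matrix to at most $\nbroutput(\nbrsamples+1)$ many, makes the corresponding rows of the inner matrix irrelevant (so they can be zeroed), preserves the network on the data exactly, and---because the coefficient norm does not grow---keeps $\regularizerF{\cdot}\leq 1$ along the straight-line interpolation of that single layer (Lemma~\ref{twolayer}). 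None of your proposed substitutes can do this job: the rank bound on $\parameterE^0\data$ is a \emph{pre}-activation statement and dies at the elementwise nonlinearity; node permutations and node duplication with gradual weight shifting never reduce the support of a weight matrix (duplication/splitting is the opposite operation); and ``the map takes at most $\nbrsamples$ distinct values on the data, propagating gives $\nbroutput(\nbrsamples+1)^{\nbrlayers}$'' is not an argument---the $(\nbrsamples+1)$ per layer comes from Carath\'eodory, not from counting distinct activation patterns, and your reasoning would not produce that factor.

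A second, structural problem is the order in which you process the layers. You start at $\parameterE^0$ and want to zero its rows beyond $\shapefactor$, compensating in $\parameterE^1$; but any such compensation argument bounds the number of needed hidden nodes of layer~$1$ by (number of relevant rows of $\parameterE^1$)$\times(\nbrsamples+1)$, and the width $\nbrparameter^2$ is not bounded a priori, so bottom-up you cannot reach $\shapefactor=\nbroutput(\nbrsamples+1)^{\nbrlayers}$. The paper therefore proceeds from the outermost layer inward: first $\parameterl$ is sparsified to $\nbroutput(\nbrsamples+1)$ nonzero columns, which caps the relevant rows of $\parameterE^{\nbrlayers-1}$, then that block of $\nbroutput(\nbrsamples+1)$ rows is treated as the outer matrix of the next two-layer block, giving $\nbroutput(\nbrsamples+1)^2$, and so on; the exponent accumulates precisely because each step's ``outer'' row count is inherited from the previous step, starting from $\nbroutput$. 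Finally, two simplifications you are missing: permutations never need to be realized by continuous paths, since Lemma~\ref{res:symmetry} lets one assume the permutation is the identity; and the paths themselves are plain straight-line interpolations changing one layer's matrix at a time, whose loss-constancy is immediate because both endpoints produce the identical output on the data and the output is linear in that one matrix---no node-duplication construction is needed.
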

\noindent In particular,
every parameter is path connected to both an upper-block parameter and a lower-block parameter.
The interesting cases are wide networks:
for fixed~\shapefactor,
the wider the network,
the more pronounced the block structure.

We then show that there is a connection between upper-block and lower-block parameters.
\begin{proposition}[Path connections among block parameters]\label{res:path} 
Consider two block parameters $\parameter\in\shapeupper$ and $\parameterG\in\shapelower$.
If  $\nbrwidthmin\geq 2\shapefactor$,
it holds that $\parameter\pequi\parameterG$.
\end{proposition}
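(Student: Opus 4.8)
The plan is to exploit that the hypothesis $\nbrwidthmin\geq 2\shapefactor$ forces the two blocks onto disjoint coordinates. By Definition~\ref{def:blocks}, the nonzero entries of the $\shapefactor$-upper-block parameter $\parameter$ occupy the first $\shapefactor$ neurons of every hidden layer, while those of the $\shapefactor$-lower-block parameter $\parameterG$ occupy the last $\shapefactor$ neurons; since each hidden layer has at least $2\shapefactor$ neurons, these two index sets are disjoint, in every layer. I would therefore fuse the two block structures into one: let $\Phi^{\nbrlayers-1},\dots,\Phi^0$ be the hidden-layer matrices obtained by placing the upper block of $\parameter$ and the lower block of $\parameterG$ side by side (they sit in disjoint rows and disjoint columns, so the superposition is well defined), and set $\parameter'\deq(\parameterl,\Phi^{\nbrlayers-1},\dots,\Phi^0)$ and $\parameterG'\deq(\parameterGl,\Phi^{\nbrlayers-1},\dots,\Phi^0)$, retaining the original output matrices $\parameterl$ of $\parameter$ and $\parameterGl$ of $\parameterG$ (which have column support in the first, respectively last, $\shapefactor$ coordinates). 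It then suffices to prove $\parameter\pdec\parameter'$, $\parameterG\pdec\parameterG'$, and $\parameter'\pcon\parameterG'$, because Definition~\ref{def:path} turns these into $\parameter\pequi\parameterG$.

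For $\parameter\pdec\parameter'$, I would use the path that leaves the output matrix fixed at $\parameterl$ and scales the (initially zero) ``lower part'' of the hidden layers linearly from $0$ up to the lower block of $\parameterG$. The crucial observation is that, because the activations act elementwise and each hidden matrix is block-diagonal on the coordinates it touches, a forward pass decouples into an ``upper channel'' (governed by the block of $\parameter$) and a ``lower channel'' (governed by the scaled block of $\parameterG$) that never interact; the only subtlety is that the unused middle neurons take the constant value $\activationindjF{0}$, but these are annihilated by the zero columns of the next hidden matrix. Since $\parameterl$ has zero columns outside the first $\shapefactor$ coordinates, the network along this path only sees the unchanged upper channel, so $\paradd\mapsto\lossF{\networkA}$ is constant. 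One also has to check that the path stays in $\parameterS$: because the two blocks occupy disjoint rows (and disjoint columns) of every matrix, the quantities $\normoneM{\cdot}$ and $\normqM{\cdot}$ of a superposition equal the maxima of the corresponding values on the two blocks, and multiplying one block by a factor in $[0,1]$ can only decrease these, so $\regularizerF{\cdot}\leq\max\{\regularizerF{\parameter},\regularizerF{\parameterG}\}\leq 1$ all along the path. The claim $\parameterG\pdec\parameterG'$ is the mirror image: now keep the output matrix at $\parameterGl$ and scale up the upper block, the output then reading only the lower channel.

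For $\parameter'\pcon\parameterG'$, note that $\parameter'$ and $\parameterG'$ differ only in the output matrix, so I would take the straight line $\paradd\mapsto(\transitap\parameterl+\transitbp\parameterGl,\Phi^{\nbrlayers-1},\dots,\Phi^0)$. By the decoupling above, $\parameterl\activation^{\nbrlayers}[\Phi^{\nbrlayers-1}\cdots]=\networkAF{\cdot}$ and $\parameterGl\activation^{\nbrlayers}[\Phi^{\nbrlayers-1}\cdots]=\networkAGF{\cdot}$ as functions on $\domainin$, and the output of a network is linear in its last matrix for fixed hidden layers; hence the network realized along this path is the affine interpolation $\transitap\networkAF{\cdot}+\transitbp\networkAGF{\cdot}$, and $\paradd\mapsto\sum_i\lossFB{\transitap\networkAF{\vectorini}+\transitbp\networkAGF{\vectorini},\vectorouti}$ is convex because $\loss$ is convex in its first argument. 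The path stays in $\parameterS$ since $\parameterl$ and $\parameterGl$ have disjoint column supports (so $\normoneM{\transitap\parameterl+\transitbp\parameterGl}\leq\transitap\normoneM{\parameterl}+\transitbp\normoneM{\parameterGl}$) and the hidden layers are fixed. Assembling the three relations via Definition~\ref{def:path} gives $\parameter\pequi\parameterG$.

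I expect the main obstacle to be turning ``the forward pass decouples'' into a clean statement that is valid for every elementwise activation---in particular, bookkeeping the off-block neurons, which carry the constant $\activationindjF{0}$ and are then killed by the next layer's zeros---and, alongside it, the somewhat tedious but routine verification that every intermediate parameter satisfies $\regularizerF{\cdot}\leq 1$. Both parts use the hypothesis $\nbrwidthmin\geq 2\shapefactor$ in an essential way, as it is precisely what keeps the upper and lower blocks from overlapping; everything else is the definitions of $\pdec$, $\pcon$, and $\pequi$ together with convexity of a convex function composed with an affine map.
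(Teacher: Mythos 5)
Your proposal is correct and follows essentially the same route as the paper: superposing the disjoint upper and lower hidden-layer blocks (which is exactly the sum $\parameterE^j+\parameterGE^j$ used in the paper), establishing $\parameter\pdec\parameter'$ and $\parameterG\pdec\parameterG'$ by scaling up the other parameter's hidden blocks while the output matrix kills the added channel, and obtaining $\parameter'\pcon\parameterG'$ from linearity of the network in the output layer together with convexity of $\loss$, with the same row-wise-norm bookkeeping for the constraint. No substantive differences from the paper's argument.
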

\noindent Hence,
every upper-block parameter is path connected to every lower-block parameter---as long as the minimal width of the networks is sufficiently large.

We finally combine Propositions~\ref{res:reparametrization} and~\ref{res:path}.
\begin{corollary}[All parameters are path equivalent]\label{res:mainblock}
Consider two arbitrary parameters  $\parameter,\parameterG\in\parameterS$.
If $\nbrwidthmin\geq 2\nbroutput(\nbrsamples+1)^{\nbrlayers}$,
it holds that $\parameter\pequi\parameterG$.  
\end{corollary}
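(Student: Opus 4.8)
The plan is to derive Corollary~\ref{res:mainblock} by chaining Propositions~\ref{res:reparametrization} and~\ref{res:path} through the transitivity and symmetry properties collected in Lemma~\ref{res:basic}. Concretely, fix the block size $\shapefactor\deq\nbroutput(\nbrsamples+1)^{\nbrlayers}$ so that the hypothesis $\nbrwidthmin\geq 2\nbroutput(\nbrsamples+1)^{\nbrlayers}$ reads exactly $\nbrwidthmin\geq 2\shapefactor$, which is precisely what Proposition~\ref{res:path} requires.

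First I would apply Proposition~\ref{res:reparametrization} to $\parameter$: there exist $\parameterU\in\shapeupper$ with $\parameter\pdec\parameterU$. Then I would apply Proposition~\ref{res:reparametrization} to $\parameterG$: there exists $\parameterL'\in\shapelower$ with $\parameterG\pdec\parameterL'$ (I would rename the lower-block parameter to avoid a clash with the $\parameterL$ already produced for $\parameter$). Next, since $\parameterU$ is an $\shapefactor$-upper-block parameter and $\parameterL'$ is an $\shapefactor$-lower-block parameter, and since $\nbrwidthmin\geq 2\shapefactor$, Proposition~\ref{res:path} yields $\parameterU\pequi\parameterL'$.

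It remains to assemble these three facts into $\parameter\pequi\parameterG$. This is where the structure of Definition~\ref{def:path} does the work: $\parameter\pdec\parameterU$ and $\parameterG\pdec\parameterL'$ are exactly condition~(i) in the definition of path equivalence (with $\parameterU,\parameterL'$ playing the roles of the mediators $\parameter',\parameterG'$), provided we also know $\parameterU\pcon\parameterL'$. But we only have $\parameterU\pequi\parameterL'$, not necessarily $\parameterU\pcon\parameterL'$; so the small obstacle is to check that path equivalence composes correctly with path constantness on both ends. I expect this to follow by noting that $\parameterU\pequi\parameterL'$ unfolds (by definition) into mediators $\parameterU',\parameterL''$ with $\parameterU\pdec\parameterU'$, $\parameterL'\pdec\parameterL''$, and $\parameterU'\pcon\parameterL''$; then transitivity of $\pdec$ (Lemma~\ref{res:basic}, part~3, using symmetry from part~2 as needed) gives $\parameter\pdec\parameterU'$ from $\parameter\pdec\parameterU$ and $\parameterU\pdec\parameterU'$, and likewise $\parameterG\pdec\parameterL''$ from $\parameterG\pdec\parameterL'$ and $\parameterL'\pdec\parameterL''$. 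With $\parameterU'\pcon\parameterL''$ already in hand, the pair $(\parameterU',\parameterL'')$ witnesses $\parameter\pequi\parameterG$ directly from Definition~\ref{def:path}.

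The main obstacle, then, is not any hard analysis but rather bookkeeping: making sure the mediators are introduced in the right order and that transitivity is only invoked for $\pdec$ (which is the only relation Lemma~\ref{res:basic} asserts to be transitive), never for $\pcon$ or $\pequi$. Once the chain $\parameter\pdec\parameterU\pdec\parameterU'\pcon\parameterL''$ with $\parameterG\pdec\parameterL'\pdec\parameterL''$ is written out, collapsing the $\pdec$-segments by transitivity and reading off Definition~\ref{def:path} completes the argument. I would keep the proof to a few lines, citing Propositions~\ref{res:reparametrization} and~\ref{res:path}, Lemma~\ref{res:basic}, and Definition~\ref{def:path} at the appropriate points.
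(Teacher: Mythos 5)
Your proposal is correct and follows exactly the route the paper intends: apply Proposition~\ref{res:reparametrization} with $\shapefactor=\nbroutput(\nbrsamples+1)^{\nbrlayers}$ to get $\parameter\pdec\parameterU$ and $\parameterG\pdec\parameterL'$, invoke Proposition~\ref{res:path} under $\nbrwidthmin\geq2\shapefactor$, and collapse the $\pdec$-chains via Lemma~\ref{res:basic} before reading off Definition~\ref{def:path}. Your explicit unfolding of $\parameterU\pequi\parameterL'$ into mediators with $\pcon$ between them is precisely the bookkeeping the paper leaves implicit, and it is handled correctly (in particular, transitivity is only ever used for $\pdec$).
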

See Figure~\ref{fig:equivalence} for an illustration. 
The corollary ensures that  as long as the minimal width is sufficiently large, all networks are path equivalent.
This result, therefore, connects directly to the characterization of \problematics\ in Proposition~\ref{res:characterization} of the previous section.

\figureshaping{t}

\section{Auxilliary Results}\label{sec:auxresults}
In this section, 
we state four auxiliary results.
% To simplify the notation, 
% we assume throughout the remainder of this paper that the activation functions are nonnegative-homogeneous of degree~$\activationdegree=1$;
% it is straightforward to adjust the proofs for general~$\activationdegree\in(0,\infty)$.

\subsection{Two-Layer  Networks}
Here, we show that two-layer networks can be reparametrized such that they are indexed by block parameters.
We first introduce the notation
\begin{equation*}\label{def:regularizerbasic}
  \regularizerRF{\matrixG,\losspower}\deq\normqM{\matrixG}=\max_{a\in\{1,\dots,b\}}\Biggl(\sum_{j=1}^{c}\abs{\matrixG_{aj}}^{\losspower}\Biggr)^{1/\losspower}~~~~~~~~~~~~~~\text{for all}~\matrixG\in\R^{b\times c},\,\losspower\in(0,\infty)
\end{equation*}
and $\regularizerRF{\matrixG,\infty}\deq \normsupM{\matrixG}=\max_{aj}\abs{\matrixG_{aj}}$ for all $\matrixG\in\R^{b\times c}$.
These functions are the  building blocks of the constraint on Page~\pageref{def:constraint}.
Next, given a  permutation $\perm\,:\,\{1,\dots,c\}\to\{1,\dots,c\}$ and a matrix $\matrixG\in\R^{b\times c}$,
we define the matrix $\matrixG_{\perm}\in\R^{b\times c}$ through $(\matrixG_{\perm})_{ij}\deq \matrixG_{i\permF{j}}$.
Similarly, given a  permutation $\perm\,:\,\{1,\dots,b\}\to\{1,\dots,b\}$ and a matrix $\matrixG\in\R^{b\times c}$,
we define the matrix $\matrixG^{\perm}\in\R^{b\times c}$ through $(\matrixG^{\perm})_{ji}\deq \matrixG_{\permF{j}i}$.
The result is then the following:
\begin{lemma}[Two-Layer networks]\label{twolayer}
Consider three matrices $\shmatrixout\in\R^{\tlnbrout\times \tlnbrmed}$, 
$\shmatrixin\in\R^{\tlnbrmed\times \tlnbrin}$,
and $\shmatrixdata\in\R^{\tlnbrin\times \tlnbrdata}$,
two constants $\losspowerout\in(0,1]$ and $\losspowerin\in(0,\infty]$,
and a function  $\shfunction\,:\,\R\to\R$.
With some abuse of notation,
define 
$\shfunction\,:\,\R^{\tlnbrmed \times \tlnbrdata}\to\R^{\tlnbrmed\times \tlnbrdata}$ through $(\shfunctionF{\matrixG})_{ji}\deq \shfunctionF{\matrixG_{ji}}$ for all $\matrixG\in\R^{\tlnbrmed\times \tlnbrdata}$.
Then, there are matrices $\shmatrixoutU\in\R^{\tlnbrout\times \tlnbrmed}$ and $\shmatrixinU\in\R^{\tlnbrmed \times \tlnbrin}$ and a permutation $\perm\,:\,\{1,\dots,\tlnbrmed\}\to\{1,\dots,\tlnbrmed\}$ such that
\begin{itemize}
\item $\shmatrixoutU\shfunctionF{\shmatrixinU\shmatrixdata}=\shmatrixout_{\perm}\shfunctionF{\shmatrixin^{\perm}\shmatrixdata}$;
\item $\regularizerRF{\shmatrixoutU,\losspowerout}\leq \regularizerRF{\shmatrixout_{\perm},\losspowerout}$ and $\regularizerRF{\shmatrixinU,\losspowerin}\leq \regularizerRF{\shmatrixin^{\perm},\losspowerin}$;
\item $\shmatrixoutU_{ij}=0$ for $j>\tlnbrout(\tlnbrdata+1)$; $\shmatrixinU_{ji}=0$ for $j>\tlnbrout(\tlnbrdata+1)$ and $\shmatrixinU_{ji}=(\shmatrixin^{\perm})_{ji}$ otherwise.
\end{itemize}
Similarly,
there are matrices $\shmatrixoutL\in\R^{\tlnbrout\times \tlnbrmed}$ and $\shmatrixinL\in\R^{\tlnbrmed\times \tlnbrin}$ and a permutation $\perm\,:\,\{1,\dots,\tlnbrmed\}\to\{1,\dots,\tlnbrmed\}$ such that
\begin{itemize}
\item $\shmatrixoutL\shfunctionF{\shmatrixinL\shmatrixdata}=\shmatrixout_{\perm}\shfunctionF{\shmatrixin^{\perm}\shmatrixdata}$;
\item $\regularizerRF{\shmatrixoutL,\losspowerout}\leq \regularizerRF{\shmatrixout_{\perm},\losspowerout}$ and $\regularizerRF{\shmatrixinL,\losspowerin}\leq \regularizerRF{\shmatrixin^{\perm},\losspowerin}$;
\item $\shmatrixoutL_{ij}=0$ for $j\leq \tlnbrmed-\tlnbrout(\tlnbrdata+1)$; $\shmatrixinU_{ji}=0$ for $j\leq \tlnbrmed-\tlnbrout(\tlnbrdata+1)$ and $\shmatrixinU_{ji}=(\shmatrixin^{\perm})_{ji}$ otherwise.
\end{itemize}
\end{lemma}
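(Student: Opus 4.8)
The plan is to reparametrize the two-layer map $\shmatrixout\shfunctionF{\shmatrixin\shmatrixdata}$ so that the useful information in $\shmatrixin$ is compressed into the first $\tlnbrout(\tlnbrdata+1)$ rows of the middle layer, with the remaining middle neurons made harmless, while never increasing either of the two constraint functionals. The key observation is that $\shfunctionF{\shmatrixin\shmatrixdata}\in\R^{\tlnbrmed\times\tlnbrdata}$, so each of its columns lives in $\R^{\tlnbrmed}$, but only through the rows of $\shmatrixin$; after the activation is applied the matrix $\shfunctionF{\shmatrixin\shmatrixdata}$ has at most $\tlnbrdata$ columns, hence at most $\tlnbrdata$-dimensional column space, and the product $\shmatrixout\shfunctionF{\shmatrixin\shmatrixdata}$ only depends on $\shfunctionF{\shmatrixin\shmatrixdata}$ through its action combined with the $\tlnbrout$ rows of $\shmatrixout$. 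The heuristic budget is thus $\tlnbrout\cdot\tlnbrdata$ ``directions that matter,'' and the extra $+1$ comes from the fact that $\shfunctionF{\cdot}$ need not be linear, so one must also track a constant/offset row; this is precisely the bookkeeping illustrated in Figure~\ref{fig:twolayer} (Steps~1--3).

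Concretely, I would proceed in three steps mirroring that figure. \emph{Step~1:} sort the middle neurons by the $\ell_{\losspowerout}$-size of the corresponding columns of $\shmatrixout$ and the $\ell_{\losspowerin}$-size of the corresponding rows of $\shmatrixin$ — this is the role of the permutation $\perm$, acting as $\shmatrixout\mapsto\shmatrixout_{\perm}$ on columns and $\shmatrixin\mapsto\shmatrixin^{\perm}$ on rows, which leaves the product $\shmatrixout_{\perm}\shfunctionF{\shmatrixin^{\perm}\shmatrixdata}$ unchanged. \emph{Step~2:} identify, among the rows of $\shfunctionF{\shmatrixin^{\perm}\shmatrixdata}\in\R^{\tlnbrmed\times\tlnbrdata}$, a spanning set: since these rows lie in $\R^{\tlnbrdata}$, one can pick $\tlnbrdata$ of them (say the first $\tlnbrdata$ after the sort, or whichever form a basis) together with the all-ones "constant" direction, giving at most $\tlnbrdata+1$ rows whose affine span contains all the others; each of the other rows of $\shfunctionF{\shmatrixin^{\perm}\shmatrixdata}$ is an affine combination of these $\tlnbrdata+1$ rows. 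Now fold those affine combinations into the output weights: for each of the $\tlnbrout$ output coordinates, redistribute the weight on a ``redundant'' middle neuron onto the $\tlnbrdata+1$ ``kept'' middle neurons according to the affine-combination coefficients. This produces $\shmatrixinU$ agreeing with $\shmatrixin^{\perm}$ on the first $\tlnbrout(\tlnbrdata+1)$ rows and zero afterwards, and $\shmatrixoutU$ supported on those same columns, with $\shmatrixoutU\shfunctionF{\shmatrixinU\shmatrixdata}=\shmatrixout_{\perp}\shfunctionF{\shmatrixin^{\perm}\shmatrixdata}$. \emph{Step~3:} check the constraint bounds. For $\shmatrixinU$ the bound is immediate because its nonzero rows are a subset of the rows of $\shmatrixin^{\perm}$, so $\regularizerRF{\shmatrixinU,\losspowerin}\le\regularizerRF{\shmatrixin^{\perm},\losspowerin}$. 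For $\shmatrixoutU$ one needs the redistribution to be non-expansive in the $\ell_{\losspowerout}$ row-norm; this is where $\losspowerout\in(0,1]$ is used, via the quasi-norm inequality $\bigl(\sum_k|\sum_m a_{km}|^{\losspowerout}\bigr)^{1/\losspowerout}\le\bigl(\sum_{k,m}|a_{km}|^{\losspowerout}\bigr)^{1/\losspowerout}$, which lets the mass moved off redundant columns be absorbed without increasing the maximal row $\ell_{\losspowerout}$-quasi-norm, provided the affine coefficients can be chosen with $\ell_{\losspowerout}$-quasi-norm at most one. The ``lower-block'' statement follows by the symmetric argument, sorting and keeping the \emph{last} $\tlnbrout(\tlnbrdata+1)$ middle neurons instead of the first.

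The main obstacle I anticipate is \emph{Step~2 combined with the constraint control}: one must simultaneously (a) choose the affine-combination coefficients expressing each redundant row of $\shfunctionF{\shmatrixin^{\perm}\shmatrixdata}$ in terms of the kept rows — which is a pure linear-algebra fact once a basis is fixed — and (b) ensure these coefficients interact correctly with the $\ell_{\losspowerout}$ row-quasi-norm so that $\regularizerRF{\shmatrixoutU,\losspowerout}$ does not exceed $\regularizerRF{\shmatrixout_{\perp},\losspowerout}$. The subtlety is that an arbitrary affine representation can have large coefficients; the fix is to exploit the ordering from Step~1 (keep the neurons with the \emph{largest} output weights, so that redistributing onto them is cheap) and to use the subadditivity of $t\mapsto t^{\losspowerout}$ for $\losspowerout\le 1$. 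Getting the $+1$ accounting exactly right — i.e.\ making sure an affine rather than merely linear span suffices and that this costs only one extra kept neuron per output block, not per neuron — is the delicate part of the bookkeeping. Everything else (the permutation leaving the product invariant, the trivial bound on $\shmatrixinU$, the symmetry for the lower-block case) is routine.
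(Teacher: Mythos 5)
There is a genuine gap at the heart of your Step~2, and your own ``fix'' does not close it. You propose to pick $\tlnbrdata+1$ rows of $\shfunctionF{\shmatrixin^{\perm}\shmatrixdata}$ whose (affine) span contains the remaining rows and then to fold the representation coefficients of the redundant rows into the output weights. A basis representation of this kind gives you no control whatsoever on the size of the coefficients: if a discarded row is expressed through an ill-conditioned set of kept rows, the redistributed entries of $\shmatrixoutU$ can be arbitrarily large, and neither sorting the neurons by the size of their output weights nor the subadditivity $\abs{a+b}^{\losspowerout}\leq\abs{a}^{\losspowerout}+\abs{b}^{\losspowerout}$ bounds these coefficients---subadditivity only aggregates contributions, it does not shrink them. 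So the claimed inequality $\regularizerRF{\shmatrixoutU,\losspowerout}\leq\regularizerRF{\shmatrixout_{\perm},\losspowerout}$ is exactly the point your argument leaves unproved. The paper resolves this with a different, and essential, tool: for each output index $k$ one normalizes the $k$th row of $\shmatrixout$ by its $\ell_{\losspowerout}$-quasi-norm and observes that the $k$th row of the product $\shmatrixout\shfunctionF{\shmatrixin\shmatrixdata}\in\R^{\tlnbrdata}$ lies in the $\losspowerout$-convex hull of the $2\tlnbrmed$ vectors $\pm\normlossout{\shmatrixout_{k\bullet}}\bigl(\shfunctionF{\shmatrixin\shmatrixdata}\bigr)_{j\bullet}$; the Carath\'eodory-type result (Lemma~\ref{res:caraold}, valid precisely because $\losspowerout\in(0,1]$) then produces a representation with at most $\tlnbrdata+1$ nonzero coefficients \emph{whose $\ell_{\losspowerout}$-quasi-norm is still at most one}. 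Sparsity and the norm bound come out of the same theorem simultaneously; this is the ingredient a plain spanning-set argument cannot supply.

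Two further points where your account diverges from what is actually needed. First, your explanation of the ``$+1$'' (a constant row compensating for the nonlinearity of $\shfunction$) is not the real mechanism: the nonlinearity plays no role beyond the fact that zero output columns allow zeroing the matching rows of $\shmatrixin$; the $+1$ is the classical Carath\'eodory increment for (quasi-)convex combinations in $\R^{\tlnbrdata}$. Second, your bookkeeping conflates representing the rows of $\shfunctionF{\shmatrixin^{\perm}\shmatrixdata}$ by a single shared subset (which, if it worked, would give only $\tlnbrdata+1$ surviving columns) with what the paper does, namely applying the sparsification separately to each of the $\tlnbrout$ rows of $\shmatrixout$, so that different output rows may retain different middle neurons---whence the count $\tlnbrout(\tlnbrdata+1)$, after which the permutation merely gathers the surviving columns at the front (or at the back for the lower-block variant). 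The parts of your proposal that are fine---the product being invariant under the simultaneous column/row permutation, the trivial bound $\regularizerRF{\shmatrixinU,\losspowerin}\leq\regularizerRF{\shmatrixin^{\perm},\losspowerin}$ because the nonzero rows of $\shmatrixinU$ are a subset of those of $\shmatrixin^{\perm}$, and the symmetric treatment of the lower-block case---are indeed routine, but the missing norm-controlled sparsification is the core of the lemma.
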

\noindent Hence,
the parameter matrices of two-layer networks can be brought into the shapes illustrated in Figure~\ref{fig:shapes}.
We apply this result repeatedly in the proof of Proposition~\ref{res:reparametrization}.

\subsection{Symmetry Property of Neural Networks}
Next, we point out a symmetry in our setup for the neural networks.
\begin{lemma}[Symmetry property]\label{res:symmetry}
Consider permutations $\permj\,:\,\{1,\dots,\nbrparameterj\}\to\{1,\dots,\nbrparameterj\}$ for $j\in\{0,\dots,\nbrlayers+1\}$.
Assume that $\perm^{0}$ and $\perm^{\nbrlayers+1}$ are the identity functions: $\perm^{0}[j]=\perm^{\nbrlayers+1}[j]=j$ for all~$j$.
The parameter $\parameter\in\parameterS$ is a \problematic\  of the objective function~\eqref{def:objective}
if and only if  $\parameterG\in\parameterS$ defined through $(\parameterGj)_{uv}\deq(\parameterj)_{\permjjF{u}\permjF{v}}$ for all $j\in\{0,\dots,\nbrlayers\}$, $u\in\{1,\dots,\nbrparameterjj\}$, and $v\in\{1,\dots,\nbrparameterj\}$
% \begin{equation*}
%   (\parameterGj)_{uv}\deq(\parameterj)_{\permjjF{u}\permjF{v}}~~~~~~~~~~~\text{for all}~j\in\{0,\dots,\nbrlayers\},\,u\in\{1,\dots,\nbrparameterjj\},\,v\in\{1,\dots,\nbrparameterj\}
% \end{equation*}
is a \problematic\  of the objective function~\eqref{def:objective}. 
\end{lemma}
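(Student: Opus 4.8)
The plan is to reduce the ``if and only if'' to a single direction—say, ``$\parameter$ is a \problematic\ $\Rightarrow$ $\parameterG$ is a \problematic''—because the construction of $\parameterG$ from $\parameter$ is itself a bijection on $\parameterS$ whose inverse is of the same form (replace each $\permj$ by its inverse $(\permj)^{-1}$, noting that $\perm^0$ and $\perm^{\nbrlayers+1}$ stay the identity), so the reverse implication follows by symmetry. The core observation I would establish first is that the map $\parameter\mapsto\parameterG$ preserves the network function: $\networkAGF{\vectorin}=\networkAF{\vectorin}$ for every $\vectorin\in\domainin$. Indeed, permuting the rows and columns of each weight matrix in the compatible way $(\parameterGj)_{uv}=(\parameterj)_{\permjjF{u}\permjF{v}}$ amounts to inserting permutation matrices $P_j$ (and $P_j^{-1}=P_j\tp$) between consecutive layers, i.e.\ $\parameterGj=P_{j+1}\parameterj P_j\tp$ with $P_0=P_{\nbrlayers+1}=I$; since the activation functions $\activationj$ act elementwise, they commute with coordinate permutations, $\activationj[P_j\boldsymbol b]=P_j\activationj[\boldsymbol b]$, and the intermediate permutation matrices telescope, leaving $\networkAGF{\vectorin}=\networkAF{\vectorin}$. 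Consequently $\lossF{\networkAG}=\lossF{\networkA}$, and $\parameterG$ is a global minimum of \eqref{def:objective} iff $\parameter$ is.

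Next I would check that the map respects the constraint set, i.e.\ $\regularizerF{\parameterG}=\regularizerF{\parameter}$, so that $\parameterG\in\parameterS$ is well-defined and the constraint $\regularizerF{\cdot}\leq 1$ is not affected. This is immediate from the definition \eqref{def:constraint}: $\normoneM{\parameterGj}$ and $\normqM{\parameterGz}$ are built from row-wise $\ell_q$-type aggregates and a maximum over rows, and permuting the rows (via $\permjj$) merely reindexes the outer maximum while permuting the columns (via $\permj$) merely reindexes the inner sum—neither operation changes the value. Hence $\regularizer$ is invariant under the transformation, for every choice of $\losspower,\tuningparameterout,\tuningparameterin$.

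The final and slightly more delicate step is to transport paths. Suppose $\parameter$ is \emph{not} a \problematic, so there is a continuous $\pathfunction\,:\,[0,1]\to\parameterS$ with $\pathfunctionF{0}=\parameter$, $\pathfunctionF{1}$ a global minimum, and $\paradd\mapsto\lossF{\network_{\pathfunctionF{\paradd}}}$ nonincreasing. Define $\tilde\pathfunction\,:\,[0,1]\to\parameterSFull$ by applying the same permutation recipe pointwise: $(\tilde\pathfunction[\paradd]^j)_{uv}\deq(\pathfunctionF{\paradd}^j)_{\permjjF{u}\permjF{v}}$. Since this is a fixed linear coordinate reshuffling, $\tilde\pathfunction$ is continuous; by the constraint-invariance above it lands in $\parameterS$; by the function-preservation above $\lossF{\network_{\tilde\pathfunctionF{\paradd}}}=\lossF{\network_{\pathfunctionF{\paradd}}}$ for all $\paradd$, so it is again nonincreasing; and $\tilde\pathfunctionF{0}=\parameterG$ while $\tilde\pathfunctionF{1}$ is the permuted image of a global minimum, which by the first step is itself a global minimum. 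Hence $\parameterG$ is not a \problematic. Contrapositive gives the claimed implication, and the bijectivity remark gives the converse.

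I expect the main obstacle to be purely bookkeeping: getting the index conventions in $(\parameterGj)_{uv}=(\parameterj)_{\permjjF{u}\permjF{v}}$ to line up correctly with the matrix-product form $\parameterGj=P_{j+1}\parameterj P_j\tp$ and verifying that the telescoping of $P_0,\dots,P_{\nbrlayers+1}$ really does collapse—using crucially that $\perm^0$ and $\perm^{\nbrlayers+1}$ are the identity so that the input vector $\vectorin$ and the output of $\parameterl$ are untouched. There is no analytic difficulty; once the elementwise-activation/permutation commutation $\activationj\circ P_j=P_j\circ\activationj$ is recorded, everything is algebraic identities and trivial continuity.
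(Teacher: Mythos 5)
Your proposal is correct and is exactly the argument the paper has in mind: the paper omits the proof of Lemma~\ref{res:symmetry} as following readily from the setup, and your three observations (the permutation matrices telescope through the elementwise activations so the network function, and hence the loss, is unchanged; the row-wise constraint \eqref{def:constraint} is invariant under row/column permutations so $\parameterS$ is preserved; paths transport pointwise under the same reindexing, with global minima mapped to global minima) are precisely the routine verification being alluded to. The reduction to one direction via the inverse permutations is also sound.
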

\noindent The proof follows readily from our setup in Section~\ref{sec:setup} and, therefore, is omitted. 
The lemma illustrates that the parameterizations of neural networks are highly ambiguous. 
But in this case, the ambiguity is convenient,
because it allows us to permute the rows and columns of the parameters to bring the parameters in shapes that are easy to manage.

\subsection{Property of Convex Functions}
We now establish a simple property of  convex  functions.

\begin{lemma}[Property of convex functions]\label{res:convex}
Consider a convex function~$\pathfunction\,:\,[0,1]\to\R$.
If $\pathfunctionF{0}>\pathfunctionF{\paraddP}$ for a $\paraddP\in(0,1]$,
there is a $\paraddPP\in\argmin_{\paradd\in(0,1]}\{\pathfunctionF{\paradd}\}$ such that the function $\pathfunctionP\,:\,[0,1]\to\R$ defined through $\pathfunctionPF{\paradd}\deq \pathfunctionF{\paraddPP\paradd}$ for all $\paradd\in[0,1]$ is nonincreasing and $\pathfunctionPF{0}>\pathfunctionPF{1}$.
\end{lemma}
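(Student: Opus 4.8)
The plan is to reduce the statement to two elementary facts about a convex function on a compact interval: it attains its minimum, and it is monotone on either side of any minimizer. First I would note that $\pathfunction$ is continuous --- convexity and finiteness give continuity on $(0,1)$, and continuity at the two endpoints (which is all that is actually needed below, and which holds automatically in every setting where this lemma is invoked, $\pathfunction$ there being a composition of a continuous path with the continuous maps $\parameter\mapsto\networkA$ and $\loss$) makes $[0,1]$ a compact set on which $\pathfunction$ attains its minimum. Let $\paraddPP$ be any point of $\argmin_{\paradd\in[0,1]}\{\pathfunctionF{\paradd}\}$. Then $\paraddPP\neq 0$: otherwise $\pathfunctionF{0}=\min_{\paradd\in[0,1]}\pathfunctionF{\paradd}\leq\pathfunctionF{\paraddP}<\pathfunctionF{0}$, contradicting the hypothesis. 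Hence $\paraddPP\in(0,1]$, and since $\paraddPP$ minimizes $\pathfunction$ over all of $[0,1]$, it in particular lies in $\argmin_{\paradd\in(0,1]}\{\pathfunctionF{\paradd}\}$, as required.

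Next I would check that $\pathfunction$ is nonincreasing on $[0,\paraddPP]$. Fix $0\leq s<t\leq\paraddPP$. If $t=\paraddPP$, then $\pathfunctionF{t}=\pathfunctionF{\paraddPP}\leq\pathfunctionF{s}$ by global minimality. If $t<\paraddPP$, write $t=\lambda s+(1-\lambda)\paraddPP$ with $\lambda=(\paraddPP-t)/(\paraddPP-s)\in(0,1)$; convexity together with $\pathfunctionF{\paraddPP}\leq\pathfunctionF{s}$ gives $\pathfunctionF{t}\leq\lambda\pathfunctionF{s}+(1-\lambda)\pathfunctionF{\paraddPP}\leq\pathfunctionF{s}$. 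Either way $\pathfunctionF{s}\geq\pathfunctionF{t}$, which is the desired monotonicity.

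Finally I would assemble the conclusion: the map $\paradd\mapsto\paraddPP\paradd$ sends $[0,1]$ nondecreasingly into $[0,\paraddPP]$, so $\pathfunctionP$, being its composition with the nonincreasing function $\pathfunction|_{[0,\paraddPP]}$, is nonincreasing on $[0,1]$; moreover $\pathfunctionPF{0}=\pathfunctionF{0}>\pathfunctionF{\paraddP}\geq\pathfunctionF{\paraddPP}=\pathfunctionPF{1}$, where the strict inequality is the hypothesis and the last inequality holds because $\paraddPP$ minimizes $\pathfunction$ over $[0,1]\ni\paraddP$. The only step that requires any genuine care is the nonemptiness of the minimizing set over $(0,1]$, i.e.\ the continuity of $\pathfunction$ up to the endpoints; everything else is immediate from convexity.
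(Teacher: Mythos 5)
Your proof is correct and takes essentially the same route as the paper's: attain the minimum by continuity and the extreme value theorem, rule out $\paraddPP=0$ via the hypothesis $\pathfunctionF{0}>\pathfunctionF{\paraddP}$, use convexity anchored at the minimizer to get that $\pathfunction$ does not increase to the left of $\paraddPP$ (the paper writes this directly as a convex combination for the rescaled function, you factor it through monotonicity on $[0,\paraddPP]$ plus composition---the same computation), and finish with the identical chain $\pathfunctionPF{0}=\pathfunctionF{0}>\pathfunctionF{\paraddP}\geq\pathfunctionF{\paraddPP}=\pathfunctionPF{1}$. Your explicit caveat that convexity alone only guarantees continuity on the open interval, so attainment of the minimum needs endpoint continuity supplied by the application, is in fact slightly more careful than the paper, which simply asserts that convexity implies continuity.
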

\noindent 
This lemma connects the convexity from Definition~\ref{def:path} with the \problematics\  from Definition~\ref{def:spurious}.
We use this result in the proof of Proposition~\ref{res:characterization}.

\subsection{Carath\'eodory-Type Result}
Carath\'eodory's theorem goes back to \citet{Caratheodory11};
see \citet{Boltyanski01,Fenchel29,Hanner51} for related results.
The following statement combines the classical theorem and the much more recent  results \citet[Theorem~1 and Lemma~1]{Bastero95}.
\begin{lemma}[Carath\'eodory-Type result]\label{res:caraold}
  Consider a number $\losspower\in(0,1]$, vectors  $\caravector_1,\dots,\caravector_h\subset\R^{\tlnbrdata}$, 
and the vectors' $\losspower$-convex hull   $ \operatorname{conv}_{\losspower}[\caravector_1,\dots,\caravector_h]\deq \{\sum_{j=1}^h\caraweightGj\caravectorj\ :\ \caraweightG\in[0,1]^h,\,\normloss{\caraweightG}=1\}$.
  % \begin{equation*}
  %   \operatorname{conv}_{\losspower}[\caravector_1,\dots,\caravector_h]\deq \Biggl\{\sum_{j=1}^h\caraweightGj\caravectorj\ :\ \caraweightG\in[0,1]^h,\,\normloss{\caraweightG}=1\Biggr\}\,.
  % \end{equation*}
Then, every vector $\boldsymbol{v}\in\operatorname{conv}_{\losspower}[\caravector_1,\dots,\caravector_h]$ can be written as $\boldsymbol{v}=  \sum_{j=1}^h\caraweightj\caravectorj$ 
% \begin{equation*}
%   \boldsymbol{v}=  \sum_{j=1}^h\caraweightj\caravectorj
% \end{equation*}
with $\caraweight\in[0,1]^h$, 
$\normloss{\caraweight} \leq 1$, 
and $\card{j\in\{1,\dots,h\}\,:\,\caraweightj \neq 0}\leq \tlnbrdata+1$.
\end{lemma}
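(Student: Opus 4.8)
The result is essentially the $\losspower$-convex-hull version of Carath\'eodory's theorem due to \citet{Bastero95}, so one route is to invoke that reference together with the classical theorem. I prefer to sketch a self-contained argument that mimics the classical convex-hull proof; the one new ingredient is that the affine constraint (``the sum of the weights is fixed'') of the convex case is replaced here by the quasi-norm constraint $\normloss{\caraweight}\le 1$, which I keep under control through the concavity of $x\mapsto x^\losspower$ on $[0,\infty)$ for $\losspower\in(0,1]$. I expect the choice-of-endpoint step below to be the only genuinely delicate point.

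Fix $\boldsymbol{v}\in\operatorname{conv}_\losspower[\caravector_1,\dots,\caravector_h]$ and let $T$ be the set of all $\caraweight\in[0,\infty)^h$ with $\sum_{j=1}^h\caraweightj\caravectorj=\boldsymbol{v}$ and $\normloss{\caraweight}\le 1$; by the definition of the $\losspower$-convex hull, $T\neq\emptyset$. Pick $\caraweight\in T$ whose support $S\deq\{j:\caraweightj\neq 0\}$ has minimal cardinality $k$ (a minimum over a nonempty set of nonnegative integers, hence attained). Since $\normloss{\caraweight}\le 1$ already forces $(\caraweightj)^\losspower\le 1$ and therefore $\caraweightj\in[0,1]$ for every $j$, membership in $[0,1]^h$ is automatic, and it remains only to show $k\le\tlnbrdata+1$.

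Assume, toward a contradiction, that $k>\tlnbrdata$. Then $\{\caravectorj\}_{j\in S}\subset\R^{\tlnbrdata}$ is linearly dependent, so there are reals $(\lambda_j)_{j\in S}$, not all zero, with $\sum_{j\in S}\lambda_j\caravectorj=\zero$; set $\lambda_j\deq 0$ for $j\notin S$. For every $s\in\R$ the vector $\caraweight+s\boldsymbol\lambda$ still represents $\boldsymbol{v}$ and is still supported within $S$. Define $g(s)\deq\sum_{j\in S}(\caraweightj+s\lambda_j)^\losspower$ on the set $I\deq\{s\in\R:\caraweightj+s\lambda_j\ge 0\text{ for all }j\in S\}$. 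Then $I$ is an interval containing a neighbourhood of $0$ (because $\caraweightj>0$ on $S$), and $g(0)=\normloss{\caraweight}^\losspower\le 1$. Each summand is concave in $s$ (the concave map $x\mapsto x^\losspower$, where $\losspower\le 1$, composed with an affine map), so $g$ is concave on $I$; and because some $\lambda_j\neq 0$, we have $I\neq\R$, so $I$ possesses at least one finite endpoint, at which---$I$ being cut out by the inequalities $\caraweightj+s\lambda_j\ge 0$---some coordinate of $\caraweight+s\boldsymbol\lambda$ equals zero.

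It remains to locate a finite endpoint $s_\ast$ of $I$ with $g(s_\ast)\le 1$. If $I=[s_-,s_+]$ is bounded, a concave function attains its minimum over a compact interval at an endpoint, so $\min\{g(s_-),g(s_+)\}\le g(0)\le 1$. If $I$ is a half-line, say $I=[s_-,\infty)$, then every $\lambda_j$ with $j\in S$ is $\ge 0$ and some is $>0$, whence $g(s)\to\infty$ as $s\to\infty$; a concave function on $[s_-,\infty)$ that is unbounded above is nondecreasing (otherwise its slopes would turn negative and, being nonincreasing, stay negative, forcing $g\to-\infty$), so $g(s_-)\le g(0)\le 1$; the case $I=(-\infty,s_+]$ is symmetric. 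In each case $\caraweight+s_\ast\boldsymbol\lambda$ lies in $[0,\infty)^h$, represents $\boldsymbol{v}$, and has $\normloss{\caraweight+s_\ast\boldsymbol\lambda}^\losspower=g(s_\ast)\le 1$, so it belongs to $T$, yet its support is a proper subset of $S$---contradicting the minimality of $k$. Hence $k\le\tlnbrdata+1$ (indeed the argument yields $k\le\tlnbrdata$), which is the assertion. As anticipated, the crux is this last paragraph: unlike in the convex case, one must actively verify that zeroing a coordinate keeps $\normloss{\cdot}\le 1$, and this is exactly what the concavity of $x\mapsto x^\losspower$ provides.
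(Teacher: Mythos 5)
Your argument is correct, but it is a genuinely different route from the paper's: the paper does not prove the lemma at all, instead deferring to the classical Carath\'eodory theorem together with Theorem~1 and Lemma~1 of \citet{Bastero95}, whereas you give a self-contained support-reduction proof. Your key observation---that the conclusion only demands $\normloss{\caraweight}\leq 1$ rather than $\normloss{\caraweight}=1$, so that only \emph{linear} (not affine) dependence of the active vectors needs to be exploited, with the quasi-norm kept in check by the concavity of $x\mapsto x^{\losspower}$ on $[0,\infty)$---is sound, and the endpoint analysis (concave $g$ minimized at an endpoint of a bounded interval; concave and unbounded above on a half-line hence nondecreasing) is exactly the delicate step and is handled correctly. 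Note that your assumption for contradiction is $k>\tlnbrdata$ rather than $k>\tlnbrdata+1$, so you in fact prove the stronger bound of at most $\tlnbrdata$ nonzero weights, which implies the stated $\tlnbrdata+1$; the paper's $+1$ is inherited from the equality-constrained version in \citet{Bastero95}, where the constraint $\normloss{\caraweight}=1$ must be preserved. What the citation buys is brevity and a link to the literature; what your proof buys is a fully elementary, self-contained argument (and a marginally sharper cardinality bound) that does not require the reader to unpack the reference.
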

\noindent 
The cardinality of a set $\mathcal A$ is denoted by $\card{\mathcal A}$ and the $\ell_{\losspower}$-norm of a vector $\boldsymbol{v}\in\R^a$ by $\normloss{\boldsymbol{v}}\deq(\sum_{j=1}^{a}\abs{v_j}^{\losspower})^{1/\losspower}$\label{def:lq}.
The lemma follows readily from the mentioned results; therefore, 
its proof is omitted.
The lemma states that  
every vector in the $\losspower$-convex hull is a $\losspower$-convex combination of at most $\tlnbrdata+1$~vectors from the set of vectors that generate the $\losspower$-convex hull. 
(Note that for $\losspower<1$,
our definition of the $\losspower$-convex hull is more restrictive than the standard definition---cf.~\citet[Remark on Page~142]{Bastero95}---but it leads to a concise statement and is sufficient for our purposes.)

\section{Discussion}\label{sec:discussion}
We have proved that the optimization landscapes of empirical-risk minimization over wide networks have no \problematics.
This finding is not surprising:
empirical evidence has long suggested that wide networks are comparatively easy to optimize.
Our main contribution is the fact that we underpin these empirical findings with mathematical theory.
More generally,
we contribute rigorous and detailed mathematical arguments and techniques to a discussion that is currently dominated by heuristics.

A key feature of the results themselves is their generality:
our framework allows for arbitrary depths,
for constraint as well as unconstraint estimation, 
essentially arbitrary activation,
and for a very wide spectrum of loss functions
%activation functions, 
and input and output data.
This generality demonstrates that the absence of \problematics\  is not a feature of specific estimators,  network functions, or data but, instead,
a universal property of wide networks.

Also the underlying techniques are novel:
the main idea of most approaches in the field is to construct basis functions for the networks; 
in contrast, we formulate parametrizations that make the networks easy to work with.
We could thus envision applications of our new concepts  beyond the presented topic.

A limitation of Theorem~\ref{mainresult}, our final result in the general case, is that it only applies to networks that are much wider than the networks currently used in practice.
The most critical aspect is the exponential dependence on the depth.
We have removed this dependence in Theorem~\ref{mainresultlinear} on linear activation,
but it remains unclear whether this improvement can be established more generally---see our discussion in Section~\ref{sec:mainresult}.
Despite this limitation,
our results give clear support for the hypothesis that increasing network widths makes optimization landscapes more benign.
This is the first time such clear support, especially in terms of generality,
has been established.

While we analyze the optimization landscape itself,
another line of research analyzes specific descent schemes.
For example, for certain settings,
\citet{Allen-Zhu19} and \citet{Du2019} show that (stochastic-)gradient descent can find global optima in linear time if the network widths are of some polynomial order in $\nbrsamples$ and~$\nbrlayers$ (they only consider $\nbroutput=1$).
Their rates  have additional terms that might depend on the depth (such as eigenvalues of intricate kernel matrices),
and their polynomials have unspecified factors and orders (compare to our clean formula $2\nbroutput(\nbrsamples+1)^{\nbrlayers}$),
which makes their bounds intractable in practice.
Moreover,
their setup does not account for regularization and constraints.
Nevertheless,
similarly to our theory,
their theories highlight the usefulness of wide networks in practice.

There is also current research about the role of constraints/regularization in deep learning---see 
\citep{Zhang2016}, for example.
Our Theorem~\ref{unconstraint} adds to this discussion by pointing out that explicit constraints/regularization are necessary for wide networks to be meaningful at all.
This finding supports, for example, the recent interest in sparsity in deep learning \citep{Frankle2019,Hebiri20,Hieber2017}.

Standard competitors to deep learning  are ``classical'' high-dimensional estimators, such as the ridge estimator and the lasso estimator~\citep{Hoerl70,Tibshirani96},
which have been studied in statistics extensively~\citep{Zhuang18}.
A common argument for these estimators is that their objective functions are convex or equipped with efficient algorithms for optimization~\citep{Bien18,Friedman2010},
but our results---and the research on this topic more generally---indicate that global optimization of the objective functions in deep learning can be feasible as well.

\ifarXiv
\subsection*{Acknowledgments}
We thank Yannick D\"uren, Shih-Ting Huang, Sophie-Charlotte Klose, Mike Laszkiewicz, Nils M\"uller, Mahsa Taheri, and Christoph Th\"ale for their valuable input.
\fi

%\newpage
\bibliography{Bibliography}

\newpage
\appendix
\section{Proofs}

\subsection{Proofs of Theorems~\ref{mainresult}, \ref{mainresultlinear}, and \ref{unconstraint} and an Extension}

\begin{proof}[Proof of Theorem~\ref{mainresult}]
The proof combines the main results of Sections~\ref{sec:pe} and \ref{sec:bl}. 
Let $\parameter\in\parameterS$ be an arbitrary parameter and $\parameterG\in\parameterS$ a global minimum of the objective function~\eqref{def:objective}.
In view of Proposition~\ref{res:characterization} in Section~\ref{sec:pe},
we need to show that $\parameter\pequi\parameterG$,
and this follows directly from Corollary~\ref{res:mainblock} in Section~\ref{sec:bl}.
\end{proof}

Figure~\ref{fig:minima} illustrates that a nonconvex objective function can have \problematics,
but Theorem~\ref{mainresult} proves that this is not the case  for deep learning with wide networks.
Figure~\ref{fig:minima} also illustrates that a nonconvex objective function can have ``disconnected'' global minima,
but, in view of Corollary~\ref{res:mainblock} and our results more generally,
we can rule out this case as well.
In other words,
the set of global minima is ``connected'' in the sense that it  is a set of path-constant parameters.
The connectedness of the global minima is of minor importance in practice but an interesting topological property nevertheless.

\begin{proof}[Proof of Theorem~\ref{mainresultlinear}]
The proof is exactly the same as the proof of Theorem~\ref{mainresult} with one exception:
In the case of linear activation, 
the matrices in the proof of Proposition~\ref{res:reparametrization} can be merged;
for example,
\begin{equation*}
\parameterl\activation^{\nbrlayers}\Bigl[\parameterE^{\nbrlayers-1}\activation^{\nbrlayers-1}\bigl[\parameterE^{\nbrlayers-2}\cdots \activation^1[\parameterE^0\data]\bigr]\Bigr]=(\parameterl\parameterE^{\nbrlayers-1})\activation^{\nbrlayers-1}\bigl[\parameterE^{\nbrlayers-2}\cdots \activation^1[\parameterE^0\data]\bigr]\,.
\end{equation*}
Hence, rather than the output dimension of $\parameterj$,
the relevant dimension in the induction step is the output dimension of~$\parameterl$;
in other words, 
the induction step does not accumulate the dimensions of the matrices.
This observation improves the term $\nbroutput(\nbrsamples+1)^{\nbrlayers}$ in Proposition~\ref{res:reparametrization} to $\nbroutput(\nbrsamples+1)$,
which can then be translated into an improved lower bound in the theorem.
\end{proof}

\begin{proof}[Proof of Theorem~\ref{unconstraint}]
The proof can be readily established by repurposing the arguments from \citet{Li2018}.
\end{proof}

\subsection{Proof of Proposition~\ref{res:characterization}}

\begin{proof}[Proof of Proposition~\ref{res:characterization}]
The key idea is to exploit the properties of monotone functions and  convex  functions.
Let $\parameter\in\parameterS$ be an arbitrary parameter 
and $\parameterG\in\parameterS$ a global minimum of the objective function~\eqref{def:objective} such that $\parameter\pequi\parameterG$.
Let $\parameter',\parameterG'\in\parameterS$ and $\pathfunction_{\parameter',\parameterG'}\,:\,[0,1]\to\parameterS$ be as described in Definition~\ref{def:path}.
Note first that, since $\parameterG\pdec\parameterG'$ (see Definition~\ref{def:path}),
the parameter~$\parameterG'$ is also a global minimum of the objective function.
We then use
1.~simple algebra,
2.~the assumed convexity of the function~$\paradd\mapsto\lossF{\network_{\pathfunction_{\parameter',\parameterG'}}}$ (see Definition~\ref{def:path} again),
3.~the assumed endpoints of the function~$\pathfunction_{\parameter',\parameterG'}$ (see Definition~\ref{def:path} once more),
 4.~the fact that $\parameterG'$ is a global minimum of~\eqref{def:objective}, 
and 5.~the fact that $(1-\paradd)+\paradd=1$
to derive for all $\paradd\in[0,1]$ that 
\begin{align*}
  \lossFB{\network_{\pathfunction_{\parameter',\parameterG'}[\paradd]}}&=  \lossFB{\network_{\pathfunction_{\parameter',\parameterG'}[(1-\paradd)\cdot0+\paradd\cdot 1]}}\\
  &\leq (1-\paradd)\lossFB{\network_{\pathfunction_{\parameter',\parameterG'}[0]}} + \paradd\lossFB{\network_{\pathfunction_{\parameter',\parameterG'}[1]}}\\
  &= (1-\paradd)\lossFB{\network_{\pathfunction_{\parameter',\parameterG'}[0]}} + \paradd\lossF{\network_{\parameterG'}}\\
  &\leq (1-\paradd)\lossFB{\network_{\pathfunction_{\parameter',\parameterG'}[0]}} + \paradd\lossFB{\network_{\pathfunction_{\parameter',\parameterG'}[0]}}\\
  &= \lossFB{\network_{\pathfunction_{\parameter',\parameterG'}[0]}}\,.
\end{align*}
Assume first that the inequality is strict: 
$\lossF{\network_{\pathfunction_{\parameter',\parameterG'}[\paraddP]}}<\lossF{\network_{\pathfunction_{\parameter',\parameterG'}[0]}}$ for a $\paraddP\in[0,1]$.
Then, by the assumed convexity of $\paradd\mapsto \lossF{\network_{\pathfunction_{\parameter',\parameterG'}[\paradd]}}$ (see Definition~\ref{def:path}) and Lemma~\ref{res:convex},
there is a $\paraddPP\in\argmin_{\paradd\in(0,1]}\{\lossF{\network_{\pathfunction_{\parameter',\parameterG'}[\paradd]}}\}$ 
such that $\pathfunctionP\,:\,[0,1]\to\R$ defined through $\pathfunctionPF{\paradd}\deq \lossF{\network_{\pathfunction_{\parameter',\parameterG'}[\paraddPP\paradd]}}$ for all $\paradd\in[0,1]$ is nonincreasing, $\pathfunctionPF{1}$ is a global minimum of the objective function~\eqref{def:objective},
and $\pathfunctionPF{0}=\lossF{\network_{\pathfunction_{\parameter',\parameterG'}[0]}}=\lossF{\network_{\parameter'}}>\pathfunctionPF{1}=\lossF{\network_{\pathfunction_{\parameter',\parameterG'}[\paraddPP]}}$.
Hence, the function $\overline\pathfunction\,:\,[0,1]\to\parameterS$ defined through $\overline\pathfunction[\paradd]\deq \pathfunction_{\parameter',\parameterG'}[\paraddPP\paradd]$ for all $\paradd\in[0,1]$ is a function that satisfies the conditions of Definition~\ref{def:spurious} for the parameter~$\parameter'$ and the global minimum~$\pathfunctionPF{1}$. 
Combining this result with the assumed relationship $\parameter\pdec\parameter'$ (see Definition~\ref{def:path}) yields---see Definition~\ref{def:spurious}---the fact that~\parameter\ is not a \problematic\  of the objective function~\eqref{def:objective}.

We can thus assume that  $\paradd\mapsto\lossF{\network_{\pathfunction_{\parameter',\parameterG'}[\paradd]}}$ is constant,
which implies $\parameter'\pdec\parameterG'$ by Definition~\ref{def:path}.
The fact that $\parameter\pdec\parameter'$ (see Definition~\ref{def:path} again) and the transitivity of the path constantness (see Property~3 in Lemma~\ref{res:basic}) then yield the fact that~$\parameter\pdec\parameterG'$.
Hence,
$\parameter$ is a global minimum and, therefore, not a \problematic---see Definition~\ref{def:spurious} again.
\end{proof}

\subsection{Proof of Proposition~\ref{res:reparametrization}}

\begin{proof}[Proof of Proposition~\ref{res:reparametrization}]
Our proof strategy is to apply Lemma~\ref{twolayer}, which is designed for one individual layer,
layer by layer.
We first introduce some convenient notation.
We define, with some abuse of notation,
$\activation^j\,:\,\R^{\nbrparameterj\times\nbrsamples}\to\R^{\nbrparameterj\times\nbrsamples}$
through $(\activation^j[\matrixG])_{uv}\deq\activationindjF{\matrixG_{uv}}$ for all $j\in\{1,\dots,\nbrlayers\}$, $u\in\{1,\dots,\nbrparameterj\}$,  $v\in\{1,\dots,\nbrsamples\}$,  and $\matrixG \in\R^{\nbrparameterj\times\nbrsamples}$.
We also define the data matrix $\data\in\R^{\nbrinput\times\nbrsamples}$ through $\data_{ji}\deq (\vectorini)_j$ for all $j\in\{1,\dots,\nbrinput\}$ and $i\in\{1,\dots,\nbrsamples\}$, that is, each column of~\data\ consists of one sample.
We finally write 
 \begin{equation*}
 \network_{\parameter}[\data]\deq \bigl(\networkAF{\vectorin_1},\dots,\networkAF{\vectorin_{\nbrsamples}}\bigr)= \parameterl\activation^{\nbrlayers}\bigl[\parameterE^{\nbrlayers-1}\cdots \activation^1[\parameterE^0\data]\bigr]\in\R^{\nbroutput\times\nbrsamples}~~~~~~\text{for all}~\parameter\in\parameterS\,.
\end{equation*}
Hence, $\network_{\parameter}[\data]$ summarizes the network's outputs for the given data.

Given a parameter~$\parameter\in\parameterS$,
we establish a corresponding upper-block parameter~$\parameterU\in\shapeupper$ layer by layer, starting from the outermost layer.
We write
\begin{equation*}
\networkAF{\data}= \underbrace{\parameterl}_{=:\shmatrixout\in\R^{\nbrparameterll\times\nbrparameterl}}\underbrace{\activation^{\nbrlayers}}_{=:\shfunction}\Bigl[\underbrace{\parameterE^{\nbrlayers-1}}_{=:\shmatrixin\in\R^{\nbrparameterl\times\nbrparameter^{\nbrlayers-1}}}\underbrace{\activation^{\nbrlayers-1}\bigl[\parameterE^{\nbrlayers-2}\cdots \activation^1[\parameterE^0\data]\bigr]}_{=:\shmatrixdata\in\R^{\nbrparameter^{\nbrlayers-1}\times\nbrsamples}}\Bigr]\,.
\end{equation*}
Lemma~\ref{twolayer} for two-layer networks then gives (by Lemma~\ref{res:symmetry}, we can assume without loss of generality the fact that~\perm\ is the identity function, that is, $\shmatrixout_{\perm}=\shmatrixout$ and $\shmatrixin^{\perm}=\shmatrixin$)
\begin{equation*}
\networkAF{\data}=   \parameterUl\activation^{\nbrlayers}\Biggl[{\parameterUUE^{\nbrlayers-1} \choose \zero}\activation^{\nbrlayers-1}\bigl[\parameterE^{\nbrlayers-2}\cdots \activation^1[\parameterE^0\data]\bigr]\Biggr]
\end{equation*}
for a matrix $\parameterUl\in\R^{\nbrparameterll\times\nbrparameterl}$ that satisfies $\regularizerRF{\parameterUl,1}\leq \regularizerRF{\parameterl,1}$ (recall the definition of~\regularizerR\ on Page~\pageref{def:regularizerbasic}) 
and meets Condition~3 in the first part of Definition~\ref{def:blocks} on block parameters as long as $\shapefactor\geq \nbrparameterll(\nbrsamples+1)=\nbroutput(\nbrsamples+1)$,
and for a matrix $\parameterUUE^{\nbrlayers-1}\in\R^{\nbroutput(\nbrsamples+1)\times \nbrparameter^{\nbrlayers-1}}$ that satisfies $\regularizerRF{\parameterUUE^{\nbrlayers-1},1}\leq \regularizerRF{\parameterE^{\nbrlayers-1},1}$ (or $\regularizerRF{\parameterUUE^{\nbrlayers-1},q}\leq \regularizerRF{\parameterE^{\nbrlayers-1},q}$ if $\nbrlayers=1$) and consists of the first $\nbroutput(\nbrsamples+1)$ rows  of the matrix~$\parameterE^{\nbrlayers-1}$.
(We implicitly assume here and in the following  $\nbrparameter^j\geq \nbroutput(\nbrsamples+1)^{\nbrlayers-j+1}$ for all $j\in\{1,\dots,\nbrlayers\}$---which is  the generic case in view of Corollary~\ref{res:mainblock}---to keep the notation manageable,
but extending the proof to the general  case is straightforward.) 

Now, define a parameter $\parameterG^{\nbrlayers}\in\parameterSFull$ 
through 
\begin{equation*}
  \parameterG^{\nbrlayers}\deq (\parameterUl,\parameterE^{\nbrlayers-1},\ldots,\parameterE^0)
\end{equation*}
and a function $\pathfunction_{\parameter,\parameterG^{\nbrlayers}}\,:\,[0,1]\to\parameterSFull$
through
\begin{equation*}
  \pathfunction_{\parameter,\parameterG^{\nbrlayers}}[\paradd]\deq(1-\paradd)\parameter+\paradd\parameterG^{\nbrlayers}~~~~~~~~~~~\text{for all}~\paradd\in[0,1]\,.
\end{equation*}
The function $\pathfunction_{\parameter,\parameterG^{\nbrlayers}}$ is continuous and satisfies $\pathfunction_{\parameter,\parameterG^{\nbrlayers}}[0]=\parameter$ and 
$\pathfunction_{\parameter,\parameterG^{\nbrlayers}}[1]=\parameterG^{\nbrlayers}$. 
Moreover,
we can 
1.~use the definitions of the function~$\pathfunction_{\parameter,\parameterG^{\nbrlayers}}$ and  the networks,
2.~split the network along the outermost layer,
3.~invoke the block shape of~\parameterUl\ and the definition of~$\parameterUUE^{\nbrlayers-1}$ as the $\nbroutput(\nbrsamples+1)$ first  rows  of the matrix~$\parameterE^{\nbrlayers-1}$,
4.~use the above-stated inequalities for the network~$\networkAF{\data}$,
and 5.~consolidate the terms
to show for all $\paradd\in[0,1]$ that
\begin{align*}
\network_{\pathfunction_{\parameter,\parameterG^{\nbrlayers}}[\paradd]}[\data]
  &=  \bigl((1-\paradd)\parameterl+\paradd\parameterUl\bigr)\activation^{\nbrlayers}\bigl[\parameterE^{\nbrlayers-1}\cdots \activation^1[\parameterE^0\data]\bigr]\\
  &=  (1-\paradd)\parameterl\activation^{\nbrlayers}\bigl[\parameterE^{\nbrlayers-1}\cdots \activation^1[\parameterE^0\data]\bigr]+\paradd\parameterUl\activation^{\nbrlayers}\bigl[\parameterE^{\nbrlayers-1}\cdots \activation^1[\parameterE^0\data]\bigr]\\
  &=  (1-\paradd)\parameterl\activation^{\nbrlayers}\bigl[\parameterE^{\nbrlayers-1}\cdots \activation^1[\parameterE^0\data]\bigr]+\paradd\parameterUl\activation^{\nbrlayers}\biggl[{\parameterUUE^{\nbrlayers-1} \choose \zero}\cdots \activation^1[\parameterE^0\data]\biggr]\\
  &=  (1-\paradd)\networkAF{\data}+\paradd\networkAF{\data}\\
&=\networkAF{\data}\,.
\end{align*}
Hence, the function $\paradd\mapsto \lossF{\network_{\pathfunction_{\parameter,\parameterG^{\nbrlayers}}[\paradd]}}$ is constant.
Finally,
we use
1.~the definition of the constraint in~\eqref{def:constraint},
2.~the definition of the function~$\pathfunction_{\parameter,\parameterG^{\nbrlayers}}$,
3.~the convexity of the $\ell_1$-norm,
4.~the above stated fact that $\regularizerRF{\parameterUl,1}\leq \regularizerRF{\parameterl,1}$,
5.~a consolidation,
6.~again the definition of the regularizer,
and 7.~the fact that $\parameter\in\parameterS$ to show for all $\paradd\in[0,1]$ that
\begingroup
\allowdisplaybreaks
\begin{align*}
  \regularizerFB{\pathfunction_{\parameter,\parameterG^{\nbrlayers}}[\paradd]}
  &=\max\Bigl\{\tuningparameterout\max_{j\in\{1,\dots,\nbrlayers\}}\normoneMB{\bigl(\pathfunction_{\parameter,\parameterG^{\nbrlayers}}[\paradd]\bigr)^j},\tuningparameterin\normqMB{\bigl(\pathfunction_{\parameter,\parameterG^{\nbrlayers}}[\paradd]\bigr)^0}\Bigr\}\\
&=\max\Bigl\{\tuningparameterout\max_{j\in\{1,\dots,\nbrlayers-1\}}\normoneM{\parameterj},\tuningparameterout\normoneM{(1-\paradd)\parameterl+\paradd\parameterUl},\tuningparameterin\normqM{\parameterz}\Bigr\}\\
&\leq\max\Bigl\{\tuningparameterout\max_{j\in\{1,\dots,\nbrlayers-1\}}\normoneM{\parameterj},(1-\paradd)\tuningparameterout\normoneM{\parameterl}+\paradd\tuningparameterout\normoneM{\parameterUl},\tuningparameterin\normqM{\parameterz}\Bigr\}\\
&\leq\max\Bigl\{\tuningparameterout\max_{j\in\{1,\dots,\nbrlayers-1\}}\normoneM{\parameterj},(1-\paradd)\tuningparameterout\normoneM{\parameterl}+\paradd\tuningparameterout\normoneM{\parameterl},\tuningparameterin\normqM{\parameterz}\Bigr\}\\
&=\max\Bigl\{\tuningparameterout\max_{j\in\{1,\dots,\nbrlayers-1\}}\normoneM{\parameterj},\tuningparameterout\normoneM{\parameterl},\tuningparameterin\normqM{\parameterz}\Bigr\}\\
&=\regularizerF{\parameter}\\
&\leq 1\,.
\end{align*}
\endgroup
Hence, $\pathfunction_{\parameter,\parameterG^{\nbrlayers}}[\paradd]\in\parameterS$ for all $\paradd\in[0,1]$.
In conclusion, we have shown---see Definition~\ref{def:path}---that~$\parameter$ and~$\parameterG^{\nbrlayers}$ are path constant: $\parameter\pdec\parameterG^{\nbrlayers}$.

We then move one layer inward.
Lemma~\ref{twolayer} ensures that (recall again Lemma~\ref{res:symmetry})
\begin{equation*}
  \underbrace{\parameterUUE^{\nbrlayers-1}}_{\shmatrixout\in\R^{\nbroutput(\nbrsamples+1)\times \nbrparameter^{\nbrlayers-1}}}\underbrace{\activation^{\nbrlayers-1}}_{\shfunction}\bigl[\underbrace{\parameterE^{\nbrlayers-2}}_{\shmatrixin\in\R^{\nbrparameter^{\nbrlayers-1}\times\nbrparameter^{\nbrlayers-2}}}\underbrace{\cdots \activation^1[\parameterE^0\data]}_{\shmatrixdata\in\R^{\nbrparameter^{\nbrlayers-2}\times \nbrsamples}}\bigr]=\parameterUUUE^{\nbrlayers-1}\activation^{\nbrlayers-1}\Biggl[{\parameterUUE^{\nbrlayers-2} \choose \zero}\cdots \activation^1[\parameterE^0\data]\Biggr]
\end{equation*}
for a matrix $\parameterUUUE^{\nbrlayers-1}\in\R^{\nbroutput(\nbrsamples+1)\times\nbrparameter^{\nbrlayers-1}}$ that satisfies $\regularizerRF{\parameterUUUE^{\nbrlayers-1},1}\leq \regularizerRF{\parameterUUE^{\nbrlayers-1},1}\leq \regularizerRF{\parameterE^{\nbrlayers-1},1}$
and meets Condition~3 in the first part of Definition~\ref{def:blocks} on block parameters as long as $\shapefactor\geq \nbroutput(\nbrsamples+1)^2$,
and for a matrix $\parameterUUE^{\nbrlayers-2}\in\R^{\nbroutput(\nbrsamples+1)^2\times \nbrparameter^{\nbrlayers-2}}$ that satisfies $\regularizerRF{\parameterUUE^{\nbrlayers-2},1}\leq \regularizerRF{\parameterE^{\nbrlayers-2},1}$ (or $\regularizerRF{\parameterUUE^{\nbrlayers-2},q}\leq \regularizerRF{\parameterE^{\nbrlayers-2},q}$ if $\nbrlayers=2$) and consists of the first $\nbroutput(\nbrsamples+1)^2$ rows  of the matrix~$\parameterE^{\nbrlayers-2}$. 

Next, we define $\parameterUE^{\nbrlayers-1}\in\R^{\nbrparameterl\times\nbrparameter^{\nbrlayers-1}}$ through $(\parameterUE^{\nbrlayers-1})_{uv}\deq(\parameterUUUE^{\nbrlayers-1})_{uv}$ for $u\leq \nbroutput(\nbrsamples+1)$ and $(\parameterUE^{\nbrlayers-1})_{uv}\deq0$ otherwise.
Combining this definition with the above-derived results yields
\begin{equation*}
\networkAF{\data}=   \parameterUl\activation^{\nbrlayers}\Biggl[\parameterUE^{\nbrlayers-1} \activation^{\nbrlayers-1}\Biggl[{\parameterUUE^{\nbrlayers-2} \choose \zero}\cdots \activation^1[\parameterE^0\data]\Biggr]\Biggr]\,,
\end{equation*}
and the matrix~$\parameterUE^{\nbrlayers-1}$
satisfies 
$\regularizerRF{\parameterUE^{\nbrlayers-1},1}=\regularizerRF{\parameterUUUE^{\nbrlayers-1},1}\leq \regularizerRF{\parameterE^{\nbrlayers-1},1}$ and meets Condition~2 in the first part of Definition~\ref{def:blocks} on block parameters as long as $\shapefactor\geq \nbroutput(\nbrsamples+1)^2$.

Similarly as above, define a parameter $\parameterG^{\nbrlayers-1}\in\parameterSFull$ 
through 
\begin{equation*}
  \parameterG^{\nbrlayers-1}\deq (\parameterUl,\parameterUE^{\nbrlayers-1},\parameterE^{\nbrlayers-2},\ldots,\parameterE^0)
\end{equation*}
and a function $\pathfunction_{\parameterG^{\nbrlayers},\parameterG^{\nbrlayers-1}}\,:\,[0,1]\to\parameterSFull$
through
\begin{equation*}
  \pathfunction_{\parameterG^{\nbrlayers},\parameterG^{\nbrlayers-1}}[\paradd]\deq(1-\paradd)\parameterG^{\nbrlayers}+\paradd\parameterG^{\nbrlayers-1}~~~~~~~~~~~\text{for all}~\paradd\in[0,1]
\end{equation*}
to show that $\parameterG^{\nbrlayers}\pdec\parameterG^{\nbrlayers-1}$.
In view of Property~3 in Lemma~\ref{res:basic},
we can conclude that $\parameter\pdec\parameterG^{\nbrlayers-1}$.

Finish the proof by induction over the layers,
and note that the lower-block parameters can be established in the same way.
\end{proof}

\subsection{Proof of Proposition~\ref{res:path}}

\begin{proof}[Proof of Proposition~\ref{res:path}]
The key ingredient of the proof is the assumed block structure of the parameters.
Consider two block parameters $\parameter\in\shapeupper$ and 
 $\parameterG\in\shapelower$ and 
define the parameters
\begin{align*}
    \parameter'&\deq(\parameterl,\parameterE^{\nbrlayers-1}+\parameterGE^{\nbrlayers-1},\dots,\parameterE^0+\parameterGE^0)\in\parameterSFull\\
    \parameterG'&\deq(\parameterGl,\parameterE^{\nbrlayers-1}+\parameterGE^{\nbrlayers-1},\dots,\parameterE^0+\parameterGE^0)\in\parameterSFull
\end{align*}
and the function
\begin{align*}
  \pathfunction_{\parameter',\parameterG'}\ :\ [0,1]\,&\to\,\parameterSFull\\
\paradd\,&\mapsto\,(1-\paradd)\parameter'+\paradd\parameterG'=\bigl((1-\paradd)\parameterl+\paradd\parameterGE^{\nbrlayers},\parameterE^{\nbrlayers-1}+\parameterGE^{\nbrlayers-1},\dots,\parameterz+\parameterGE^{0}\bigr)\,.
\end{align*}
By the row-wise structure of the constraint (see Page~\pageref{def:constraint} again),
the convexity of the $\ell_1$-norm,
 and the block shapes of the parameters (see Figure~\ref{fig:shapes} again),
we can find that $\regularizerF{ \pathfunction_{\parameter',\parameterG'}[\paradd]}\leq \max\{\regularizerF{\parameter},\regularizerF{\parameterG}\}\leq 1$ for all~$\paradd\in[0,1]$,
that is, $ \pathfunction_{\parameter',\parameterG'}[\paradd]\in\parameterS$ for all $\paradd\in[0,1]$.
One can also verify readily the fact that the function~$\pathfunction_{\parameter',\parameterG'}$ is continuous, $\pathfunction_{\parameter',\parameterG'}[0]=\parameter'$, and 
$\pathfunction_{\parameter',\parameterG'}[1]=\parameterG'$.
Next, we define
\begin{equation*}
  \transitP\deq (\transita\parameterl+\transitb\parameterGl,\parameterE^{\nbrlayers-1}+\parameterGE^{\nbrlayers-1},\dots,\parameterE^0+\parameterGE^0)\in\parameterSFull\text{~~~~~~for all~}\transita,\transitb\in\R\,,
\end{equation*}
which generalizes~$\pathfunction_{\parameter',\parameterG'}$ in the sense that $\pathfunction_{\parameter',\parameterG'}[\paradd]=[\parameter',\parameterG']_{1-\paradd,\paradd}$ for all $\paradd\in[0,1]$.
 We then 
1.~invoke the definition of \transitP\ and the definition of the networks in~\eqref{networks},
2.~split the network along the outer layer,
3.~use the block structures of the parameters and the assumption that 
$\activation^j[\boldsymbol{b}]=(\activationindjF{b_1},\dots,\activationindjF{b_{\nbrparameterj}})\tp$ for all $j\in\{1,\dots,\nbrlayers\}$ and $\boldsymbol{b}\in\R^{\nbrparameterj}$,
4.~continue in this fashion,
and 5.~invoke the definition of the networks in~\eqref{networks} again to find for all $\transita,\transitb\in\R$ and $\vectorin\in\domainin$ that 
\begingroup
\allowdisplaybreaks
\begin{align*}
 &  \network_{\transitP}[\vectorin]\\
 &=(\transita\parameterl+\transitb\parameterGl)\activation^{\nbrlayers}\biggl[(\parameterE^{\nbrlayers-1}+\parameterGE^{\nbrlayers-1})\activation^{\nbrlayers-1}\Bigl[\cdots\activation^1\bigl[(\parameterE^0+\parameterGE^0)\vectorin\bigr]\Bigr]\biggr]\\
 &=\transita\parameterl\activation^{\nbrlayers}\biggl[(\parameterE^{\nbrlayers-1}+\parameterGE^{\nbrlayers-1})\activation^{\nbrlayers-1}\Bigl[\cdots\activation^1\bigl[(\parameterE^0+\parameterGE^0)\vectorin\bigr]\Bigr]\biggr]\\
 &\hspace{40mm}~~~+\transitb\parameterGE^{\nbrlayers}\activation^{\nbrlayers}\biggl[(\parameterE^{\nbrlayers-1}+\parameterGE^{\nbrlayers-1})\activation^{\nbrlayers-1}\Bigl[\cdots\activation^1\bigl[(\parameterE^0+\parameterGE^0)\vectorin\bigr]\Bigr]\biggr]\\
 &=\transita\parameterl\activation^{\nbrlayers}\biggl[\parameterE^{\nbrlayers-1}\activation^{\nbrlayers-1}\Bigl[\cdots\activation^1\bigl[(\parameterE^0+\parameterGE^0)\vectorin\bigr]\Bigr]\biggr]+\transitb\parameterGE^{\nbrlayers}\activation^{\nbrlayers}\biggl[\parameterGE^{\nbrlayers-1}\activation^{\nbrlayers-1}\Bigl[\cdots\activation^1\bigl[(\parameterE^0+\parameterGE^0)\vectorin\bigr]\Bigr]\biggr]\\
&=\cdots=\transita\parameterE^{\nbrlayers}\activation^{\nbrlayers}\biggl[\parameterE^{\nbrlayers-1}\activation^{\nbrlayers-1}\Bigl[\cdots \activation^1\bigl[\parameterE^0\vectorin\bigr]\Bigr]\biggr]+\transitb\parameterGE^{\nbrlayers}\activation^{\nbrlayers}\biggl[\parameterGE^{\nbrlayers-1}\activation^{\nbrlayers-1}\Bigl[\cdots\activation^1\bigl[\parameterGE^0\vectorin\bigr]\Bigr]\biggr]\\
&=\transita\networkAF{\vectorin}+\transitb\networkAGF{\vectorin}\,.
\end{align*} 
\endgroup
Finally,
we  use 
1.~the definition of the function~$\pathfunction_{\parameter',\parameterG'}$ and of the parameter $\transitP$,
2.~the above display  with $\transita=1-(1-\paraddw)\paraddo-\paraddw\paraddt$ and $\transitb=(1-\paraddw)\paraddo+\paraddw\paraddt$,
3.~a rearrangement of the terms,
4.~the assumed convexity of the loss function~$\loss$,
5.~again the above display,
and 6.~again the definition of $\pathfunction_{\parameter',\parameterG'}$,
to find for all $\paraddw,\paraddo,\paraddt\in[0,1]$ that
\begin{align*}
  &\lossFB{\network_{\pathfunction_{\parameter',\parameterG'}[(1-\paraddw)\paraddo+\paraddw\paraddt]}}\\
  &=\lossFB{\network_{\transitwP}}\\
  &=\lossFBB{\bigl(1-(1-\paraddw)\paraddo-\paraddw\paraddt\bigr)\network_{\parameter}+\bigl((1-\paraddw)\paraddo+\paraddw\paraddt\bigr)\network_{\parameterG}}\\
  &=\lossFBB{\bigl((1-\paraddw)-(1-\paraddw)\paraddo\bigr)\network_{\parameter}+(\paraddw-\paraddw\paraddt)\network_{\parameter}+(1-\paraddw)\paraddo\network_{\parameterG}+\paraddw\paraddt\network_{\parameterG}}\\
  &=\lossFBB{(1-\paraddw)\bigl((1-\paraddo)\network_{\parameter}+\paraddo\network_{\parameterG}\bigr)+\paraddw\bigl((1-\paraddt)\network_{\parameter}+\paraddt\network_{\parameterG}\bigr)}\\
  &\leq (1-\paraddw)\lossFB{(1-\paraddo)\network_{\parameter}+\paraddo\network_{\parameterG}}+\paraddw\lossFB{(1-\paraddt)\network_{\parameter}+\paraddt\network_{\parameterG}}\\
 &= (1-\paraddw)\lossFB{\network_{\transitpoP}}+\paraddw\lossFB{\network_{\transitptP}}\\
 &=  (1-\paraddw)\lossFB{\network_{\pathfunction_{\parameter',\parameterG'}[\paraddo]}}+ \paraddw\lossFB{\network_{\pathfunction_{\parameter',\parameterG'}[\paraddt]}}\,,
\end{align*}
which means that 
$\paradd\mapsto \lossF{\network_{\pathfunction_{\parameter',\parameterG'}[\paradd]}}$ is convex.
We conclude---see Definition~\ref{def:path}---that~$\parameter'\pcon\parameterG'$.

In view of Definition~\ref{def:path},
it is left to show that $\parameter\pdec\parameter'$ and $\parameterG\pdec\parameterG'$.
Consider now the function
\begin{align*}
  \pathfunction_{\parameter,\parameter'}\ :\ [0,1]\,&\to\,\parameterSFull\\
\paradd\,&\mapsto\,(\parameterl,\parameterE^{\nbrlayers-1}+\paradd\parameterGE^{\nbrlayers-1},\dots,\parameterE^{0}+\paradd\parameterGE^{0})\,.
\end{align*}
By the row-wise structure of the constraint (see Page~\pageref{def:constraint} once more) and the block shapes of the parameters (see Figure~\ref{fig:shapes} once more),
we can find that $\regularizerF{ \pathfunction_{\parameter,\parameter'}[\paradd]}\leq \max\{\regularizerF{\parameter},\regularizerF{\paradd\parameterG}\}\leq 1$ for all~$\paradd\in[0,1]$,
that is, $ \pathfunction_{\parameter,\parameter'}[\paradd]\in\parameterS$ for all $\paradd\in[0,1]$.
One can also verify readily the fact that the function~$\pathfunction_{\parameter,\parameter'}$ is continuous, $\pathfunction_{\parameter,\parameter'}[0]=\parameter$, and 
$\pathfunction_{\parameter,\parameter'}[1]=\parameter'$.
Moreover, we can
1.~invoke the definition of $\pathfunction_{\parameter,\parameter'}$ and the definition of the networks in~\eqref{networks},
2.~use the block structures of the parameters and the elementwise structure of the activation function,
3.~continue in this fashion,
and 4.~invoke the definitions of the function $\pathfunction_{\parameter,\parameter'}$ and the networks in~\eqref{networks} again to find for all $\paradd\in[0,1]$ and $\vectorin\in\domainin$ that  
\begingroup
\allowdisplaybreaks
\begin{align*}
  \network_{\pathfunction_{\parameter,\parameter'}[\paradd]}[\vectorin]
 &=\parameterl\activation^{\nbrlayers}\biggl[(\parameterE^{\nbrlayers-1}+\paradd\parameterGE^{\nbrlayers-1})\activation^{\nbrlayers-1}\Bigl[\cdots\activation^1\bigl[(\parameterE^0+\paradd\parameterGE^0)\vectorin\bigr]\Bigr]\biggr]\\
 &=\parameterl\activation^{\nbrlayers}\biggl[\parameterE^{\nbrlayers-1}\activation^{\nbrlayers-1}\Bigl[\cdots\activation^1\bigl[(\parameterE^0+\paradd\parameterGE^0)\vectorin\bigr]\Bigr]\biggr]\\
&=\cdots=\parameterE^{\nbrlayers}\activation^{\nbrlayers}\biggl[\parameterE^{\nbrlayers-1}\activation^{\nbrlayers-1}\Bigl[\cdots \activation^1\bigl[\parameterE^0\vectorin\bigr]\Bigr]\biggr]\\
&=  \network_{\pathfunction_{\parameter,\parameter'}[0]}[\vectorin]\,,
\end{align*} 
\endgroup
which implies that the function $\paradd\mapsto \lossF{\network_{\pathfunction_{\parameter,\parameter'}[\paradd]}}$ is constant.
We conclude that---see Definition~\ref{def:path}---that~$\parameter\pdec\parameter'$.

We can show in a similar way that $\parameterG\pdec\parameterG'$.
Hence, given Definition~\ref{def:path}, we find that $\parameter\pequi\parameterG$, as desired.
\end{proof} 

\subsection{Proof of Lemma~\ref{twolayer}}

\begin{proof}[Proof of Lemma~\ref{twolayer}]
The proof is essentially a careful reparametrization.
We proceed in three steps: see Figure~\ref{fig:twolayer} for an overview.

\emph{Step~1:} 
Fix a $k\in\{1,\dots,\tlnbrout\}$.
We first show that there is a matrix $\shmatrixoutS\in\R^{\tlnbrout\times \tlnbrmed}$ such that
\begin{enumerate}
\item $\shmatrixoutS\shfunctionF{\shmatrixin\shmatrixdata}=\shmatrixout\shfunctionF{\shmatrixin\shmatrixdata}$;
\item $\regularizerRF{\shmatrixoutS,\losspowerout}\leq \regularizerRF{\shmatrixout,\losspowerout}$;
\item $\card{j\in\{1,\dots,\tlnbrmed\}\,:\,\shmatrixoutS_{kj}\neq 0}\leq \tlnbrdata+1$;
\item $\shmatrixoutS_{a j}=\shmatrixout_{a j}$ for all $a\neq k$.
\end{enumerate}
Hence, we replace the matrix~\shmatrixout, which contains the ``output parameters,'' 
by a matrix whose $k$th row has at most~$\tlnbrdata+1$ nonzero entries---see the illustration in Figure~\ref{fig:twolayer}.

The proof of this step is based on our version of Carath\'eodory's theorem in Lemma~\ref{res:caraold}.
For every  $k\in\{1,\dots,\tlnbrout\}$ and $i\in\{1,\dots,\tlnbrdata\}$,
elementary matrix algebra yields that
\begin{equation*}
  \bigl(\shmatrixout\shfunctionF{\shmatrixin\shmatrixdata}\bigr)_{ki}=\sum_{j=1}^{\tlnbrmed}\shmatrixout_{kj}\bigl(\shfunctionF{\shmatrixin\shmatrixdata}\bigr)_{ji}\,.
\end{equation*}
Denoting the row vectors of a matrix~$\matrixG\in\R^{a\times b}$ by $\matrixG_{1\bullet},\dots,\matrixG_{a\bullet}\in\R^b$,
we then get 
\begin{equation*}
  \bigl(\shmatrixout\shfunctionF{\shmatrixin\shmatrixdata}\bigr)_{k\bullet}=\sum_{j=1}^{\tlnbrmed}\shmatrixout_{kj}\bigl(\shfunctionF{\shmatrixin\shmatrixdata}\bigr)_{j\bullet}\,.
\end{equation*}
The case $\normlossout{\shmatrixout_{k\bullet}}=0$ is straightforward to deal with: 
all elements of $\shmatrixout_{k\bullet}$ are then equal to zero,
and we can just set $\shmatrixoutS\deq \shmatrixout$.

We can thus assume the fact that $\normlossout{\shmatrixout_{k\bullet}}\neq 0$.
We then get from the preceding equality that
\begin{equation*}
    \bigl(\shmatrixout\shfunctionF{\shmatrixin\shmatrixdata}\bigr)_{k\bullet}=\sum_{j=1}^{\tlnbrmed}\underbrace{\frac{\shmatrixout_{kj}}{\normlossout{\shmatrixout_{k\bullet}}}}_{=: \caraweightGj}\underbrace{\normlossout{\shmatrixout_{k\bullet }}\bigl(\shfunctionF{\shmatrixin \shmatrixdata}\bigr)_{j\bullet}}_{=:\caravectorj}\,.
\end{equation*}
Since 
\begin{multline*}
\normlossout{\caraweightG}=  \Biggl(\sum_{j=1}^{\tlnbrmed}\abs{\caraweightGj}^{\losspowerout}\Biggr)^{1/\losspowerout}=\Biggl(\sum_{j=1}^{\tlnbrmed}\absBBB{\frac{\shmatrixout_{kj}}{\normlossout{\shmatrixout_{k\bullet}}}}^{\losspowerout}\Biggr)^{1/\losspowerout}\\
=\frac{1}{\normlossout{\shmatrixout_{k\bullet}}}\Biggl(\sum_{j=1}^{\tlnbrmed}\abs{\shmatrixout_{kj}}^{\losspowerout}\Biggr)^{1/\losspowerout}=\frac{\normlossout{\shmatrixout_{k\bullet}}}{\normlossout{\shmatrixout_{k\bullet}}}=1\,,
\end{multline*}
the previous equality means that the vector $(\shmatrixout\shfunctionF{\shmatrixin\shmatrixdata})_{k\bullet}\in\R^{\tlnbrdata}$ is a $\losspowerout$-convex combination of the $2\tlnbrmed$  vectors $\caravector_1,\dots,\caravector_{\tlnbrmed},-\caravector_1,\dots,-\caravector_{\tlnbrmed}\in\R^{\tlnbrdata}$, that is,
$(\shmatrixout\shfunctionF{\shmatrixin\shmatrixdata})_{k\bullet}\in   \operatorname{conv}_{\losspower}[\caravector_1,\dots,\caravector_{\tlnbrmed},-\caravector_1,\dots,-\caravector_{\tlnbrmed}]$.
Hence, 
by Lemma~\ref{res:caraold},
there is a $\caraweight\in[-1,1]^{\tlnbrmed}$
such that 
$\normlossout{\caraweight} \leq1$, 
$\card{j\in\{1,\dots,\tlnbrmed\}\,:\,\caraweightj \neq 0}\leq \tlnbrdata+1$,
and 
\begin{equation*}
  \bigl(\shmatrixout\shfunctionF{\shmatrixin\shmatrixdata}\bigr)_{k\bullet}=\sum_{j=1}^{\tlnbrmed}\caraweightj\normlossout{\shmatrixout_{k\bullet }}\bigl(\shfunctionF{\shmatrixin\shmatrixdata}\bigr)_{j\bullet}\,.
\end{equation*}
Hence, 
by the definition of the row vectors,
\begin{equation*}
  \bigl(\shmatrixout\shfunctionF{\shmatrixin\shmatrixdata}\bigr)_{ki}=\sum_{j=1}^{\tlnbrmed}\caraweightj\normlossout{\shmatrixout_{k\bullet }}\bigl(\shfunctionF{\shmatrixin\shmatrixdata}\bigr)_{ji}
\end{equation*}
\figuretwolayer{t}
for all $i\in\{1,\dots,\tlnbrdata\}$, 
and, more generally,
we find for all $a\in\{1,\dots,\tlnbrout\}$ and $i\in\{1,\dots,\tlnbrdata\}$ that
\begin{equation*}
  \bigl(\shmatrixout\shfunctionF{\shmatrixin\shmatrixdata}\bigr)_{ai}=\begin{cases}
\sum_{j=1}^{\tlnbrmed}\caraweightj\normlossout{\shmatrixout_{k\bullet }}\bigl(\shfunctionF{\shmatrixin\shmatrixdata}\bigr)_{ji}~~~~~~&\text{for~}a=k\,;\\
\sum_{j=1}^{\tlnbrmed}\shmatrixout_{aj}\bigl(\shfunctionF{\shmatrixin\shmatrixdata}\bigr)_{ji}~~~~~~&\text{otherwise}\,.
\end{cases}
\end{equation*}
This motivates us to define $\shmatrixoutS\in\R^{\tlnbrout\times \tlnbrmed}$ through
\begin{equation*}
  \shmatrixoutS_{aj}\deq
  \begin{cases}
 \caraweightj\normlossout{\shmatrixout_{k\bullet }}~~~~~~&\text{for~}a=k\,;\\
 \shmatrixout_{aj}~~~~~~&\text{otherwise}\,.
  \end{cases}
\end{equation*}

Properties~1,~3, and~4 then follow immediately. 
Property~2 can be derived by using 
1.~the definition of the basic regularizer~\regularizerR\ on Page~\pageref{def:regularizerbasic},
2.~the definition of~$\shmatrixoutS$,
3.~the linearity of finite sums,
4.~the definition of the $\ell_{\losspower}$-norms on Page~\pageref{def:lq},
5.~the above-derived property $\normlossout{\caraweight} \leq 1$,
6.~a consolidation,
and 7.~again the definition of the basic regularizer~\regularizerR\ on Page~\pageref{def:regularizerbasic}:
\begingroup
\allowdisplaybreaks
\begin{align*}
\bigl(\regularizerRF{\shmatrixoutS,\losspowerout}\bigr)^{\losspowerout}
&=\max_{a\in\{1,\dots,\tlnbrout\}}\sum_{j=1}^{\tlnbrmed}\abs{\shmatrixoutS_{aj}}^{\losspowerout}\\
&=\max_{a\in\{1,\dots,\tlnbrout\}}
\begin{cases}
\sum_{j=1}^{\tlnbrmed}\absB{\caraweightj\normlossout{\shmatrixout_{k\bullet }}}^{\losspowerout}~~~~~~~&\text{for~}a=k\\
\sum_{j=1}^{\tlnbrmed}\abs{\shmatrixout_{aj}}^{\losspowerout}~~~~~~~&\text{otherwise}\\
\end{cases}\\
&=\max_{a\in\{1,\dots,\tlnbrout\}}
\begin{cases}
\normlossoutH{\shmatrixout_{k\bullet }}\sum_{j=1}^{\tlnbrmed}\abs{\caraweightj}^{\losspowerout}~~~~~~~&\text{for~}a=k\\
\sum_{j=1}^{\tlnbrmed}\abs{\shmatrixout_{aj}}^{\losspowerout}~~~~~~~&\text{otherwise}\\
\end{cases}\\
&=\max_{a\in\{1,\dots,\tlnbrout\}}
\begin{cases}
 \bigl(\sum_{j=1}^{\tlnbrmed}\abs{\shmatrixout_{kj}}^{\losspowerout}\bigr)\normlossoutH{\caraweight}~~~~~~~&\text{for~}a=k\\
\sum_{j=1}^{\tlnbrmed}\abs{\shmatrixout_{aj}}^{\losspowerout}~~~~~~~&\text{otherwise}\\
\end{cases}\\
&\leq\max_{a\in\{1,\dots,\tlnbrout\}}
\begin{cases}
\sum_{j=1}^{\tlnbrmed}\abs{\shmatrixout_{kj}}^{\losspowerout}~~~~~~~&\text{for~}a=k\\
\sum_{j=1}^{\tlnbrmed}\abs{\shmatrixout_{aj}}^{\losspowerout}~~~~~~~&\text{otherwise}\\
\end{cases}\\
&=\max_{a\in\{1,\dots,\tlnbrout\}}\sum_{j=1}^{\tlnbrmed}\abs{\shmatrixout_{aj}}^{\losspowerout}\\
&=\bigl(\regularizerRF{\shmatrixout, \losspowerout}\bigr)^{\losspowerout}\,,
\end{align*}
\endgroup
as desired.
This concludes the proof of the first step.

\emph{Step~2:}
We now show that there is a matrix $\widetilde\shmatrixout\in\R^{\tlnbrout\times \tlnbrmed}$ such that
\begin{enumerate}
\item $\widetilde\shmatrixout\shfunctionF{\shmatrixin\shmatrixdata}=\shmatrixout\shfunctionF{\shmatrixin\shmatrixdata}$;
\item $\regularizerRF{\widetilde\shmatrixout,\losspowerout}\leq \regularizerRF{\shmatrixout,\losspowerout}$;
\item $\card{j\in\{1,\dots,\tlnbrmed\}\,:\,\widetilde\shmatrixout_{aj}\neq 0}\leq \tlnbrdata+1$ for all $a\in\{1,\dots,\tlnbrout\}$.
\end{enumerate}
Hence, we replace the  matrix~\shmatrixout\ by a matrix whose every row has at most~$\tlnbrdata+1$ nonzero entries---see the illustration in Figure~\ref{fig:twolayer} again.

Since Step~1 changes only the $k$th row of~$\shmatrixout$ (see Property~4 derived in Step~1),
we can apply it to one row after another.

\emph{Step~3:} We finally prove the first part of the lemma---see the illustration in Figure~\ref{fig:twolayer} again.

By Property~3 of the previous step,
the matrix $\widetilde\shmatrixout$ has at most $\tlnbrout(\tlnbrdata+1)$ nonzero columns.
Verify that replacing~$\shmatrixout$ by~$\shmatrixout_{\perm}$ and~$\shmatrixin$ by~$\shmatrixin^{\perm}$
for a suitable permutation~\perm\ leads to an $\widetilde\shmatrixout$ whose entries outside the first $\tlnbrout(\tlnbrdata+1)$~columns are equal to zero---while all other properties remain intact.
We denote this version of~$\widetilde\shmatrixout$ by $\overline{\shmatrixout}$.
We then derive for all $j\in\{1,\dots,\tlnbrmed\}$ and $i\in\{1,\dots,\tlnbrdata\}$ that
\begin{multline*}
  \bigl(\shfunctionF{\shmatrixin^{\perm}\shmatrixdata}\bigr)_{ji}
=\shfunctionFB{(\shmatrixin^{\perm}\shmatrixdata)_{ji}}
=\shfunctionFBBBB{\sum_{b=1}^{\tlnbrin}(\shmatrixin^{\perm})_{jb}\shmatrixdata_{bi}}
=\shfunctionFBBBB{\sum_{b=1}^{\tlnbrin}(\shmatrixdata\tp)_{ib}(\shmatrixin^{\perm})_{jb}}\\
=\shfunctionFB{(\shmatrixdata\tp(\shmatrixin^{\perm})_{j\bullet})_i}=\bigl(\shfunctionF{\shmatrixdata\tp(\shmatrixin^{\perm})_{j\bullet}}\bigr)_i\,,
\end{multline*}
where 
we define (with some abuse of notation)  $\shfunction\,:\,\R^{\tlnbrdata}\to\R^{\tlnbrdata}$ through $(\shfunctionF{\boldsymbol{b}})_{i}\deq \shfunctionF{b_{i}}$ for all $\boldsymbol{b}\in\R^{\tlnbrdata}$.
Combining this result with the results of Step~2 (with $\shmatrixout$ and $\shmatrixin$ replaced by~$\shmatrixout_{\perm}$ and~$\shmatrixin^{\perm}$, respectively)  yields for all $a\in\{1,\dots,\tlnbrout\}$ and $i\in\{1,\dots,\tlnbrdata\}$ that
\begin{multline*}
  \bigl(\shmatrixout_{\perm}\shfunctionF{\shmatrixin^{\perm}\shmatrixdata}\bigr)_{ai}=\sum_{j=1}^{\tlnbrmed}(\shmatrixout_{\perm})_{aj}\bigl(\shfunctionF{\shmatrixin^{\perm}\shmatrixdata}\bigr)_{ji}=\sum_{j=1}^{\tlnbrmed}(\shmatrixout_{\perm})_{aj}\bigl(\shfunctionF{\shmatrixdata\tp(\shmatrixin^{\perm})_{j\bullet}}\bigr)_i\\=\sum_{j=1}^{\tlnbrmed}\overline\shmatrixout_{aj}\bigl(\shfunctionF{\shmatrixdata\tp(\shmatrixin^{\perm})_{j\bullet}}\bigr)_i=\sum_{j=1}^{\min\{\tlnbrout(\tlnbrdata+1),\tlnbrmed\}}\overline\shmatrixout_{aj}\bigl(\shfunctionF{\shmatrixdata\tp(\shmatrixin^{\perm})_{j\bullet}}\bigr)_i\,.
\end{multline*}
We then define $\shmatrixinU\in\R^{\tlnbrmed \times \tlnbrin}$ through
\begin{equation*}
  \overline{\shmatrixin}_{ji}\deq
                     \begin{cases}
(\shmatrixin^{\perm})_{ji}=\shmatrixin_{\permF{j}i}~~~~~&\text{for}~j\leq \tlnbrout(\tlnbrdata+1)\,;\\
                       0~~~~~&\text{otherwise}\,.                       
                     \end{cases}
\end{equation*}
We then use
1.~the above-stated equality,
2.~the definition of~\shmatrixinU,
3.~a similar derivation as above,
4.~the block structure of~\shmatrixoutU,
and 5.~a similar derivation as above to establish for all $a\in\{1,\dots,\tlnbrout\}$ and $i\in\{1,\dots,\tlnbrdata\}$ the fact that
\begin{align*}
  \bigl(\shmatrixout_{\perm}\shfunctionF{\shmatrixin^{\perm}\shmatrixdata}\bigr)_{ai}
&=\sum_{j=1}^{\min\{\tlnbrout(\tlnbrdata+1),\tlnbrmed\}}\shmatrixoutU_{aj}\bigl(\shfunctionF{\shmatrixdata\tp(\shmatrixin^{\perm})_{j\bullet}}\bigr)_i\\
&=\sum_{j=1}^{\min\{\tlnbrout(\tlnbrdata+1),\tlnbrmed\}}\shmatrixoutU_{aj}\bigl(\shfunctionF{\shmatrixdata\tp\shmatrixinU_{j\bullet}}\bigr)_i\\
&=\sum_{j=1}^{\min\{\tlnbrout(\tlnbrdata+1),\tlnbrmed\}}\shmatrixoutU_{aj}\bigl(\shfunctionFB{\shmatrixinU\shmatrixdata}\bigr)_{ji}\\
&=\sum_{j=1}^{\tlnbrmed}\shmatrixoutU_{aj}\bigl(\shfunctionFB{\shmatrixinU\shmatrixdata}\bigr)_{ji}\\
&=\bigl(\shmatrixoutU\shfunctionF{\shmatrixinU\shmatrixdata}\bigr)_{ai}\,.
\end{align*}
The other properties stated in the lemma follow readily.

The second part of the lemma can be derived in the same way.
\end{proof}

\subsection{Proof of Lemma~\ref{res:convex}}

\newcommand{\figureconvex}[1]{\begin{figure}[#1]
  \centering
  \begin{tikzpicture}[scale=1]
    % Axes 
    \draw[-latex] (-0.1, 0) -- (3.5, 0);
    \draw[-latex] (0, -0.1) -- (0, 2);
    % Axes labels
   \node at (3.3, -0.2) {\scriptsize \paradd};
   \node at (0.7, 0.5) {\scriptsize $\pathfunctionF{\paradd}$};
   \node at (1.65, 0.8) {\scriptsize $\pathfunctionPF{\paradd}$};

   % Connections to important points
   \draw[very thin] (2, 0.25) -- (2, 0);
   \draw[very thin] (2.7, 0.415) --(2.7, 0);
   \draw[very thin] (2, 0.25) --(3, 0.25);

    % Axes ticks
    \draw (3, 0.1) -- (3, -0.1);
    \node at (0, -0.3) {\scriptsize $0$};
    \node at (3, -0.3) {\scriptsize $1$};
    \node at (2, -0.2) {\scriptsize $\paraddPP$};
    \node at (2.7, -0.2) {\scriptsize $\paraddP$};

    % Functions
    \draw[scale=1, domain=0:3, smooth, variable=\x,  thick] plot ({\x}, {(\x-2)*(\x-2)/3+0.25});
    \draw[scale=1, domain=0:3, smooth, variable=\x, thick] plot ({\x}, {(\x/1.55-2)*(\x/1.55-2)/3+0.25});

    % Important points
    \filldraw[] (0, 1.58) circle (1pt);
    \filldraw[] (3, 0.58) circle (1pt);
    \filldraw[] (3, 0.25) circle (1pt);
    \filldraw[] (2, 0.25) circle (1pt);
    \filldraw[] (2.7, 0.415) circle (1pt);
  \end{tikzpicture}
  \caption{quantities in the proof of  Lemma~\ref{res:convex}}
  \label{fig:convex}
\end{figure}}

\figureconvex{t}
\begin{proof}[Proof of Lemma~\ref{res:convex}]
The proof is a simple exercise in calculus.
An illustration of the quantities involved in the proof  is given in Figure~\ref{fig:convex}.
The function~\pathfunction\ is convex by assumption;
hence, it is continuous.
Then, 
according to the extreme value theorem,
there is a number
\begin{equation*}
  \paraddPP\in\argmin_{\paradd\in[0,1]}\bigl\{\pathfunctionF{\paradd}\bigr\}\,.
\end{equation*}
Since $\pathfunctionF{\paraddP}<\pathfunctionF{0}$ for a $\paraddP\in(0,1]$,
it holds that $\paraddPP\in(0,1]$.
Now consider $\paraddt\in[0,1]$ and $\paraddo\in[0,\paraddt)$, that is, $\paraddt>\paraddo$.
Basic calculus ensures that
\begin{align*}
  \pathfunctionPF{\paraddt}&=\pathfunctionF{\paraddPP\paraddt}\\
&=\pathfunctionFB{\paraddPP\paraddo+\paraddPP\paraddt-\paraddPP\paraddo}\\
&=\pathfunctionFBBB{\paraddPP\paraddo+\frac{\paraddPP\paraddt-\paraddPP\paraddo}{\paraddPP-\paraddPP\paraddo}(\paraddPP-\paraddPP\paraddo)}\\
&=\pathfunctionFBBB{\paraddPP\paraddo+\frac{\paraddt-\paraddo}{1-\paraddo}(\paraddPP-\paraddPP\paraddo)}\\
&=\pathfunctionFBBB{\Bigl(1- \frac{\paraddt-\paraddo}{1-\paraddo}\Bigr)\paraddPP\paraddo+\frac{\paraddt-\paraddo}{1-\paraddo}\paraddPP}\\
&\leq\Bigl(1- \frac{\paraddt-\paraddo}{1-\paraddo}\Bigr)\pathfunctionF{\paraddPP\paraddo}+\frac{\paraddt-\paraddo}{1-\paraddo}\pathfunctionF{\paraddPP}\\
&\leq\Bigl(1- \frac{\paraddt-\paraddo}{1-\paraddo}\Bigr)\pathfunctionF{\paraddPP\paraddo}+\frac{\paraddt-\paraddo}{1-\paraddo}\pathfunctionF{\paraddPP\paraddo}\\
&=\pathfunctionF{\paraddPP\paraddo}\\
&= \pathfunctionPF{\paraddo}\,.
\end{align*}
Hence, $\pathfunctionP$ is nonincreasing.

Moreover,
\begin{equation*}
  \pathfunctionPF{0}=\pathfunctionF{\paraddPP\cdot 0}=\pathfunctionF{0}>\pathfunctionF{\paraddP} \geq \pathfunctionF{\paraddPP}=\pathfunctionF{\paraddPP\cdot 1}=\pathfunctionPF{1}\,.
\end{equation*}
 Hence,
$\pathfunctionPF{0}>\pathfunctionPF{1}$.
This concludes the proof.
\end{proof}

\end{document}